\newtheorem{theorem}{Theorem}
\newtheorem{definition}{Definition}
\newtheorem{proposition}{Proposition}
\gdef\@copyrightpermission{
  \begin{minipage}{0.2\columnwidth}
   \href{https://creativecommons.org/licenses/by/4.0/}{\includegraphics[width=0.90\textwidth]{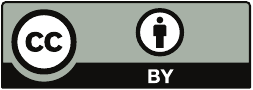}}
  \end{minipage}\hfill
  \begin{minipage}{0.8\columnwidth}
   \href{https://creativecommons.org/licenses/by/4.0/}{This work is licensed under a Creative Commons Attribution International 4.0 License.}
  \end{minipage}
  \vspace{5pt}
}
\title[AAMAS-2025 Formatting Instructions]{Hierarchical Learning-based Graph Partition for \\Large-scale Vehicle Routing Problems}
\author{Yuxin Pan}
\affiliation{
  \institution{The Hong Kong University of Science and Technology}
  \city{Hong Kong}
  \country{China}
  }
\email{yuxin.pan@connect.ust.hk}
\author{Ruohong Liu}
\affiliation{
  \institution{University of Oxford}
  \city{Oxford}
  \country{United Kingdom}}
\email{ruohong.liu@eng.ox.ac.uk}
\author{Yize Chen}
\affiliation{
  \institution{University of Alberta}
  \city{Edmonton}
  \country{Canada}}
\email{yize.chen@ualberta.ca}
\author{Zhiguang Cao}
\affiliation{
  \institution{Singapore Management University}
  \city{Singapore}
  \country{Singapore}}
\email{zhiguangcao@outlook.com}
\author{Fangzhen Lin}
\affiliation{
  \institution{The Hong Kong University of Science and Technology}
  \city{Hong Kong}
  \country{China}}
\email{flin@cs.ust.hk}
\begin{abstract}
Neural solvers based on the divide-and-conquer approach for Vehicle Routing Problems (VRPs) in general, and capacitated VRP (CVRP) in particular, integrates the global partition of an instance with local constructions for each subproblem to enhance generalization. However, during the global partition phase, misclusterings within subgraphs have a tendency to progressively compound throughout the multi-step decoding process of the learning-based partition policy. This suboptimal behavior in the global partition phase, in turn, may lead to a dramatic deterioration in the performance of the overall decomposition-based system, despite using optimal local constructions. To address these challenges, we propose a versatile Hierarchical Learning-based Graph Partition (HLGP) framework, which is tailored to benefit the partition of CVRP instances by synergistically integrating global and local partition policies. Specifically, the global partition policy is tasked with creating the coarse multi-way partition to generate the sequence of simpler two-way partition subtasks. These subtasks mark the initiation of the subsequent K local partition levels. At each local partition level, subtasks exclusive for this level are assigned to the local partition policy which benefits from the insensitive local topological features to incrementally alleviate the compounded errors. This framework is versatile in the sense that it optimizes the involved partition policies towards a unified objective harmoniously compatible with both reinforcement learning (RL) and supervised learning (SL). Additionally, we decompose the synchronized training into individual training of each component to circumvent the instability issue. Furthermore, we point out the importance of viewing the subproblems encountered during the partition process as individual training instances. Extensive experiments conducted on various CVRP benchmarks demonstrate the effectiveness and generalization of the HLGP framework. The source code is available at \href{https://github.com/panyxy/hlgp_cvrp}{https://github.com/panyxy/hlgp\_cvrp}.

\footnotetext{This paper has been accepted as a Full Paper at the 24th International Conference on Autonomous Agents and Multiagent Systems (AAMAS 2025).}
\end{abstract}
\keywords{Vehicle Routing Problem; Combinatorial Optimization Problem; Hierarchical Learning}
\newcommand{\BibTeX}{\rm B\kern-.05em{\sc i\kern-.025em b}\kern-.08em\TeX}
\begin{document}

%%% The following commands remove the headers in your paper. For final 
%%% papers, these will be inserted during the pagination process.

\pagestyle{fancy}
\fancyhead{}

%%% The next command prints the information defined in the preamble.

\maketitle 

%%%%%%%%%%%%%%%%%%%%%%%%%%%%%%%%%%%%%%%%%%%%%%%%%%%%%%%%%%%%%%%%%%%%%%%%

\section{Introduction}
The Vehicle Routing Problem (VRP) is a widely-studied NP-hard problem which has many real-world applications including transportation~\cite{garaix2010vehicle}, logistic~\cite{cattaruzza2017vehicle}, and digital e-commerce~\cite{liu2017capacitated}. Exact methods for solving VRP often use mixed integer linear programming (MILP) techniques and employ MILP solvers to generate optimal solutions with theoretical guarantees~\cite{laporte1983branch}. However, these methods so far are not computationally efficient enough to handle large-scale instances, particularly for the applications with time-sensitive and dynamically changing VRP scenarios. In contrast, heuristic methods such as LKH3~\cite{helsgaun2017extension} and HGS~\cite{vidal2012hybrid} aim to generate high-quality solutions quickly. They commonly improve the quality of the existing solution incrementally by local search techniques. However, in addition to the heavy reliance on the quality of handcrafted local operators, these methods are not robust and often need to start from scratch for the problem instances with slight variations.

More recently, there has been much work on neural network-based solvers for VRP. Experimentally they have been shown capable of inferring near-optimal efficiently solutions for instances which fall within the training data distribution. These learning-based solvers typically use one of the following methods: constructive, iterative, and divide-and-conquer. The constructive method, as a pioneering paradigm, incrementally deduces a complete solution starting from an empty state~\cite{vinyals2015pointer, nazari2018reinforcement, kool2018attention, xin2020step, kwon2020pomo, kim2022sym, liu20242d}. However, challenges arise when dealing with out-of-distribution instances due to the limited expressivity of neural network and the intricate search landscape. To mitigate this performance degradation, the iterative method merges a neural network-based policy with heuristic local operators to progressively refine the current solution~\cite{lu2019learning, chen2019learning, hottung2020neural, ma2021learning, xin2021neurolkh, ma2024learning}. Yet, this approach relies on numerous improvement steps with well-crafted local operators for satisfactory solutions. By comparison, the divide-and-conquer approach embraces either a heuristic-based partition policy~\cite{fisher1981generalized, fu2021generalize, kim2021learning, li2021learning, zong2022rbg, cheng2023select} or a neural partition policy~\cite{pan2023h, hou2023generalize, ye2024glop, zheng2024udc} to globally divide the entire graph into subgraphs and employ a local construction policy to solve subproblems. However, a failure in either component policy may lead to a significant performance drop. Moreover, heuristic-based partition rules often result in local optima, and neural partition policies may be vulnerable to distribution shifts. Hence, there is a pressing need for a more generalizable and meticulous partition policy, which is the focus of this paper.

In the divide-and-conquer paradigm, the local construction policy agent benefits from the local topological features within subproblems insensitive against distribution and scale shifts, contributing to the (near-)optimality of solutions of subproblems~\cite{jiang2023multi, gao2023towards, fang2024invit}. However, during the multi-step decoding process of the learning-based partition policy for Capacitated VRP (CVRP) instances~\cite{hou2023generalize, ye2024glop}, the decoding of clustered nodes in each step relies on the partial partition solution from the preceding step. This implies that errors in the clustering from earlier steps have a tendency to propagate and result in a chain of misclusterings in subsequent steps, called as compounded errors. Consequently, even with an optimal local construction policy, deficiencies in the partition task lead to substantial deviations from the ideal policy in the overall system. Therefore, we argue that the partition task holds a critical position in the overall decomposition-based system for solving CVRP. Furthermore, the success of the local construction policy inspires us to introduce a local partition policy which aims to progressively alleviate compounded errors by harnessing local topological features in the partition task. We thus consider to implement a hierarchical learning (HL) framework specifically designed for the partition task in CVRP, which is capable of seamlessly integrating both global and local partition policies. In prevailing HL frameworks, a high-level policy is adopted to derive a series of simpler sub-tasks which are then delegated to the low-level policy, with the aim of facilitating exploration~\cite{pateria2021hierarchical}. These frameworks predominantly focus on reinforcement learning and undergo joint training of the associated policies~\cite{levy2017learning}. Yet, these HL frameworks have not been extensively explored in addressing compounded errors within the graph partition task of large-scale CVRP. In contrast, our study extends the HL framework to the partition task of CVRP and demonstrates its efficacy in mitigating compounded errors.

In this paper, we present a versatile Hierarchical Learning-based Graph Partition (HLGP) framework specifically tailored for the partition task in CVRP, which synergistically integrates both global and local partitioning policies. To be specific, our method formulates the partition problem of CVRP using a multi-level HL framework. At the global partition level, the global partition policy is responsible for initiating a coarse multi-way partition to create a series of simpler 2-way partition subtasks. These subtasks stand as the starting point for the subsequent \emph{K} local partition levels. At each local partition level, a tailored sequence of subtasks is derived from the partition solution of the preceding level. These subtasks are then fed into the local partition policy. Such a setup allows the local partition policy to mitigate the potential misclustering arising from the previous level by leveraging the insensitive local topological features inherent in the subtask. By enabling the local partition policy to traverse through the subtasks at each local partition level,  the compounded misclusterings can be mitigated progressively across levels as a consequence.

Our proposed HLGP framework is versatile, featuring a unified objective that effortlessly accommodates both reinforcement learning (RL) and supervised learning (SL) for training the partition policies. It is worth noting that unlike prior approaches that utilized SL to directly train the neural solver, our method explores the application of SL for training the partition policy, usually omitting the need for permutation information. Additionally, the joint training of the involved policies is disentangled to mitigate training instability. Moreover, by conducting in-depth analyses in both the RL and SL settings, we shed light on the importance of viewing the subproblems encountered during the partition process as individual training instances. Empirically, the proposed HLGP framework convincingly demonstrates its superiority through extensive experiments on various CVRP benchmarks over previous SOTA methods. In particular, our method can scale up to CVRP10K instances with around $10\%$ performance improvement over current literature.

\section{Related Works}
Learning-based methods for solving combinatorial optimization problems (COPs) typically fall into three main categories: \textbf{constructive methods}, \textbf{iterative methods}, and \textbf{divide-and-conquer methods}. Constructive methods aim to progressively infer complete solutions using the autoregressive mechanism~\cite{vinyals2015pointer, nazari2018reinforcement, kool2018attention, kwon2020pomo, kim2022sym, son2023meta, zhou2023towards, manchanda2022generalization, qiu2022dimes, gao2023towards, jiang2024ensemble, grinsztajn2023winner, bi2022learning}. Impressively, SL-driven constructive policies, such as BQ~\cite{drakulic2024bq}, LEHD~\cite{luo2024neural} and SIL~\cite{luo2024self} can mitigate the high GPU memory demands associated with gradient backpropagation by eliminating the need for delayed rewards in the training of RL algorithms. Iterative methods offer the benefit of consistently improving a given solution until convergence~\cite{lu2019learning, chen2019learning, hottung2020neural, ma2021learning, xin2021neurolkh, ma2024learning} by integrating local improvement operators into RL policies. The divide-and-conquer paradigm can exploit local topological features that remain insensitive to distribution or scale shifts, thus alleviating performance degradation. Some methods harness heuristic rules for the partitioning process~\cite{fu2021generalize, kim2021learning, cheng2023select, li2021learning, zong2022rbg}. In contrast, H-TSP~\cite{pan2023h}, TAM~\cite{hou2023generalize}, GLOP~\cite{ye2024glop}, and UDC~\cite{zheng2024udc} opt to use a learning-based policy to globally divide the entire instance into subproblems, which are then addressed by a pretrained local construction policy. \emph{For a more detailed review of related algorithms used to solve VRPs, please refer to the Appendix-D.}

\section{Preliminaries}
\textbf{CVRP Formulation. }A CVRP instance $I$ is defined as a tuple, represented by $I=(G, D, N_{\mathrm{max}})$. The graph $G$ consists of a depot node $v_{0}$ and $N_{v}$ customer nodes $v_{i}$ ($1 \leq i \leq N_{v}$). $D$ and $N_{\mathrm{max}}$ denote the vehicle capacity and maximum allowable number of times vehicles returning to the depot, respectively. Each node is associated with its coordinates $(a_{i}, b_{i})$ and each customer node is further associated with a demand $d_{i}$. $N_{\mathrm{max}}$ is accordingly defined as $\lceil\sum_{i=1}^{N_{v}} d_{i}/D\rceil + 1$. The distance between any pair of nodes can be measured by the Euclidean distance. In the CVRP, the vehicle needs to visit each customer exactly once, fulfills their demands without exceeding $D$, and returns to the depot to reload goods if necessary. A feasible solution $\mathcal{T} \in \mathbb{S}_{\mathcal{T}}$ can be described as a node permutation where the depot node can occur multiple times, while each customer node appears only once. Furthermore, the feasible solution $\mathcal{T}$ also can be decomposed into $N_{\tau}$ feasible subtours. In each subtour $\tau_{i}$ ($1 \leq i \leq N_{\tau}$), the starting and ending nodes are the depot, and the intermediate nodes are customers. The travel cost $e(\tau_{i})$ associated with $\tau_{i}$ is the sum of Euclidean distances along this subtour. Thus, the objective is to minimize $e(\mathcal{T})=\sum_{i=1}^{N_{\tau}}e(\tau_{i})$. $\mathbb{S}_{\mathcal{T}}$ denotes the space of feasible solutions, as does the notation used for $\mathbb{S}$ in the following sections.

\noindent \textbf{Global Partition and Local Construction (GPLC). } In the context of CVRP, the partition task involves clustering nodes into distinct groups, ensuring that each cluster includes the depot node, each customer node belongs to a single cluster, and the total demand within each cluster does not surpass $D$. Each cluster of nodes forms a subgraph. A feasible partition solution $\mathcal{C} \in \mathbb{S}_{\mathcal{C}}$ comprises $N_{c}$ subgraphs. Each subgraph $c_{i}$ ($1 \leq i \leq N_{c}$) consists of the depot node along with various customer nodes distinct from those in other subgraphs (i.e., $\cup_{i=1}^{N_{c}}c_{i} = G$ and $\cap_{i=1}^{N_{c}}c_{i} = \{ v_{0} \}$). The feasible partition solution $\mathcal{C}$ can also be represented as a node list where the order of customer nodes is ignored between the two consecutively visited depot nodes. Therefore, a feasible solution $\mathcal{T}$ can be equivalently seen as a feasible partition solution $\mathcal{C}$ when disregarding the node permutation information in $\mathcal{T}$. 

Obviously, both the original CVRP and the partition problem within CVRP revolve around feasible solutions $\mathcal{T}$ and $\mathcal{C}$ composed of discrete value variables. This fact prompts prevalent learning-based methods to employ stochastic policies as the neural solver to handle whatever types of problems (permutation or partition) within the context of CVRP. Let $\Delta(\cdot)$ denote the space of the probability measure. In the GPLC paradigm~\cite{hou2023generalize, ye2024glop}, a stochastic partition policy $\pi_{\mathrm{part}}(\mathcal{C}|I) \in \Delta(\mathbb{S}_{\mathcal{C}})$ is used to derive a feasible partition solution $\mathcal{C}$ by dividing the graph $G$. Then, a (near-)optimal local permutation policy $\pi_{\mathrm{perm}}^{\ast}(\mathcal{T}|\mathcal{C}) \in \Delta(\mathbb{S}_{\mathcal{T}})$ can generate the feasible subtour $\tau_{i}$ for each subproblem $(c_{i}, D, 1)$. The objective is to identify an optimal partition policy $\pi_{\mathrm{part}}^{\ast}$ that minimizes $e(\mathcal{T})$. \emph{However, prior GPLC methods lack theoretical foundations of the partition problem. Therefore, we introduce Theorem~\ref{thm:gplc} to establish the rationality of the partition problem for CVRP.} Please see Appendix-C.1 for proofs.
\begin{theorem}
\label{thm:gplc}
    The objective in solving an original CVRP instance $I$ is to identify a (permutation) policy $\pi(\mathcal{T}|I) \in \Delta(\mathbb{S}_{\mathcal{T}})$ so as to minimize the expected cost $\mathbb{E}_{\mathcal{T} \sim \pi}[e(\mathcal{T})]$. If $\pi_{\mathrm{perm}}^{\ast} \in \Delta(\mathbb{S}_{\mathcal{T}})$ is optimal for each subproblem $(c_{i}, D, 1)$, then the original objective can be reframed as identifying an optimal partition policy $\pi_{\mathrm{part}}^{\ast} \in \Delta(\mathbb{S}_{\mathcal{C}})$ to minimize the expected cost, expressed as:
    \begin{equation}
    \label{equ:gplc} \min_{\pi_{\mathrm{part}}} \; \mathbb{E}_{\mathcal{C} \sim \pi_{\mathrm{part}}} [\sum_{i=1}^{N_{c}} \mathbb{E}_{\tau_{i} \sim \pi_{\mathrm{perm}}^{\ast}} (e(\tau_{i})) ],
    \end{equation}
    where $\pi_{\mathrm{perm}}^{\ast}(\mathcal{T}|\mathcal{C}) = \prod_{i=1}^{N_{c}} \pi_{\mathrm{perm}}^{\ast}(\tau_{i}|c_{i})$ implies that $\tau_{i}$ is independently sampled given the corresponding $c_{i}$. As aforementioned, viewing a feasible partition solution $\mathcal{C}$ as a node list implies that the partition policy can incrementally construct the partition solution. This process involves conditioning the current selected node $\mathcal{C}[n]$ on the partial partition solution $\mathcal{C}[0:n-1]$ ($\mathcal{C}[0]=\emptyset$) and the given problem instance $I$, written as:
    \begin{equation}
    \label{equ:part_pi}
        \pi_{\mathrm{part}}(\mathcal{C}|I)=\prod_{n=1}^{N_{\mathrm{sol}}}\pi_{\mathrm{part}}(\mathcal{C}[n]|\mathcal{C}[0:n-1], I),
    \end{equation}
    where $N_{\mathrm{sol}}$ denotes the length of partition solution. Please note that we abuse the notation of $N_{\mathrm{sol}}$ to denote the length of different partition solutions for brevity in the following sections. Since the objective defined in Equation~\ref{equ:gplc} essentially aligns with that associated with RL, the common approach involves training neural policies using RL.
\end{theorem}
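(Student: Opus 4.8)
The plan is to establish the claimed equivalence by exploiting the additive (separable) structure of the CVRP cost over subtours together with the correspondence between feasible permutation solutions and feasible partition solutions described in the preliminaries. First I would reduce the stochastic minimization to a deterministic one: since $\mathbb{E}_{\mathcal{T}\sim\pi}[e(\mathcal{T})]$ is linear in the policy distribution over the finite set $\mathbb{S}_{\mathcal{T}}$, its minimum over $\Delta(\mathbb{S}_{\mathcal{T}})$ is attained at a Dirac measure concentrated on a minimum-cost solution, so $\min_{\pi}\mathbb{E}_{\mathcal{T}\sim\pi}[e(\mathcal{T})] = \min_{\mathcal{T}\in\mathbb{S}_{\mathcal{T}}} e(\mathcal{T})$. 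The identical remark applied to the right-hand side will later let me replace the outer expectation over $\pi_{\mathrm{part}}$ by a deterministic minimization over $\mathbb{S}_{\mathcal{C}}$.

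The core step is to refine the solution space according to the partition each solution induces. Let $\Phi:\mathbb{S}_{\mathcal{T}}\to\mathbb{S}_{\mathcal{C}}$ be the map that forgets the node ordering within each subtour; by the preliminaries this map is well defined and surjective, and its fiber over a fixed $\mathcal{C}=\{c_1,\dots,c_{N_c}\}$ consists exactly of those $\mathcal{T}$ whose subtours are orderings of the clusters $c_i$. Grouping the global minimization by fiber gives $\min_{\mathcal{T}} e(\mathcal{T}) = \min_{\mathcal{C}}\min_{\mathcal{T}:\Phi(\mathcal{T})=\mathcal{C}} e(\mathcal{T})$. Because $e(\mathcal{T})=\sum_{i} e(\tau_i)$ and the feasibility of a cluster (total demand at most $D$) guarantees that each $c_i$ is a feasible single-return subproblem $(c_i,D,1)$, the inner minimization decouples across clusters: $\min_{\mathcal{T}:\Phi(\mathcal{T})=\mathcal{C}} e(\mathcal{T}) = \sum_{i=1}^{N_c}\min_{\tau_i} e(\tau_i)$, where each $\tau_i$ ranges over the feasible orderings of $c_i$.

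Next I would connect the per-cluster minimum to the optimal local permutation policy. By definition $\pi_{\mathrm{perm}}^{\ast}$ is optimal for the subproblem $(c_i,D,1)$, so the same Dirac-measure argument yields $\mathbb{E}_{\tau_i\sim\pi_{\mathrm{perm}}^{\ast}}[e(\tau_i)]=\min_{\tau_i} e(\tau_i)$. Substituting this into the decoupled sum and invoking the factorization $\pi_{\mathrm{perm}}^{\ast}(\mathcal{T}|\mathcal{C})=\prod_{i}\pi_{\mathrm{perm}}^{\ast}(\tau_i|c_i)$, so that the subtour expectations add by linearity, gives $\min_{\mathcal{T}} e(\mathcal{T}) = \min_{\mathcal{C}}\sum_{i=1}^{N_c}\mathbb{E}_{\tau_i\sim\pi_{\mathrm{perm}}^{\ast}}[e(\tau_i)]$. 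Finally, reinstating the stochastic partition policy via the first step converts the right-hand minimization over $\mathbb{S}_{\mathcal{C}}$ back into the minimization over $\pi_{\mathrm{part}}$ in Equation~\ref{equ:gplc}, completing the equivalence. The autoregressive form in Equation~\ref{equ:part_pi} is then immediate from the probability chain rule once $\mathcal{C}$ is viewed as the node list, and needs no separate argument.

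I expect the main obstacle to be the decoupling step, namely justifying that committing to a partition first and then ordering each cluster optimally incurs no loss relative to jointly optimizing the full permutation. This hinges precisely on the separability of $e(\mathcal{T})$ into independent subtour costs and on the surjectivity of $\Phi$, so the delicate part is verifying that every feasible partition is realizable by some feasible solution and that capacity feasibility is preserved in both directions of the correspondence, rather than any analytic difficulty.
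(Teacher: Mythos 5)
Your proposal is correct, and it reaches the same equivalence as the paper, but by a genuinely different route. The paper works entirely at the level of stochastic policies: it identifies $\mathcal{T}$ with the pair $(\mathcal{C},\mathcal{T})$, replaces the joint policy $\pi((\mathcal{C},\mathcal{T})|I)$ by a factored form $\pi_{\mathrm{part}}(\mathcal{C}|I)\prod_{i}\pi_{\mathrm{perm}}(\tau_{i}|c_{i})$, marginalizes the expected sum of subtour costs into per-subgraph expectations, and then pushes $\min_{\pi_{\mathrm{perm}}}$ inside the outer expectation to identify the inner minimum with $\pi_{\mathrm{perm}}^{\ast}$. You instead collapse both minimizations to deterministic combinatorial problems via the extreme-point (Dirac-measure) argument, decompose $\min_{\mathcal{T}}e(\mathcal{T})$ over the fibers of the forgetting map $\Phi$, decouple the inner minimum using separability of the cost, and then reinflate to the stochastic statement. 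Your route buys something real: the paper's factorization step is not literally an identity, since an arbitrary distribution over solutions need not factor into a partition policy times independent per-cluster permutation policies; it is without loss of generality for the minimization precisely because minima of linear functionals over the simplex are attained at Dirac measures --- which is exactly the argument you make explicit, so your proof justifies a step the paper glosses over. What the paper's route buys in exchange is that it never leaves the expectation formalism, so the final objective and the interchange $\min_{\pi_{\mathrm{perm}}}\mathbb{E}[\cdot]$ appear in precisely the form used downstream in the RL training setup. Your concern about surjectivity of $\Phi$ is easily discharged: any feasible partition has every cluster demand at most $D$, so any ordering of each cluster yields a feasible subtour, and conversely each subtour of a feasible solution induces a feasible cluster; neither direction poses a difficulty.
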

\begin{figure}[t]
    \centering
    \includegraphics[width=1\columnwidth]{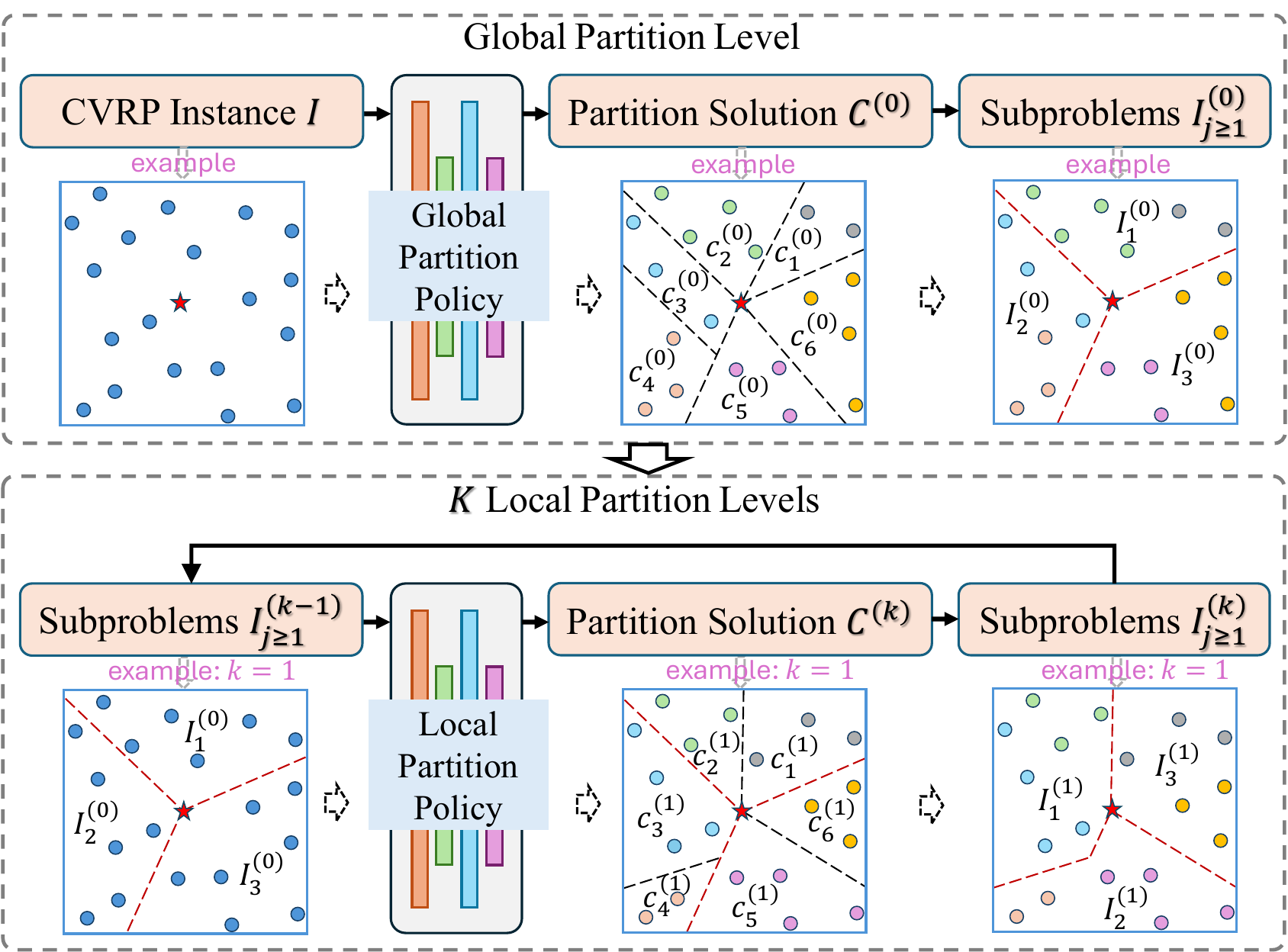}
    \caption{The proposed HLGP framework. $I_{j\geq1}^{k}$ represents a sequence of subproblems. Following the HLGP framework, the sequence of subproblems $I_{j\geq1}^{K}$ are fed to a permutation policy to derive the respective subtours.
    }
    \label{fig:hlgp}
    \vspace{-10pt}
\end{figure}
\section{Hierarchical Learning-based Graph Partition}
Our proposed HLGP framework is built upon the GPLC paradigm. Likewise, we assume the optimal local permutation policy $\pi_{\mathrm{perm}}^{\ast}$ is obtainable by leveraging LKH3~\cite{helsgaun2017extension} or the neural solver used in GLOP~\cite{ye2024glop}. It is evident from the partition policy expression in Equation~\ref{equ:part_pi} that decoding the nodes at each step hinges on the partial partition solution obtained in the preceding steps. Consequently, inaccuracies in clustering from earlier steps tend to propagate, resulting in a chain of misclustering in subsequent steps. These misclusterings in the partition policy exacerbate notably when confronted with substantial distribution or scale shifts. This empirical challenge in CVRP thus motivates us to develop a HL framework for solving the partition problem in CVRP. We anticipate that this HL framework can progressively mitigate compounded errors by incorporating both global and local partition policies.

\subsection{HL Formulation of Partition Problem}
In this section, we begin by introducing the \emph{feasible cost function} $f(\mathcal{C})$ for a feasible partition solution $\mathcal{C}$ as defined in Definition~\ref{def:part_cost}. Following this, various forms of $f(\mathcal{C})$ will be presented in the subsequent sections to align with both RL and SL objectives for training the partition policies. 
\begin{definition}
\label{def:part_cost}
Let $\pi_{\mathrm{part}}^{\ast}$ denote the optimal partition policy obtained by optimizing the objective in Equation~\ref{equ:gplc}. Given a cost function $f(\mathcal{C}): \mathbb{S}_{\mathcal{C}} \rightarrow \mathbb{R}$, if $\pi_{\mathrm{part}}^{\ast}$ can be derived by optimizing the objective $\min_{\pi_{\mathrm{part}}}\mathbb{E}_{\mathcal{C}\sim\pi_{\mathrm{part}}}[f(\mathcal{C})]$, then $f(\mathcal{C})$ is a feasible cost function.
\end{definition}

By leveraging this well-defined feasible cost function $f(\mathcal{C})$, the goal of the partition problem is to minimize $\mathbb{E}_{\mathcal{C}} [f(\mathcal{C})]$. Then, we reformulate the partition problem using a multi-level HL framework. In this framework, the global partition policy $\pi_{\mathrm{Gpart}}$ and the local partition policy $\pi_{\mathrm{Lpart}}$ work together in synergy to execute the partition task, as depicted in Figure~\ref{fig:hlgp}. At the global partition level, $\pi_{\mathrm{Gpart}}$ creates an initial coarse feasible partition $\mathcal{C}^{(0)}=\{ c^{(0)}_{1}, \ldots, c^{(0)}_{N_{c}} \}$, where $c^{(0)}_{i}$ denotes the subgraph at this level. In this partition solution $\mathcal{C}^{(0)}$, each pair of subgraphs $(c^{(0)}_{i}, c^{(0)}_{i\%N_{c}+1})$ (where $1 \leq i \leq N_{c}$) is stipulated as neighboring subgraphs as defined by a specific heuristic rule. For instance, a simple heuristic involves rearranging subgraphs in $\mathcal{C}^{(0)}$ based on the polar angles of their centroids in a Polar coordinate system centered at the depot node. This coarse multi-way partition $\mathcal{C}^{(0)}$ serves as the entry point of the subsequent $K$ local partition levels. At each local partition level $k \in \{1, ..., K\}$, the subproblems are sequentially formed by reuniting pairs of neighboring subgraphs from $\mathcal{C}^{(k-1)}$. Each subproblem $I^{(k-1)}_{j} (1 \leq j \leq \lfloor\frac{N_{c}}{2}\rfloor)$ is defined as:
\begin{equation}
\label{equ:loc_subp}
\begin{split}
    &I^{(k-1)}_{j} = (G_{j}^{(k-1)}, D, 2); \\
    &G_{j}^{(k-1)} = c_{(m+k-1) \% N_{c} + 1}^{(k-1)} \cup c_{(m+k) \% N_{c} + 1}^{(k-1)},
\end{split}
\end{equation}
where $m=2(j-1)$. There are $\lfloor\frac{N_{c}}{2}\rfloor$ subproblems in each local partition level. For each subproblem, the vehicle is only allowed to return twice to the depot by subproblem definition. Please note that each pair of consecutive subproblems $I^{(k-1)}_{j}$ and $I^{(k-1)}_{j+1}$ do not overlap in terms of the subgraphs they contain. Additionally, this technique for creating subproblems can be described as initially left-shifting the subgraphs in $\mathcal{C}^{(k-1)}$ by $k-1$ places and then merging the neighboring subgraphs without overlaps. At each local partition level $k \geq 1$, the subproblem sequence is directed to the local partition policy $\pi_{\mathrm{Lpart}}$. This allows the local partition policy $\pi_{\mathrm{Lpart}}$ to address potential misclusterings from the preceding level by utilizing the robust local topological features. As a result, the local partition policy can traverse through subproblems at each local partition level, gradually reducing accumulated misclusterings across levels. Moreover, upon completion of solving the subproblem $I^{(k-1)}_{j}$, the pair of subgraphs $(c_{(m+k-1)\%N_{c}+1}^{(k-1)}, c_{(m+k)\%N_{c}+1}^{(k-1)})$ is transitioned to the corresponding subgraph pair $(c_{(m+k-1)\%N_{c}+1}^{(k)}, c_{(m+k)\%N_{c}+1}^{(k)})$. Consequently, the resolution of the subproblem sequence results in an update from $\mathcal{C}^{(k-1)}$ to $\mathcal{C}^{(k)}$.

Within the overall HLGP framework, the global partition policy $\pi_{\mathrm{Gpart}}$ is formulated identical to the partition policy in the GPLC method, written as:
\begin{equation}
\label{equ:glb_part_pi}
\pi_{\mathrm{Gpart}}(\mathcal{C}^{(0)}|I)=\prod_{n=1}^{N_{\mathrm{sol}}}\pi_{\mathrm{Gpart}}(\mathcal{C}^{(0)}[n]|\mathcal{C}^{(0)}[0:n-1], I),
\end{equation}
where $\mathcal{C}^{(0)}[n]$ and $\mathcal{C}^{(0)}[0:n-1]$ denote the $n$-th selected node and the partial solution in $\mathcal{C}^{0}$, respectively. In contrast, the local partition policy addresses the series of subproblems produced from the previous partition solution $\mathcal{C}^{(k-1)}$ to construct the partition solution $\mathcal{C}^{(k)}$. Let $\mathcal{C}_{j}^{(k-1)}$ denote the partition solution for the subproblem $I_{j}^{(k-1)}$. Again, the partition solution $\mathcal{C}_{j}^{(k-1)}$ can be either represented as a node list where $\mathcal{C}_{j}^{(k-1)}[n]$ and $\mathcal{C}_{j}^{(k-1)}[0:n-1]$ indicate the $n$-th node and partial solution within it respectively, or decomposed into two subgraphs $c_{(m+k-1) \% N_{c} + 1}^{(k)}$, $ c_{(m+k) \% N_{c} + 1}^{(k)}$ both of which also belong to $\mathcal{C}^{(k)}$. Thus, it can be expressed as:
\begin{equation}
\label{equ:loc_part_pi}
\begin{split}
    &\pi_{\mathrm{Lpart}}(\mathcal{C}^{(k)} | \mathcal{C}^{(k-1)}, k) 
    = \prod_{j=1}^{\lfloor\frac{N_{c}}{2}\rfloor} \pi_{\mathrm{Lpart}}(\mathcal{C}_{j}^{(k-1)} | I_{j}^{(k-1)}) \\
    = &\prod_{j=1}^{\lfloor\frac{N_{c}}{2}\rfloor} \prod_{n=1}^{N_{\mathrm{sol}}} \pi_{\mathrm{Lpart}} (\mathcal{C}_{j}^{(k-1)}[n]|\mathcal{C}_{j}^{(k-1)}[0:n-1] , I_{j}^{(k-1)}).
\end{split}
\end{equation}
Please note that in the LHS of Equation~\ref{equ:loc_part_pi}, the parameter $k$ representing the level is included as an input to the local partition policy. This inclusion is necessary as the parameter $k$ governs the construction of different subproblem sequences for each level. As a result, the objective of HLGP framework is to minimize the expected cost by optimizing both $\pi_{\mathrm{Gpart}}$ and $\pi_{\mathrm{Lpart}}$, written as: 
\begin{equation}
\label{equ:hl_obj}
 \min_{\pi_{\mathrm{Gpart}}, \pi_{\mathrm{Lpart}}} \mathbb{E}_{\mathcal{C}^{(0)}} \mathbb{E}_{\mathcal{C}^{(1)}} \cdots \mathbb{E}_{\mathcal{C}^{(K)}} [f(\mathcal{C}^{(K)})],
\end{equation}
where $\mathcal{C}^{(0)}$ and $\mathcal{C}^{(k)}$ ($k\geq1$) are sampled from $\pi_{\mathrm{Gpart}}(\mathcal{C}^{(0)}|I)$ and $\pi_{\mathrm{Lpart}}(\mathcal{C}^{(k)}|\mathcal{C}^{(k-1)},k)$, respectively.

\subsection{RL-driven HLGP}

In the HLGP framework, the imperative task at hand involves the joint optimization for the global and local partition policies, as illustrated in Equation~\ref{equ:hl_obj}. To address this intricate optimization challenge through RL algorithms, a rigorous formulation utilizing a multi-level Markov Decision Process (MDP) is required. However, Equation~\ref{equ:hl_obj} essentially revolves around evaluating $\mathcal{C}^{(K)}$ at the $K$-th level. The absence of direct evaluations for $\mathcal{C}^{(k)}, k < K$, primarily contributes to the instability concern during the joint training via RL. We thus equivalently convert it to one involving direct evaluations at each level, as outlined in Theorem~\ref{thm:rl_obj}.

\begin{theorem}
\label{thm:rl_obj}
Let $g(c_{i})$ denote $\mathbb{E}_{\tau_{i} \sim \pi_{\mathrm{perm}}^{\ast}(\cdot|c_{i})}(e(\tau_{i}))$. It is clear that $f(\mathcal{C}) = \sum_{i=1}^{N_{c}} g(c_{i})$ acts as a feasible cost function. Then, the optimization problem defined in Equation~\ref{equ:hl_obj} can be transformed equivalently as follows:
\begin{equation}
\label{equ:trans_obj}
\begin{split}
     \min_{\pi_{\mathrm{Gpart}}, \pi_{\mathrm{Lpart}}} &\mathbb{E}_{\mathcal{C}^{(0)}} [f(\mathcal{C}^{(0)})] +  \mathbb{E}_{\mathcal{C}^{(0)}} \mathbb{E}_{\mathcal{C}^{(1)}}[f(\mathcal{C}^{(1)}) - f(\mathcal{C}^{(0)})]+ \\
    &  \cdots + \mathbb{E}_{\mathcal{C}^{(0)}} \mathbb{E}_{\mathcal{C}^{(1)}} \cdots \mathbb{E}_{\mathcal{C}^{(K)}} [f(\mathcal{C}^{(K)}) - f(\mathcal{C}^{(K-1)})].
\end{split}
\end{equation}
The evaluation for $\mathcal{C}^{(k)}, k \geq 1$, can further be derived as:
\begin{align}
\label{equ:eval_C}
    &f(\mathcal{C}^{(k)}) - f(\mathcal{C}^{(k-1)}) = \sum_{j=1}^{\lfloor\frac{N_{c}}{2}\rfloor}
    [h(\mathcal{C}^{(k)}, k, m) - h(\mathcal{C}^{(k-1)}, k, m)]; \nonumber \\
    &h(\mathcal{C}^{(k)}, k, m) = g(c_{(m+k-1)\%N_{c}+1}^{(k)}) + g(c_{(m+k)\%N_{c}+1}^{(k)}),
\end{align}
where $m=2(j-1)$.
\end{theorem}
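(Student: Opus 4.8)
The plan is to establish the two displayed identities separately: first the telescoping reformulation of the objective in Equation~\ref{equ:trans_obj}, and then the per-level decomposition in Equation~\ref{equ:eval_C}. As a preliminary step, I would confirm that $f(\mathcal{C}) = \sum_{i=1}^{N_{c}} g(c_{i})$ is indeed a feasible cost function in the sense of Definition~\ref{def:part_cost}: substituting $g(c_{i}) = \mathbb{E}_{\tau_{i} \sim \pi_{\mathrm{perm}}^{\ast}}(e(\tau_{i}))$ into $\mathbb{E}_{\mathcal{C}}[f(\mathcal{C})]$ recovers exactly the objective of Equation~\ref{equ:gplc}, whose minimizer is $\pi_{\mathrm{part}}^{\ast}$ by Theorem~\ref{thm:gplc}; hence the claim holds by definition.

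For Equation~\ref{equ:trans_obj}, the approach is a pure telescoping argument. I would begin from the algebraic identity $f(\mathcal{C}^{(K)}) = f(\mathcal{C}^{(0)}) + \sum_{k=1}^{K}[f(\mathcal{C}^{(k)}) - f(\mathcal{C}^{(k-1)})]$, in which the intermediate terms cancel exactly. Applying the full nested expectation $\mathbb{E}_{\mathcal{C}^{(0)}}\cdots\mathbb{E}_{\mathcal{C}^{(K)}}$ to both sides and invoking linearity of expectation splits the objective into $K+1$ summands. The crucial simplification is that the $k$-th summand's integrand, $f(\mathcal{C}^{(k)}) - f(\mathcal{C}^{(k-1)})$, depends only on $\mathcal{C}^{(0)}, \ldots, \mathcal{C}^{(k)}$; since each $\mathcal{C}^{(k')}$ with $k' > k$ is drawn from the conditional measure $\pi_{\mathrm{Lpart}}(\cdot\,|\,\mathcal{C}^{(k'-1)}, k')$, the trailing expectations over $\mathcal{C}^{(k+1)}, \ldots, \mathcal{C}^{(K)}$ integrate a quantity constant in those variables and collapse to unity. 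This truncates the $k$-th summand to $\mathbb{E}_{\mathcal{C}^{(0)}}\cdots\mathbb{E}_{\mathcal{C}^{(k)}}[f(\mathcal{C}^{(k)}) - f(\mathcal{C}^{(k-1)})]$, matching Equation~\ref{equ:trans_obj} term by term.

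For Equation~\ref{equ:eval_C}, the approach is to exploit the additive form of $f$ together with the non-overlapping subproblem construction from Equation~\ref{equ:loc_subp}. Writing $f(\mathcal{C}^{(k)}) - f(\mathcal{C}^{(k-1)}) = \sum_{i=1}^{N_{c}}[g(c_{i}^{(k)}) - g(c_{i}^{(k-1)})]$, I would argue that the only indices contributing a nonzero difference are the subgraphs merged into some subproblem at level $k$: by the level-update rule, any subgraph untouched at level $k$ satisfies $c_{i}^{(k)} = c_{i}^{(k-1)}$ and cancels. The touched indices across the $\lfloor\frac{N_{c}}{2}\rfloor$ subproblems, namely $(m+k-1)\%N_{c}+1$ and $(m+k)\%N_{c}+1$ with $m=2(j-1)$, form consecutive pairs that are pairwise disjoint by the stated non-overlap property, so each surviving term is counted exactly once. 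Regrouping these terms by subproblem index $j$ and invoking the definition of $h$ then yields the claimed sum $\sum_{j=1}^{\lfloor\frac{N_{c}}{2}\rfloor}[h(\mathcal{C}^{(k)}, k, m) - h(\mathcal{C}^{(k-1)}, k, m)]$.

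I expect the main obstacle to be the rigorous justification of collapsing the trailing expectations in the telescoping step: this hinges on the Markov structure of the level-wise sampling and on confirming that $f(\mathcal{C}^{(k)}) - f(\mathcal{C}^{(k-1)})$ is genuinely measurable with respect to $\mathcal{C}^{(0)}, \ldots, \mathcal{C}^{(k)}$ alone. A secondary but subtler point is the bookkeeping when $N_{c}$ is odd, where one subgraph is excluded from the subproblem sequence at each level; I would verify that this leftover subgraph passes through unchanged, so that it contributes equally to $f(\mathcal{C}^{(k)})$ and $f(\mathcal{C}^{(k-1)})$ and drops out of the difference, leaving Equation~\ref{equ:eval_C} intact.
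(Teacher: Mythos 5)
Your proposal is correct and follows essentially the same route as the paper's proof: a telescoping decomposition of $f(\mathcal{C}^{(K)})$ combined with collapsing the trailing conditional expectations (which integrate to one over variables the integrand does not depend on), followed by regrouping the additive cost $\sum_{i} g(c_{i})$ into the disjoint subproblem pairs to obtain Equation~\ref{equ:eval_C}. In fact, you are somewhat more careful than the paper, which silently assumes the pair-regrouping identity; your observations that untouched subgraphs cancel in the difference and that the leftover subgraph when $N_{c}$ is odd passes through unchanged are exactly the details needed to make that step rigorous.
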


Please see Appendix-C.2 for proofs. Theorem~\ref{thm:rl_obj} breaks down the objective described in Equation~\ref{equ:hl_obj} into ${K+1}$ components, with each component associated with the direct evaluation of the respective partition solution. Notably, except for the evaluation of $\mathcal{C}^{(0)}$ which solely considers its own cost $f(\mathcal{C}^{(0)})$, the evaluation of $\mathcal{C}^{(k)}$, $k \geq 1$ hinges on the difference between its own cost $f(\mathcal{C}^{(k)})$ and the cost $f(\mathcal{C}^{(k-1)})$ from the preceding level. At each local partition level $k \geq 1$, the local partition policy is responsible for resolving each subproblem $I_{j}^{k-1}$, leading to the modification of each pair of subgraphs $(c_{(m+k-1)\%N_{c}+1}^{(k-1)}, c_{(m+k)\%N_{c}+1}^{(k-1)})$ to the respective subgraph pair $(c_{(m+k-1)\%N_{c}+1}^{(k)}, c_{(m+k)\%N_{c}+1}^{(k)})$. We are thus allowed to proceed with the derivation as indicated in Equation~\ref{equ:eval_C}. Given the optimization problem stated above, we present the formulation utilizing a multi-level MDP in Proposition~\ref{prop:mlmdp}.

\begin{proposition}
\label{prop:mlmdp}
In the multi-level MDP framework, at the global partition level, for $t\geq1$, the state $x_{t}^{(0)}\in\mathbb{X}^{(0)}$ comprises problem instance $I$ and the partial partition solution $\mathcal{C}^{(0)}[0:t-1]$ ($\mathcal{C}^{(0)}[0] = \emptyset$). The initial distribution $\mu^{(0)}$ aligns with the problem instance distribution $p_{I}$. The action $u_{t}^{(0)}\in\mathbb{U}^{(0)}$ involves selecting a node denoted as $\mathcal{C}^{(0)}[t]$, from unvisited customer nodes and the depot node. Let $i_{t}$ index subgraphs such that at timestep $t$, the agent is constructing $i_{t}$-th subgraph $c_{i_{t}}^{(0)}$. If the subgraph $c_{i_{t}}^{(0)}$ is created, then the reward $r_{t}^{(0)}$ is set as $-g(c_{i_{t}}^{(0)})$; otherwise, it remains at $0$. The global partition policy, parameterized by $\theta_{G}$, is thus specified as $\pi_{\theta_{G}}(u_{t}^{(0)}|x_{t}^{(0)})$. 

At each local partition level $k\geq1$, the local partition policy is tasked with solving the sequence of subproblems obtained from $\mathcal{C}^{(k-1)}$. In this context, we use $j_{t}$ as an index for subproblems, indicating that the $j_{t}$-th subproblem denoted as $I_{j_{t}}^{(k-1)}$, is currently being addressed but remains incomplete at timestep $t$. The state $x_{t}^{(k)}\in\mathbb{X}^{(k)}$ consists of the subproblem sequence and the partial solution of $I_{j_{t}}^{(k-1)}$. The initial state distribution $\mu^{(k)}$ corresponds to the distribution of the subproblem sequence. The action $u_{t}^{(k)}\in\mathbb{U}^{(k)}$ involves selecting a node for solving $I_{j_{t}}^{(k-1)}$. When $I_{j_{t}}^{(k-1)}$ is successfully solved, the index $j_{t}$ will proceed to the next subproblem, and the reward $r_{t}^{(k)}$ is set as $-(h(\mathcal{C}^{(k)}, k, m) - h(\mathcal{C}^{(k-1)}, k, m))$ (where $m=2(j_{t}-1)$). Otherwise, the reward remains at $0$. Thus, the local partition policy parameterized by $\theta_{L}$, is defined as $\pi_{\theta_{L}}(u_{t}^{(k)}|x_{t}^{(k)})$. The objective is to maximize the sum of expected returns across levels, as defined below:
\begin{equation}
\label{equ:rl_obj}
    J(\theta_{G}, \theta_{L}) = \mathbb{E}_{\omega^{(0)}} [\sum_{t=1}^{T^{(0)}}r_{t}^{(0)}] + \cdots+
\mathbb{E}_{\omega^{(0)}}\cdots\mathbb{E}_{\omega^{(K)}} [\sum_{t=1}^{T^{(K)}}r_{t}^{(K)}],
\end{equation}
where $T^{(k)}$ and $\omega^{(k)}$ denote the horizon and the trajectory at level $k$.
\end{proposition}

\begin{figure*}[t]
\centering
\subfigure[RL-driven HLGP]{\label{fig:rl_hlgp}\includegraphics[width=0.4\textwidth]{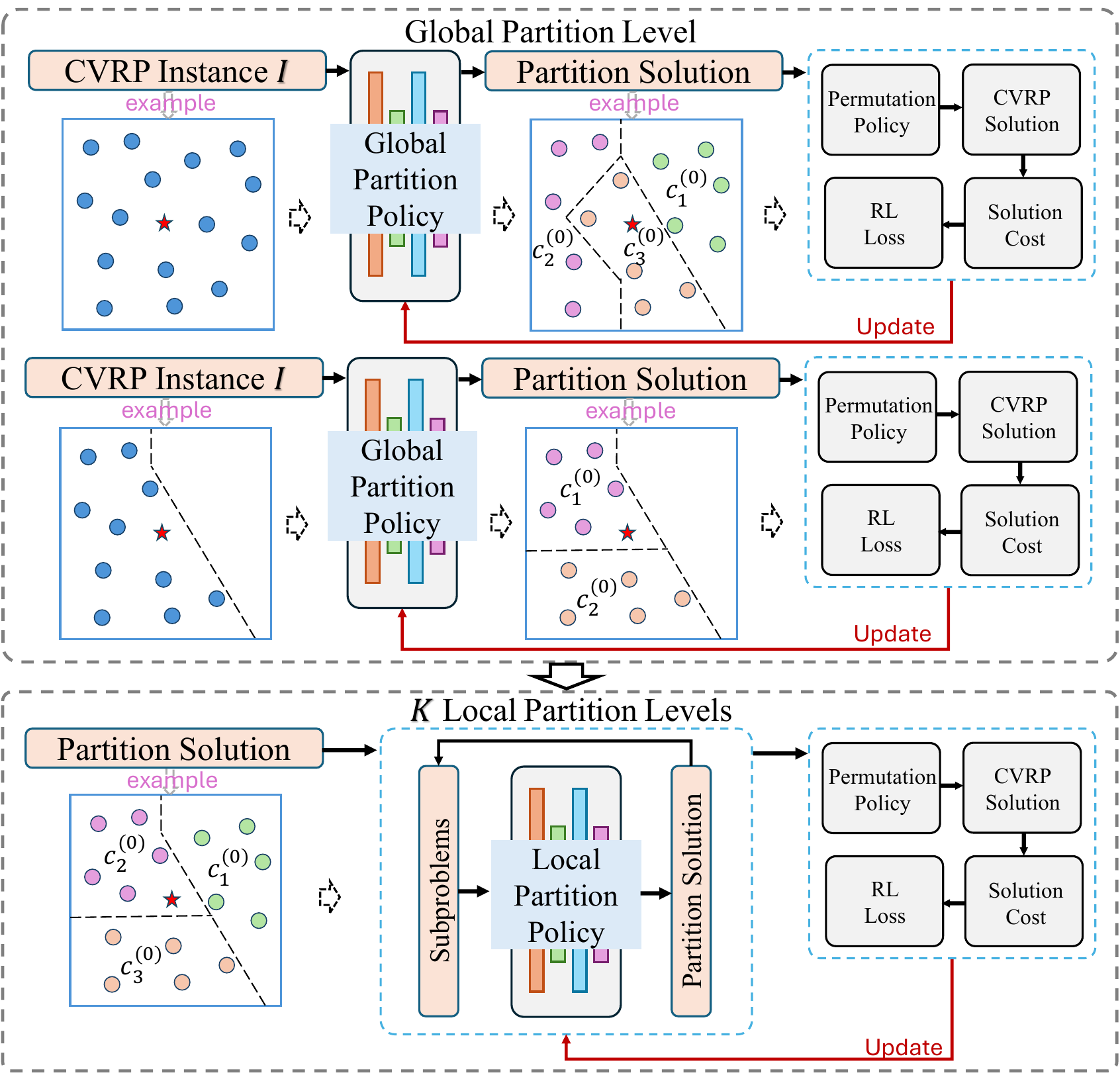}}
\hspace{0.1\textwidth}
\subfigure[SL-driven HLGP]{\label{fig:sl_hlgp}\includegraphics[width=0.4\textwidth]{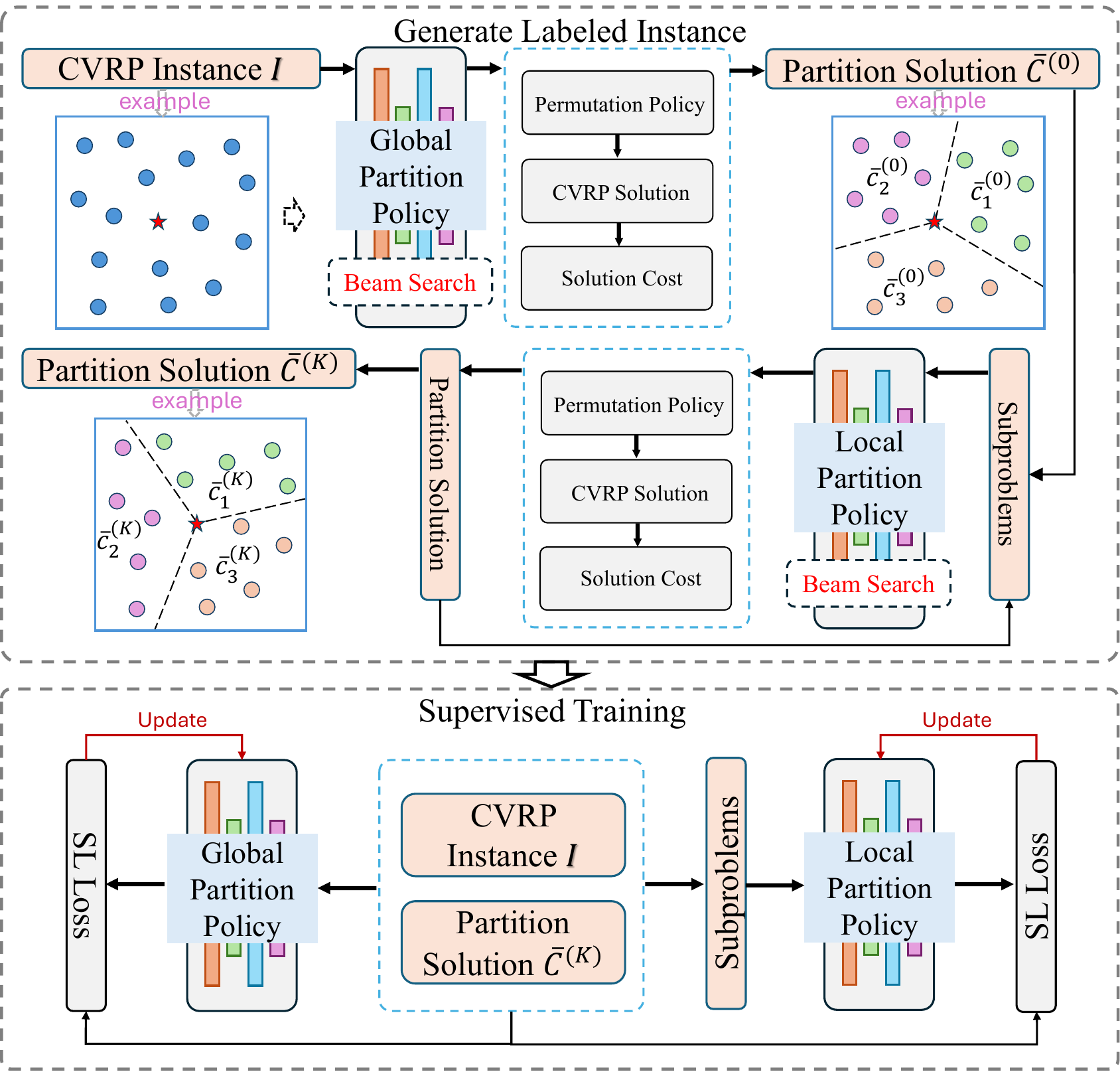}}
\caption{RL-driven HLGP replaces the initially generated partial partition solution with the complete partition solution of subproblems within $\mathcal{C}^{(0)}$ at level 0. SL-driven HLGP requires labeled instances for training $\pi_{\theta_{G}}$ and $\pi_{\theta_{L}}$.
\vspace{-10pt}
}
\label{fig:training_flow}
\end{figure*}

Notably, although Equation~\ref{equ:rl_obj} isolates the evaluation exclusively for $\omega^{(k)}$, the evaluation impacted by the trajectories $\omega^{(k+1)}, ..., \omega^{(K)}$ still remains. This implies that the underlying MDP at level $k$ remains nonstationary. 
%\textcolor{red}{Comment a bit if this approximation is accurate? What affects such approximation performance?}
We thus take the following optimization problem as an approximation:
\begin{equation}
\begin{split}
%\begin{align}
\label{equ:approx_rl_obj}
& \hat{J}(\theta_{G}, \theta_{L}) = L(\theta_{G}, \lambda_{G}, 0) + \sum_{k=1}^{K}L(\theta_{L}, \lambda_{L}, k); \\
& L(\theta, \lambda, k) = \mathbb{E}_{\omega^{(k)} \sim \hat{\mu}^{(k)}, \pi_{\theta}}[\sum_{t=1}^{T^{(k)}}r_{t}^{(k)}] + \lambda \mathcal{H}(\pi_{\theta}), %\nonumber
%\end{align}
\end{split}
\end{equation}
where $\hat{\mu}^{(k)}$ is a surrogate initial state distribution at level $k$, $\mathcal{H}(\pi_{\theta})$ is the entropy term, and $\lambda$ denotes the hyperparameter. The entropy term is typically defined to minimize the KL divergence between the policy and a uniform distribution. In Equation~\ref{equ:rl_obj}, $\omega^{(k)}, k\geq1$ is drawn from the initial distribution $\mu^{(k)}$ and the local partition policy $\pi_{\theta_{L}}$. However, $\mu^{(k)}$ heavily relies on preceding partition solutions derived from both the global and local partition policies. Therefore, in Equation~\ref{equ:approx_rl_obj}, the surrogate initial distribution $\hat{\mu}^{(k)}$ is introduced to eliminate this dependency. Please note that $\hat{\mu}^{(0)} = \mu^{(0)}$. As a result, the optimization for $\pi_{\theta_{G}}$ and $\pi_{\theta_{L}}$ is decoupled. 

In the context of RL-driven HLGP, we incorporate the surrogate objective defined in Equation~\ref{equ:approx_rl_obj} into the REINFORCE algorithm~\cite{williams1992simple} to update $\pi_{\theta_{G}}$ and $\pi_{\theta_{L}}$. In each iteration $n \geq 0$ of REINFORCE, the existing global partition policy denoted as $\pi_{\theta_{G}^{n}}$ is employed to sample $\omega^{(0)}$ for the update of $\pi_{\theta_{G}^{n}}$. At each local partition level $k \geq 1$, the current local partition policy denoted as $\pi_{\theta_{L}^{n}}$ is additionally leveraged to sample the partition solution $\mathcal{C}^{k-1}$, crucial for $\hat{\mu}^{(k)}$. Following this, $\omega^{(k)}$ is sampled to update $\pi_{\theta_{L}^{n}}$. Please refer to Appendix-B for the pseudocode.

Furthermore, in the standard theoretical analysis of REINFORCE algorithm conducted in~\cite{zhang2021sample}, the upper bound of regret includes the term represented by $||\frac{d}{\mu}||_{\infty}$, where $d$ and $\mu$ stand for the stationary state distribution and the initial state distribution. However, the existing method using REINFORCE algorithm for whatever types of problems (permutation or partition) in the context of CVRP ignores the potential risks highlighted in the regret bound. We exemplify the partition problem as a case study to elucidate this issue. The support set of $\mu$ consists solely of the problem instances $I$. Let $N_{v}(t)$ denote the number of customer nodes selected before timestep $t$. In contrast, during the partition process, at each step $t > 1$, the partition policy is indeed responsible to solve the subproblem denoted as $I_{N_{v}(t)}$ in which the graph comprises depot and unvisited customers. Let $c_{i_{t}}$ denote the subgraph under construction. The capacity in $I_{N_{v}(t)}$ is accordingly subtracted from the total demand of the visited node in $c_{i_{t}}$, reverting back to $D$ once $c_{i_{t}}$ is fully formed. The support set of $d$ thus includes the subproblems $I_{N_{v}(t)}$.  This significant discrepancy in support sets inevitably results in an infinite $||\frac{d}{\mu}||_{\infty}$ in the regret bound. This observation inspires us to incorporate the subproblems $I_{N_{v}(t)}$ encountered during the partition process into the training of involved partition policies to reduce the mismatch between support sets.

In the practical implementation, a problem instance $I$ is initially generated from the instance distribution $p_{I}$, which is used to train $\pi_{\theta_{G}}$ via RL. If a new subgraph $c_{i_{t}}$ is formed at timestep $t$, then the subproblem $I_{N_{v}(t)}$ is treated as an individual problem instance, denoted as $I \leftarrow I_{N_{v}(t)}$, for the training of $\pi_{\theta_{G}}$. This procedure continues until $G=\emptyset$ in $I$, and reverts back to $p_{I}$ for a new instance $I$. For efficiency reasons, we do not include all subproblems. In inference, the partition solution $\mathcal{C}^{(0)}$ is formed by sequentially replacing the partial partition solution with the corresponding complete partition solution of the subproblem. An example is shown in Figure~\ref{fig:rl_hlgp}. The training and inference procedure utilizing the encountered subproblem can similarly be applied to $\pi_{\theta_{L}}$. Additionally, we utilize the isomorphic Graph Neural Netwok (GNN) as presented in GLOP~\cite{ye2024glop} to serve as the backbones of $\pi_{\theta_{G}}$ and $\pi_{\theta_{L}}$ correspondingly.

\subsection{SL-driven HLGP}
In this section, we pivot towards an SL training strategy to optimize the objective of the partition problem as defined in Equation~\ref{equ:hl_obj}. Here, the optimal partition solver $\pi_{\mathrm{part}}^{\ast}$ is presumed to be available in advance. The optimal partition solution $\bar{\mathcal{C}}=\{\bar{c}_{1}, \ldots, \bar{c}_{N_{c}}\}$ for each instance $I$ is accordingly obtainable from $\pi_{\mathrm{part}}^{\ast}$. We thus adopt $f(\mathcal{C})=-\mathds{1}(\mathcal{C} = \bar{\mathcal{C}}) = -\mathds{1}(c_{1}=\bar{c}_{1}, \ldots, c_{N_{c}}=\bar{c}_{N_{c}})$, where $\mathds{1}(\cdot)$ denotes the indicator function. Recall that the optimal solution $\bar{\mathcal{T}}$ of the CVRP instance can be equivalently seen as the optimal partition solution $\bar{\mathcal{C}}$ when disregarding the node permutation information in $\bar{\mathcal{T}}$. Therefore, by setting $\bar{\mathcal{C}} = \bar{\mathcal{T}}$, the feasible cost function can be defined as $f(\mathcal{C}) = -\mathds{1}(\mathcal{C}[0] = \bar{\mathcal{C}}[0], \ldots, \mathcal{C}[N_{\mathrm{sol}}] = \bar{\mathcal{C}}[N_{\mathrm{sol}}])$. Upon utilizing this design of $f(\mathcal{C})$, Theorem~\ref{thm:sl_obj} simplifies the optimization objective of HLGP framework.
\begin{theorem}
\label{thm:sl_obj}
Given $f(\mathcal{C})=-\mathds{1}(\mathcal{C} = \bar{\mathcal{C}})$, the optimization objective in the HLGP framework for a problem instance $I$ is to identify $\pi_{\theta_{G}}$ and $\pi_{\theta_{L}}$ so as to minimize:
\begin{equation}
    L(\theta_{G}, \theta_{L}, \bar{\mathcal{C}}) = 
    - \log\pi_{\theta_{G}}(\bar{\mathcal{C}}|I) - \sum_{i=1}^{N_{c}} \log\pi_{\theta_{L}}(\bar{\mathcal{C}}_{i}|I_{i}),
\end{equation}
where $I_{i} = (G_{i}, D, 2)$ denotes the subproblem, with $G_{i} = \bar{c}_{i} \cup \bar{c}_{i\%N_{c}+1}$, and $\bar{\mathcal{C}}_{i} = \{\bar{c}_{i}, \bar{c}_{i\%N_{c}+1} \}$ represents the corresponding label.
\end{theorem}

\begin{table*}[t]
    \caption{Comparative results on uniformly distributed CVRP instances. OOM stands for out-of-memory. The symbol $\ast$ indicates that the results are obtained from the original paper. The notation $\downarrow$ indicates that a lower value is better.}
    \setlength\tabcolsep{0.9pt}
    \renewcommand{\arraystretch}{0.9}
    \small
    \centering
    \begin{tabular}{lccccccccccccccc}
    \toprule[1.2pt]
       \multirow{2}{*}{Methods}  & \multicolumn{3}{c}{CVRP1K} & \multicolumn{3}{c}{CVRP2K} & \multicolumn{3}{c}{CVRP5K} & \multicolumn{3}{c}{CVRP7K} & \multicolumn{3}{c}{CVRP10K} \\
    \cmidrule(r){2-4} \cmidrule(r){5-7} \cmidrule(r){8-10} \cmidrule(r){11-13} \cmidrule(r){14-16}
        & \normalsize{$Avg._{\downarrow}$} & \normalsize{$Std._{\downarrow}$} & \normalsize{$Time_{\downarrow}$} & \normalsize{$Avg._{\downarrow}$} & \normalsize{$Std._{\downarrow}$} & \normalsize{$Time_{\downarrow}$} & \normalsize{$Avg._{\downarrow}$} & \normalsize{$Std._{\downarrow}$} & \normalsize{$Time_{\downarrow}$} & \normalsize{$Avg._{\downarrow}$} & \normalsize{$Std._{\downarrow}$} & \normalsize{$Time_{\downarrow}$} & \normalsize{$Avg._{\downarrow}$} & \normalsize{$Std._{\downarrow}$} & \normalsize{$Time_{\downarrow}$} \\
    \midrule[0.7pt]
    LKH & 42.17 & 0.80 & 14.18s & 58.06 & 1.06 & 31.92s	& 126.59 & 2.81 & 6.80m & 172.80 & 4.04 & 18.21m & 240.23 & 5.42 & 41.29m \\
    HGS & 41.12 & 0.77 & 4.57m & 56.24 & 1.07 & 12.68m & 122.84 & 2.87 & 18.80m & 165.37 & 3.95 & 20.15m & 226.59 & 5.29 &	24.64m \\
    \midrule[0.7pt]
    AM (ICLR'19) & 59.18 & 2.81 &	8.84s &\multicolumn{3}{c}{OOM}	&\multicolumn{3}{c}{OOM}  &\multicolumn{3}{c}{OOM}	& \multicolumn{3}{c}{OOM} \\
    POMO (NeurIPS'20) & 100.99 & 7.43 & 4.63s & 255.13 &30.02 &39.35s &\multicolumn{3}{c}{OOM} &\multicolumn{3}{c}{OOM}	&\multicolumn{3}{c}{OOM} \\	
    Sym-POMO (NeurIPS'22)& 98.04 & 2.86 & 5.71s & 157.89 & 2.96 &45.23s	&\multicolumn{3}{c}{OOM} &\multicolumn{3}{c}{OOM} &\multicolumn{3}{c}{OOM} \\
    AMDKD (NeurIPS'22)& 84.16 & 0.98 & 4.27s & 188.75 & 4.39 & 34.39s &\multicolumn{3}{c}{OOM} &\multicolumn{3}{c}{OOM} &\multicolumn{3}{c}{OOM} \\
    Omni-POMO (ICML'23)& 47.80 &0.77	& 4.45s	& 74.01 & 1.05 &35.86s	&\multicolumn{3}{c}{OOM} &\multicolumn{3}{c}{OOM} &\multicolumn{3}{c}{OOM} \\
    ELG-POMO (IJCAI'24)& 46.41 & 0.40 & 9.53s & 66.07 & 0.55 & 67.19s &\multicolumn{3}{c}{OOM} &\multicolumn{3}{c}{OOM} &\multicolumn{3}{c}{OOM} \\
    INViT (ICML'24)& 46.56 & 0.81 & 17.08s & 64.94 & 1.09 & 36.67s & 139.45 & 2.86 & 141.07s & 186.57 & 3.93 & 4.81m & 254.17 & 5.39 & 6.96m \\
    \midrule[0.7pt]
    LEHD (NeurIPS'23) & 42.80 & 0.82 & 40.25s & 60.48 & 1.12 & 72.48s & 136.80 &2.86 & 3.22m & 188.11 & 4.00 & 6.52m & 266.06 & 5.56 & 11.92m \\		
    BQ (NeurIPS'23)& 43.12 & 0.80 & 4.75s & 60.95 & 1.10 &15.66s & 137.14 & 3.00 & 79.89s & 188.78 & 4.23 & 1.88m & 265.81 & 5.97 & 3.30m \\
    \midrule[0.7pt]
    L2I (ICLR'20)& 49.79 & 1.10 & 18.60s & 95.58 & 5.44 & 44.88s & 262.84 & 9.99 & 2.64m & 506.73 & 25.25	& 3.61m & 1263.23  & 4.00 & 4.07m \\
    NLNS (ECAI'20) & 53.51 & 0.83 & 12.08s & 81.54 & 1.12 & 12.15s & 180.84 & 2.99 & 12.62s & 243.50 &4.17 &13.16s & 331.27 & 5.53 & 13.97s	\\
    DACT (NeurIPS'21) & 50.43 & 0.35	& 75.47s & 71.17 & 0.51	& 5.40m &\multicolumn{3}{c}{OOM} &\multicolumn{3}{c}{OOM} &\multicolumn{3}{c}{OOM}\\
    \midrule[0.7pt]
    L2D (NeurIPS'21) & 46.45 & 0.87 & 4.67s & 64.04 & 1.21 & 7.54s & 135.09 & 3.02 & 16.11s & 182.21 & 4.13 & 24.37s & 246.38 & 5.55 & 27.59s \\
    RBG$^{\ast}$ (KDD'22) & 74.00 & -	&13.00s	& 137.60 &-& 42.00s	&\multicolumn{3}{c}{-}&\multicolumn{3}{c}{-}&\multicolumn{3}{c}{-} \\
    TAM-AM$^{\ast}$ (ICLR'23) & 50.06 & 0.98	& 0.76s	& 74.31 & 1.42	&2.20s &172.22&- &11.78s	&233.44&- &26.47s &\multicolumn{3}{c}{-}	\\
    TAM-LKH$^{\ast}$ (ICLR'23) & 46.34 & 0.84	& 1.82s	& 64.78 & 1.18	&5.63s	&144.64&- &17.19s &196.91&- &33.21s &\multicolumn{3}{c}{-} \\
    GLOP-G (AAAI'24) &47.21&0.90	&0.73s	&63.60&1.41	&1.74s	&141.67&3.67	&2.37s	&191.75&4.99	&3.50s	&266.96&6.46 &13.74s \\
    GLOP-LKH (AAAI'24) &46.03&0.99	&0.78s	&63.10&1.43	&1.83s	&140.51&3.72	&4.31s	&191.45&5.06	&7.34s	&267.50&6.50 &16.47s \\
    \midrule[0.7pt]
    RL-driven HLGP &43.78&0.85	&3.72s &59.58&1.17 &10.03s	&128.12 &3.06 &82.59s & 173.71 &4.39 & 1.96m & 238.62 & 6.03 & 5.13m \\
    SL-driven HLGP &\textbf{41.95}&0.78	&8.31s &\textbf{57.67}&1.08 &32.40s	&\textbf{124.13}&2.79 &97.27s &\textbf{166.73}&3.91 & 2.15m & \textbf{227.07} & 5.25 & 3.39m \\
    \bottomrule[1.2pt]
    \end{tabular}
    \label{tab:1}
    \vspace{-10pt}
\end{table*}

Please see Appendix-C.3 for proofs. We can observe that the optimization objective for $\pi_{\theta_{G}}$ and $\pi_{\theta_{L}}$ is totally disentangled in Theorem~\ref{thm:sl_obj}. Accordingly, the optimization objective for instances sampled from $p_{I}$ is to minimize:
\begin{equation}
\label{equ:sl_obj}
    J(\theta_{G}, \theta_{L})= \mathbb{E}_{(I, \bar{\mathcal{C}})\sim p_{I}, \pi_{\mathrm{part}}^{\ast}} [L(\theta_{G}, \theta_{L}, \bar{\mathcal{C}})].
\end{equation}
This objective involves evaluating the performance of $\pi_{\theta_{G}}$ and $\pi_{\theta_{L}}$ on the sampled trajectories induced by $\pi_{\mathrm{part}}^{\ast}$. However, $\pi_{\mathrm{part}}^{\ast}$ is practically unavailable for the NP-hard partition problem, since it is impossible to directly get supervised labels. Inspired by recent techniques known as self-imitation learning~\cite{son2023meta, luo2024self}, our goal is to acquire high-quality labeled instances from a behavioral policy $\hat{\pi}_{\mathrm{part}}$. The working pipeline of $\hat{\pi}_{\mathrm{part}}$ can be described as follows (see Figure~\ref{fig:sl_hlgp}): At the global partition level, $\hat{\mathcal{C}}^{(0)}$ is first generated using beam search with $\pi_{\theta_{G}}$. Then, at each local partition level $k\geq1$, the partition solution is further refined locally using beam search with $\pi_{\theta_{L}}$ to obtain the ultimate partition solution $\hat{\mathcal{C}}^{(K)}$. Thus, during training, $\bar{\mathcal{C}}=\hat{\mathcal{C}}^{(K)}$ serves as the label. The practical loss function is thus defined as:
\begin{equation}
\label{equ:sl_loss}
\hat{J}(\theta_{G}, \theta_{L}) = \mathbb{E}_{(I, \bar{\mathcal{C}})\sim p_{I}, \hat{\pi}_{\mathrm{part}}} [L(\theta_{G}, \theta_{L}, \bar{\mathcal{C}})]  + \mathrm{reg}(\theta_{G}, \theta_{L}),
\end{equation}
where $\mathrm{reg}(\theta_{G}, \theta_{L}) = \lambda_{G}\frac{||\theta_{G}||^{2}}{2} + \lambda_{L}\frac{||\theta_{L}||^{2}}{2}$, with hyperparameters $\lambda_{G}$ and $\lambda_{L}$. Therefore, this loss function is incorporated into the imitation learning algorithm to iteratively optimize $\pi_{\theta_{G}}$ and $\pi_{\theta_{L}}$. In each iteration $n \geq 0$, the algorithm deploys $\hat{\pi}_{\mathrm{part}}^{n}$ (which relies on $\theta_{G}^{n}$ and $\theta_{L}^{n}$) and gathers the labeled instance $(I, \bar{\mathcal{C}})$. Online gradient updates are then executed based on estimated gradients to update $\theta_{G}^{n}$ and $\theta_{L}^{n}$. Please refer to Appendix-B for the pseudocode.

Here, let us delve deeper into illustrating the training process for the global partition policy $\pi_{\theta_{G}}$ as a case study to elucidate the intricacies of the SL algorithm for the partition problem. A similar analysis can be conducted for the local partition policy $\pi_{\theta_{L}}$. The global partition policy $\pi_{\theta_{G}}$ requires to imitate the whole trajectory induced by the behavioral policy $\hat{\pi}_{\mathrm{part}}$. Following the formulation in Equation~\ref{equ:glb_part_pi}, the global partition policy can directly output the node sequence. Subsequently, the log-probability of this node sequence in $\hat{\mathcal{C}}$ is maximized to update $\theta_{G}$. This log-probability of the node sequence contains the product of a series of conditional probabilities, represented as $\log \prod_{t=1}^{N_{\mathrm{sol}}} \pi_{\theta_{G}}(\bar{\mathcal{C}}[t] | \bar{\mathcal{C}}[0:t-1], I)=\sum_{t=1}^{N_{\mathrm{sol}}} \log \pi_{\theta_{G}}(\bar{\mathcal{C}}[t] | \bar{\mathcal{C}}[0:t-1], I)$. This sum of log-probabilities prompts us to consider $(\bar{\mathcal{C}}[0:t-1], I)$ as an individual training sample, with its corresponding label being a singular $\bar{\mathcal{C}}[t]$. In this context, at timestep $t \geq 1$, the global partition policy is addressing a subproblem $I_{N_{v}(t)}$ determined by $(\bar{\mathcal{C}}[0:t-1], I)$, and it aligns with the same definition as in the RL setting. 
Hence, rather than generating the entire node sequence for behavioral imitation, the labeled instance $(I_{N_{v}(t)}, \bar{\mathcal{C}}[t])$ is fed to the global partition policy to imitate one-step behavior at each time step $t \geq 1$. In practice, we employ a variant Transformer model of BQ~\cite{drakulic2024bq} as the backbones of $\pi_{\theta_{G}}$ and $\pi_{\theta_{L}}$, which aligns with the analysis above. Therefore, we underscore the importance of viewing the subproblems encountered during training as individual training instances within both the contexts of RL and SL training paradigms.

\begin{table*}[]
\caption{Comparative results on various distributed CVRP instances. ``G" denotes the Gaussian distribution. ``E" denotes the Explosion distribution. ``R" denotes the Rotation distribution. The notation $\downarrow$ indicates that a lower value is better.}
\setlength\tabcolsep{0.9pt}
\renewcommand{\arraystretch}{0.9}
\small
\centering
\begin{tabular}{lcccccccccccccccccc}
\toprule[1.2pt]
\multirow{2}{*}{Methods}  & \multicolumn{3}{c}{CVRP1K+G} & \multicolumn{3}{c}{CVRP1K+E} & \multicolumn{3}{c}{CVRP1K+R} & \multicolumn{3}{c}{CVRP2K+G} & \multicolumn{3}{c}{CVRP5K+E} &\multicolumn{3}{c}{CVRP7K+R} \\
\cmidrule(r){2-4} \cmidrule(r){5-7} \cmidrule(r){8-10} \cmidrule(r){11-13} \cmidrule(r){14-16} \cmidrule(r){17-19}
& \normalsize{$Avg._{\downarrow}$} & \normalsize{$Std._{\downarrow}$} & \normalsize{$Time_{\downarrow}$} & \normalsize{$Avg._{\downarrow}$} & \normalsize{$Std._{\downarrow}$} & \normalsize{$Time_{\downarrow}$} & \normalsize{$Avg._{\downarrow}$} & \normalsize{$Std._{\downarrow}$} & \normalsize{$Time_{\downarrow}$} & \normalsize{$Avg._{\downarrow}$} & \normalsize{$Std._{\downarrow}$} & \normalsize{$Time_{\downarrow}$} & \normalsize{$Avg._{\downarrow}$} & \normalsize{$Std._{\downarrow}$} & \normalsize{$Time_{\downarrow}$} & \normalsize{$Avg._{\downarrow}$} & \normalsize{$Std._{\downarrow}$} & \normalsize{$Time_{\downarrow}$} \\
\midrule[0.7pt]
LKH3            & 32.52           & 1.21   & 37.35s & 38.01            & 1.48  & 15.55s  & 37.50           & 1.33  & 15.35s  & 42.60           & 1.62  & 64.06s  & 103.45           & 4.39   & 6.67m  & 156.04           & 6.81  & 25.69m \\
HGS             & 31.84           & 1.19   & 15.57m & 37.13            & 1.46  & 6.52m   & 36.62           & 1.32  & 7.78m   & 41.64           & 1.61  & 19.80m  & 101.16           & 4.40   & 16.53m & 151.04           & 6.72  & 21.04m \\
\midrule[0.7pt]
AM (ICLR'19) & 93.62           & 20.23  & 9.32s  & 58.99            & 4.79  & 8.74s   & 60.80           & 5.42  & 8.72s   & \multicolumn{3}{c}{OOM}           & \multicolumn{3}{c}{OOM}            & \multicolumn{3}{c}{OOM}           \\
POMO (NeurIPS'20) & 56.83           & 2.72   & 4.78s  & 74.88            & 4.84  & 4.54s   & 75.26           & 5.52  & 4.41s   & 101.75          & 7.32  & 38.31s  & \multicolumn{3}{c}{OOM}            & \multicolumn{3}{c}{OOM}           \\
Sym-POMO (NeurIPS'22) & 97.59           & 5.35   & 5.57s  & 95.08            & 5.11  & 5.65s   & 106.88          & 6.11  & 5.53s   & 134.32          & 5.20  & 40.56s  & \multicolumn{3}{c}{OOM}            & \multicolumn{3}{c}{OOM}           \\
AMDKD (NeurIPS'22) & 58.71           & 1.98   & 4.14s  & 71.10            & 2.04  & 4.17s   & 73.32           & 2.00  & 4.11s   & 108.11          & 3.82  & 32.96s  & \multicolumn{3}{c}{OOM}            & \multicolumn{3}{c}{OOM}           \\
Omni-POMO (ICML'23) & 35.47           & 1.20   & 4.30s  & 41.80            & 1.47  & 4.30s   & 41.30           & 1.34  & 4.28s   & 51.02           & 1.76  & 35.31s  & \multicolumn{3}{c}{OOM}            & \multicolumn{3}{c}{OOM}           \\
ELG-POMO (IJCAI'24) & 36.49           & 0.63   & 9.86s  & 41.64            & 0.75  & 9.67s   & 41.31           & 0.69  & 9.48s   & 49.34           & 0.86  & 68.73s  & \multicolumn{3}{c}{OOM}            & \multicolumn{3}{c}{OOM}           \\
INViT (ICML'24) & 35.67           & 1.28   & 19.80s  & 42.11            & 1.53  & 19.80s   & 41.22           & 1.37  & 19.74s  & 47.31           & 1.75  & 46.92s  & 113.26           & 4.79   & 3.17m  & 169.38           & 7.47  & 4.66m  \\
\midrule[0.7pt]
LEHD   (NeurIPS'23)         & 33.99           & 1.23   & 36.27s & 38.96            & 1.50  & 36.19s  & 38.44           & 1.36  & 36.16s  & 47.48           & 1.65  & 64.80s   & 116.70           & 4.38   & 2.85m  & 176.14           & 6.91  & 5.93m  \\
BQ   (NeurIPS'23)           & 34.71           & 1.25   & 3.88s  & 39.64            & 1.50  & 3.90s   & 39.17           & 1.39  & 3.91s   & 47.74           & 1.67  & 11.34s  & 120.23           & 4.75   & 72.12s & 181.04           & 7.59  & 76.47s \\
\midrule[0.7pt]
L2I    (ICLR'20)         & 37.42           & 1.28   & 13.56s & 44.05            & 1.58  & 14.15s  & 43.56           & 1.41  & 14.01s  & 63.33           & 3.26  & 26.14s  & 204.51           & 10.31  & 2.10m  & 348.70           & 17.47 & 4.37m  \\
NLNS    (ECAI'20)        & 41.31           & 1.27   & 12.15s & 46.52            & 1.51  & 12.15s  & 47.44           & 1.36  & 12.16s  & 60.38           & 1.89  & 12.22s  & 142.87           & 4.65   & 12.73s & 221.69           & 6.51  & 13.02s \\
DACT  (NeurIPS'21)   & 37.03           & 0.57   & 68.63s & 43.10            & 0.66  & 67.92s  & 43.50           & 0.57  & 67.98s  & 49.30           & 0.83  & 4.52m   & \multicolumn{3}{c}{OOM}            & \multicolumn{3}{c}{OOM}           \\
\midrule[0.7pt]
L2D    (NeurIPS'21)         & 35.26           & 1.24   & 2.60s  & 41.09            & 1.50  & 2.52s   & 40.40           & 1.37  & 2.63s   & 46.29           & 1.69  & 4.24s   & 108.95           & 4.63   & 10.15s & 162.90           & 6.99  & 19.04s \\
GLOP-G    (AAAI'24)      & 39.20            & 1.40   & 0.44s  & 43.44            & 1.63  & 0.43s   & 43.46           & 1.46  & 0.41s   & 50.55           & 1.97  & 1.88s   & 117.65           & 4.80   & 7.07s  & 178.37           & 6.84  & 7.98s  \\
GLOP-LKH   (AAAI'24)   & 38.71           & 1.42   & 1.22s  & 42.83            & 1.67  & 0.93s   & 42.80           & 1.49  & 0.77s   & 50.42           & 1.98  & 3.84s   & 117.04           & 4.83   & 9.85s  & 178.08           & 6.90  & 11.35s \\
\midrule[0.7pt]
RL-driven HLGP         & 34.58           & 1.26   & 3.47s  & 39.85            & 1.54  & 3.43s   & 39.36           & 1.38  & 3.47s   & 44.80           & 1.70  & 10.37s  & 106.27           & 4.52   & 70.80s & 160.73           & 7.20  & 3.04m  \\
SL-driven HLGP  & \textbf{32.55}  & 1.21   & 7.21s  & \textbf{37.96}   & 1.48  & 7.55s   & \textbf{37.40}  & 1.34  & 7.41s   & \textbf{42.85}  & 1.65  & 30.61s  & \textbf{102.27}  & 4.47   & 84.12s & \textbf{152.47}  & 6.91  & 98.49s \\
\bottomrule[1.2pt]
\end{tabular}
\label{tab:2}
\vspace{-10pt}
\end{table*}

\section{Experiments}

\subsection{Training and Evaluation Settings}

In the training for both RL-driven and SL-driven HLGP, we adhere to the problem settings used in GLOP~\cite{ye2024glop}. During the training phase, each CVRP instance consists of 1000 customer nodes distributed uniformly, with a vehicle capacity of 200. During the evaluation phase, our focus is on assessing the generalization capabilities of various models. Therefore, we utilize diverse datasets with varying scales and distributions. Each evaluation dataset can specify the number of customer nodes as 1000, 2000, 5000, 7000, or 10000. The customer nodes in each dataset are distributed according to a Uniform distribution, a Gaussian distribution, an explosion pattern, or a rotation pattern. Except for the dataset with 1000 customer nodes setting capacity as 200, the capacity for the other datasets is set to 300. Each dataset comprises 128 instances. The process of generating problem instances aligns with the methodologies outlined in~\cite{kool2018attention, zhou2023towards}. Please refer to Appendix-A.1 for more training details. Note that the code of our implementation, along with the Appendix, has been uploaded as the supplementary material.

During the evaluation phase, we compare our proposed RL-driven and SL-driven HLGP models with various methods. The classical heuristic methods include LKH3~\cite{helsgaun2017extension} and HGS~\cite{vidal2012hybrid}. In learning-based constructive methods, AM~\cite{kool2018attention}, POMO~\cite{kwon2020pomo}, and Sym-POMO~\cite{kim2022sym} serve as baselines trained with RL. AMDKD~\cite{bi2022learning} and Omni-POMO~\cite{zhou2023towards} target generalization issues specifically. ELG-POMO~\cite{gao2023towards} and INViT~\cite{fang2024invit} harness local topological features. Within the realm of iterative methods, L2I~\cite{lu2019learning}, NLNS~\cite{hottung2020neural}, and DACT~\cite{ma2021learning} integrate RL-based policies with local operators to iteratively refine a given solution. In the context of the divide-and-conquer paradigm, RBG~\cite{zong2022rbg} and L2D~\cite{li2021learning} employ heuristic rules for the partition policy, while GPLC paradigms TAM~\cite{hou2023generalize} and GLOP~\cite{ye2024glop} utilize neural partition policies. We adhere to the official implementations of these methods and the instructions provided by other works~\cite{hou2023generalize, ye2024glop, fang2024invit} that cite these methods to replicate the experimental results. For RBG and TAM, we directly use the reported results from the papers. For a fair comparison, we only consider baseline methods that either have official code available for reproduction or have been extensively reported in previous papers. Further details are deferred to the Appendix-A.6. In addition, for comparison, the metrics include the average solution costs ($Avg.$), the standard deviation of solution costs ($Std.$), and average inference time ($Time$).

\subsection{Performance Comparisons}
In Table~\ref{tab:1}, our proposed RL-driven and SL-driven HLGP are compared with various previous methods on cross-size datasets. When compared to the state-of-the-art method, LEHD, RL-driven HLGP shows only a slight performance drop, notably on the CVRP1K dataset. Across other cross-size datasets, RL-driven HLGP consistently delivers superior solutions and is notably more efficient than LEHD. In comparison to all other learning-based baselines, SL-driven HLGP consistently demonstrates its superiority in terms of average cost while maintaining efficiency. Moreover, compared to classical heuristic solvers, SL-driven HLGP can swiftly produce high-quality solutions. In the most challenging case, CVRP10K, SL-driven HLGP stands out as the only method capable of generating high-quality solutions within 4 minutes for each instance. Additionally, owing to the adopted HL framework to mitigate compounded errors, both RL-driven HLGP and SL-driven HLGP outperform their respective baselines (GLOP and BQ).

Table~\ref{tab:2} displays the comparison of our proposed methods with various existing methods on both cross-distribution datasets and cross-size and distribution datasets. When compared to BQ and LEHD, our RL-driven HLGP exhibits a minor performance decline on cross-distribution datasets. However, on the cross-size and distribution datasets, RL-driven HLGP consistently showcases superior generalization by efficiently producing improved solutions. In comparison to all previous learning-based methods, SL-driven HLGP consistently outperforms on both cross-size and cross-size and distribution datasets. Moreover, the performance of SL-driven HLGP closely approaches that of HGS while being dramatically more efficient. This justifies the use of supervised partition policy especially for larger instances. Please refer to the Appendix-A.3 and Appendix-A.4 for the hyperparameter studies, ablation studies, and visualization results.

\section{Conclusion}
In this work, we propose a novel hierarchical learning-based framework for the graph partition in CVRP. The global partition policy and local partition policy synergistically tackle the partition task to progressively alleviate the compounded misclusterings. Our methods adopt a unified objective function harmoniously compatible with both RL and SL training methods. Meanwhile, we thoroughly analyze the significance of treating the subproblems encountered during training as individual instances in both the RL and SL settings. Experimental results unequivocally demonstrate the generalization capability of proposed HLGP framework in finding low-cost CVRP solutions under distribution and scale shifts. 
%At present, we mainly demonstrate the effectiveness of our method by solving CVRP. 
In future, we plan to extend our HLGP to more different VRP variants like CVRPTW and Min-max CVRP, as well as other types of COPs.

%%%%%%%%%%%%%%%%%%%%%%%%%%%%%%%%%%%%%%%%%%%%%%%%%%%%%%%%%%%%%%%%%%%%%%%%

%%% The acknowledgments section is defined using the "acks" environment
%%% (rather than an unnumbered section). The use of this environment 
%%% ensures the proper identification of the section in the article 
%%% metadata as well as the consistent spelling of the heading.

\begin{acks}
This research is supported by a generous research grant from Xiaoi Robot Technology Limited, the National Research Foundation, Singapore under its AI Singapore Programme (AISG Award No: AISG3-RP-2022-031), and the Singapore Ministry of Education (MOE) Academic Research Fund (AcRF) Tier 1 grant.
\end{acks}

%%%%%%%%%%%%%%%%%%%%%%%%%%%%%%%%%%%%%%%%%%%%%%%%%%%%%%%%%%%%%%%%%%%%%%%%

%%% The next two lines define, first, the bibliography style to be 
%%% applied, and, second, the bibliography file to be used.

\bibliographystyle{ACM-Reference-Format} 
\bibliography{bibfile}

%%% -*-BibTeX-*-
%%% Do NOT edit. File created by BibTeX with style
%%% ACM-Reference-Format-Journals [18-Jan-2012].

\begin{thebibliography}{47}

%%% ====================================================================
%%% NOTE TO THE USER: you can override these defaults by providing
%%% customized versions of any of these macros before the \bibliography
%%% command.  Each of them MUST provide its own final punctuation,
%%% except for \shownote{}, \showDOI{}, and \showURL{}.  The latter two
%%% do not use final punctuation, in order to avoid confusing it with
%%% the Web address.
%%%
%%% To suppress output of a particular field, define its macro to expand
%%% to an empty string, or better, \unskip, like this:
%%%
%%% \newcommand{\showDOI}[1]{\unskip}   % LaTeX syntax
%%%
%%% \def \showDOI #1{\unskip}           % plain TeX syntax
%%%
%%% ====================================================================

\ifx \showCODEN    \undefined \def \showCODEN     #1{\unskip}     \fi
\ifx \showDOI      \undefined \def \showDOI       #1{#1}\fi
\ifx \showISBNx    \undefined \def \showISBNx     #1{\unskip}     \fi
\ifx \showISBNxiii \undefined \def \showISBNxiii  #1{\unskip}     \fi
\ifx \showISSN     \undefined \def \showISSN      #1{\unskip}     \fi
\ifx \showLCCN     \undefined \def \showLCCN      #1{\unskip}     \fi
\ifx \shownote     \undefined \def \shownote      #1{#1}          \fi
\ifx \showarticletitle \undefined \def \showarticletitle #1{#1}   \fi
\ifx \showURL      \undefined \def \showURL       {\relax}        \fi
% The following commands are used for tagged output and should be
% invisible to TeX
\providecommand\bibfield[2]{#2}
\providecommand\bibinfo[2]{#2}
\providecommand\natexlab[1]{#1}
\providecommand\showeprint[2][]{arXiv:#2}

\bibitem[\protect\citeauthoryear{Bi, Ma, Wang, Cao, Chen, Sun, and Chee}{Bi et~al\mbox{.}}{2022}]%
        {bi2022learning}
\bibfield{author}{\bibinfo{person}{Jieyi Bi}, \bibinfo{person}{Yining Ma}, \bibinfo{person}{Jiahai Wang}, \bibinfo{person}{Zhiguang Cao}, \bibinfo{person}{Jinbiao Chen}, \bibinfo{person}{Yuan Sun}, {and} \bibinfo{person}{Yeow~Meng Chee}.} \bibinfo{year}{2022}\natexlab{}.
\newblock \showarticletitle{Learning generalizable models for vehicle routing problems via knowledge distillation}.
\newblock \bibinfo{journal}{\emph{Advances in Neural Information Processing Systems}}  \bibinfo{volume}{35} (\bibinfo{year}{2022}), \bibinfo{pages}{31226--31238}.
\newblock


\bibitem[\protect\citeauthoryear{Cattaruzza, Absi, Feillet, and Gonz{\'a}lez-Feliu}{Cattaruzza et~al\mbox{.}}{2017}]%
        {cattaruzza2017vehicle}
\bibfield{author}{\bibinfo{person}{Diego Cattaruzza}, \bibinfo{person}{Nabil Absi}, \bibinfo{person}{Dominique Feillet}, {and} \bibinfo{person}{Jes{\'u}s Gonz{\'a}lez-Feliu}.} \bibinfo{year}{2017}\natexlab{}.
\newblock \showarticletitle{Vehicle routing problems for city logistics}.
\newblock \bibinfo{journal}{\emph{EURO Journal on Transportation and Logistics}} \bibinfo{volume}{6}, \bibinfo{number}{1} (\bibinfo{year}{2017}), \bibinfo{pages}{51--79}.
\newblock


\bibitem[\protect\citeauthoryear{Chen and Tian}{Chen and Tian}{2019}]%
        {chen2019learning}
\bibfield{author}{\bibinfo{person}{Xinyun Chen} {and} \bibinfo{person}{Yuandong Tian}.} \bibinfo{year}{2019}\natexlab{}.
\newblock \showarticletitle{Learning to perform local rewriting for combinatorial optimization}.
\newblock \bibinfo{journal}{\emph{Advances in neural information processing systems}}  \bibinfo{volume}{32} (\bibinfo{year}{2019}).
\newblock


\bibitem[\protect\citeauthoryear{Cheng, Zheng, Cong, Jiang, and Pu}{Cheng et~al\mbox{.}}{2023}]%
        {cheng2023select}
\bibfield{author}{\bibinfo{person}{Hanni Cheng}, \bibinfo{person}{Haosi Zheng}, \bibinfo{person}{Ya Cong}, \bibinfo{person}{Weihao Jiang}, {and} \bibinfo{person}{Shiliang Pu}.} \bibinfo{year}{2023}\natexlab{}.
\newblock \showarticletitle{Select and optimize: Learning to solve large-scale tsp instances}. In \bibinfo{booktitle}{\emph{International Conference on Artificial Intelligence and Statistics}}. PMLR, \bibinfo{pages}{1219--1231}.
\newblock


\bibitem[\protect\citeauthoryear{Drakulic, Michel, Mai, Sors, and Andreoli}{Drakulic et~al\mbox{.}}{2024}]%
        {drakulic2024bq}
\bibfield{author}{\bibinfo{person}{Darko Drakulic}, \bibinfo{person}{Sofia Michel}, \bibinfo{person}{Florian Mai}, \bibinfo{person}{Arnaud Sors}, {and} \bibinfo{person}{Jean-Marc Andreoli}.} \bibinfo{year}{2024}\natexlab{}.
\newblock \showarticletitle{Bq-nco: Bisimulation quotienting for efficient neural combinatorial optimization}.
\newblock \bibinfo{journal}{\emph{Advances in Neural Information Processing Systems}}  \bibinfo{volume}{36} (\bibinfo{year}{2024}).
\newblock


\bibitem[\protect\citeauthoryear{Fang, Song, Weng, and Ban}{Fang et~al\mbox{.}}{2024}]%
        {fang2024invit}
\bibfield{author}{\bibinfo{person}{Han Fang}, \bibinfo{person}{Zhihao Song}, \bibinfo{person}{Paul Weng}, {and} \bibinfo{person}{Yutong Ban}.} \bibinfo{year}{2024}\natexlab{}.
\newblock \showarticletitle{INViT: A Generalizable Routing Problem Solver with Invariant Nested View Transformer}. In \bibinfo{booktitle}{\emph{Forty-first International Conference on Machine Learning}}.
\newblock


\bibitem[\protect\citeauthoryear{Fisher and Jaikumar}{Fisher and Jaikumar}{1981}]%
        {fisher1981generalized}
\bibfield{author}{\bibinfo{person}{Marshall~L Fisher} {and} \bibinfo{person}{Ramchandran Jaikumar}.} \bibinfo{year}{1981}\natexlab{}.
\newblock \showarticletitle{A generalized assignment heuristic for vehicle routing}.
\newblock \bibinfo{journal}{\emph{Networks}} \bibinfo{volume}{11}, \bibinfo{number}{2} (\bibinfo{year}{1981}), \bibinfo{pages}{109--124}.
\newblock


\bibitem[\protect\citeauthoryear{Fu, Qiu, and Zha}{Fu et~al\mbox{.}}{2021}]%
        {fu2021generalize}
\bibfield{author}{\bibinfo{person}{Zhang-Hua Fu}, \bibinfo{person}{Kai-Bin Qiu}, {and} \bibinfo{person}{Hongyuan Zha}.} \bibinfo{year}{2021}\natexlab{}.
\newblock \showarticletitle{Generalize a small pre-trained model to arbitrarily large tsp instances}. In \bibinfo{booktitle}{\emph{Proceedings of the AAAI conference on artificial intelligence}}, Vol.~\bibinfo{volume}{35}. \bibinfo{pages}{7474--7482}.
\newblock


\bibitem[\protect\citeauthoryear{Gao, Shang, Xue, Li, and Qian}{Gao et~al\mbox{.}}{2024}]%
        {gao2023towards}
\bibfield{author}{\bibinfo{person}{Chengrui Gao}, \bibinfo{person}{Haopu Shang}, \bibinfo{person}{Ke Xue}, \bibinfo{person}{Dong Li}, {and} \bibinfo{person}{Chao Qian}.} \bibinfo{year}{2024}\natexlab{}.
\newblock \showarticletitle{Towards generalizable neural solvers for vehicle routing problems via ensemble with transferrable local policy}. In \bibinfo{booktitle}{\emph{The 33rd International Joint Conference on Artificial Intelligence (IJCAI-24)}}.
\newblock


\bibitem[\protect\citeauthoryear{Garaix, Artigues, Feillet, and Josselin}{Garaix et~al\mbox{.}}{2010}]%
        {garaix2010vehicle}
\bibfield{author}{\bibinfo{person}{Thierry Garaix}, \bibinfo{person}{Christian Artigues}, \bibinfo{person}{Dominique Feillet}, {and} \bibinfo{person}{Didier Josselin}.} \bibinfo{year}{2010}\natexlab{}.
\newblock \showarticletitle{Vehicle routing problems with alternative paths: An application to on-demand transportation}.
\newblock \bibinfo{journal}{\emph{European Journal of Operational Research}} \bibinfo{volume}{204}, \bibinfo{number}{1} (\bibinfo{year}{2010}), \bibinfo{pages}{62--75}.
\newblock


\bibitem[\protect\citeauthoryear{Grinsztajn, Furelos-Blanco, Surana, Bonnet, and Barrett}{Grinsztajn et~al\mbox{.}}{2023}]%
        {grinsztajn2023winner}
\bibfield{author}{\bibinfo{person}{Nathan Grinsztajn}, \bibinfo{person}{Daniel Furelos-Blanco}, \bibinfo{person}{Shikha Surana}, \bibinfo{person}{Cl{\'e}ment Bonnet}, {and} \bibinfo{person}{Tom Barrett}.} \bibinfo{year}{2023}\natexlab{}.
\newblock \showarticletitle{Winner takes it all: Training performant RL populations for combinatorial optimization}.
\newblock \bibinfo{journal}{\emph{Advances in Neural Information Processing Systems}}  \bibinfo{volume}{36} (\bibinfo{year}{2023}), \bibinfo{pages}{48485--48509}.
\newblock


\bibitem[\protect\citeauthoryear{Helsgaun}{Helsgaun}{2017}]%
        {helsgaun2017extension}
\bibfield{author}{\bibinfo{person}{Keld Helsgaun}.} \bibinfo{year}{2017}\natexlab{}.
\newblock \showarticletitle{An extension of the Lin-Kernighan-Helsgaun TSP solver for constrained traveling salesman and vehicle routing problems}.
\newblock \bibinfo{journal}{\emph{Roskilde: Roskilde University}}  \bibinfo{volume}{12} (\bibinfo{year}{2017}), \bibinfo{pages}{966--980}.
\newblock


\bibitem[\protect\citeauthoryear{Hottung and Tierney}{Hottung and Tierney}{2020}]%
        {hottung2020neural}
\bibfield{author}{\bibinfo{person}{Andr{\'e} Hottung} {and} \bibinfo{person}{Kevin Tierney}.} \bibinfo{year}{2020}\natexlab{}.
\newblock \showarticletitle{Neural large neighborhood search for the capacitated vehicle routing problem}.
\newblock In \bibinfo{booktitle}{\emph{ECAI 2020}}. \bibinfo{publisher}{IOS Press}, \bibinfo{pages}{443--450}.
\newblock


\bibitem[\protect\citeauthoryear{Hou, Yang, Su, Wang, and Deng}{Hou et~al\mbox{.}}{2023}]%
        {hou2023generalize}
\bibfield{author}{\bibinfo{person}{Qingchun Hou}, \bibinfo{person}{Jingwei Yang}, \bibinfo{person}{Yiqiang Su}, \bibinfo{person}{Xiaoqing Wang}, {and} \bibinfo{person}{Yuming Deng}.} \bibinfo{year}{2023}\natexlab{}.
\newblock \showarticletitle{Generalize learned heuristics to solve large-scale vehicle routing problems in real-time}. In \bibinfo{booktitle}{\emph{The Eleventh International Conference on Learning Representations}}.
\newblock


\bibitem[\protect\citeauthoryear{Jiang, Cao, Wu, Song, and Zhang}{Jiang et~al\mbox{.}}{2024}]%
        {jiang2024ensemble}
\bibfield{author}{\bibinfo{person}{Yuan Jiang}, \bibinfo{person}{Zhiguang Cao}, \bibinfo{person}{Yaoxin Wu}, \bibinfo{person}{Wen Song}, {and} \bibinfo{person}{Jie Zhang}.} \bibinfo{year}{2024}\natexlab{}.
\newblock \showarticletitle{Ensemble-based deep reinforcement learning for vehicle routing problems under distribution shift}.
\newblock \bibinfo{journal}{\emph{Advances in Neural Information Processing Systems}}  \bibinfo{volume}{36} (\bibinfo{year}{2024}).
\newblock


\bibitem[\protect\citeauthoryear{Jiang, Cao, Wu, and Zhang}{Jiang et~al\mbox{.}}{2023}]%
        {jiang2023multi}
\bibfield{author}{\bibinfo{person}{Yuan Jiang}, \bibinfo{person}{Zhiguang Cao}, \bibinfo{person}{Yaoxin Wu}, {and} \bibinfo{person}{Jie Zhang}.} \bibinfo{year}{2023}\natexlab{}.
\newblock \showarticletitle{Multi-view graph contrastive learning for solving vehicle routing problems}. In \bibinfo{booktitle}{\emph{Uncertainty in Artificial Intelligence}}. PMLR, \bibinfo{pages}{984--994}.
\newblock


\bibitem[\protect\citeauthoryear{Joshi, Laurent, and Bresson}{Joshi et~al\mbox{.}}{2019}]%
        {joshi2019efficient}
\bibfield{author}{\bibinfo{person}{Chaitanya~K Joshi}, \bibinfo{person}{Thomas Laurent}, {and} \bibinfo{person}{Xavier Bresson}.} \bibinfo{year}{2019}\natexlab{}.
\newblock \showarticletitle{An efficient graph convolutional network technique for the travelling salesman problem}.
\newblock \bibinfo{journal}{\emph{arXiv preprint arXiv:1906.01227}} (\bibinfo{year}{2019}).
\newblock


\bibitem[\protect\citeauthoryear{Kim, Park, et~al\mbox{.}}{Kim et~al\mbox{.}}{2021}]%
        {kim2021learning}
\bibfield{author}{\bibinfo{person}{Minsu Kim}, \bibinfo{person}{Jinkyoo Park}, {et~al\mbox{.}}} \bibinfo{year}{2021}\natexlab{}.
\newblock \showarticletitle{Learning collaborative policies to solve np-hard routing problems}.
\newblock \bibinfo{journal}{\emph{Advances in Neural Information Processing Systems}}  \bibinfo{volume}{34} (\bibinfo{year}{2021}), \bibinfo{pages}{10418--10430}.
\newblock


\bibitem[\protect\citeauthoryear{Kim, Park, and Park}{Kim et~al\mbox{.}}{2022}]%
        {kim2022sym}
\bibfield{author}{\bibinfo{person}{Minsu Kim}, \bibinfo{person}{Junyoung Park}, {and} \bibinfo{person}{Jinkyoo Park}.} \bibinfo{year}{2022}\natexlab{}.
\newblock \showarticletitle{Sym-nco: Leveraging symmetricity for neural combinatorial optimization}.
\newblock \bibinfo{journal}{\emph{Advances in Neural Information Processing Systems}}  \bibinfo{volume}{35} (\bibinfo{year}{2022}), \bibinfo{pages}{1936--1949}.
\newblock


\bibitem[\protect\citeauthoryear{Kool, van Hoof, and Welling}{Kool et~al\mbox{.}}{2019}]%
        {kool2018attention}
\bibfield{author}{\bibinfo{person}{Wouter Kool}, \bibinfo{person}{Herke van Hoof}, {and} \bibinfo{person}{Max Welling}.} \bibinfo{year}{2019}\natexlab{}.
\newblock \showarticletitle{Attention, Learn to Solve Routing Problems!}. In \bibinfo{booktitle}{\emph{International Conference on Learning Representations}}.
\newblock


\bibitem[\protect\citeauthoryear{Kwon, Choo, Kim, Yoon, Gwon, and Min}{Kwon et~al\mbox{.}}{2020}]%
        {kwon2020pomo}
\bibfield{author}{\bibinfo{person}{Yeong-Dae Kwon}, \bibinfo{person}{Jinho Choo}, \bibinfo{person}{Byoungjip Kim}, \bibinfo{person}{Iljoo Yoon}, \bibinfo{person}{Youngjune Gwon}, {and} \bibinfo{person}{Seungjai Min}.} \bibinfo{year}{2020}\natexlab{}.
\newblock \showarticletitle{Pomo: Policy optimization with multiple optima for reinforcement learning}.
\newblock \bibinfo{journal}{\emph{Advances in Neural Information Processing Systems}}  \bibinfo{volume}{33} (\bibinfo{year}{2020}), \bibinfo{pages}{21188--21198}.
\newblock


\bibitem[\protect\citeauthoryear{Laporte and Nobert}{Laporte and Nobert}{1983}]%
        {laporte1983branch}
\bibfield{author}{\bibinfo{person}{Gilbert Laporte} {and} \bibinfo{person}{Yves Nobert}.} \bibinfo{year}{1983}\natexlab{}.
\newblock \showarticletitle{A branch and bound algorithm for the capacitated vehicle routing problem}.
\newblock \bibinfo{journal}{\emph{Operations-Research-Spektrum}}  \bibinfo{volume}{5} (\bibinfo{year}{1983}), \bibinfo{pages}{77--85}.
\newblock


\bibitem[\protect\citeauthoryear{Levy, Platt, and Saenko}{Levy et~al\mbox{.}}{2019}]%
        {levy2017learning}
\bibfield{author}{\bibinfo{person}{Andrew Levy}, \bibinfo{person}{Robert Platt}, {and} \bibinfo{person}{Kate Saenko}.} \bibinfo{year}{2019}\natexlab{}.
\newblock \showarticletitle{Hierarchical Reinforcement Learning with Hindsight}. In \bibinfo{booktitle}{\emph{International Conference on Learning Representations}}.
\newblock


\bibitem[\protect\citeauthoryear{Li, Yan, and Wu}{Li et~al\mbox{.}}{2021}]%
        {li2021learning}
\bibfield{author}{\bibinfo{person}{Sirui Li}, \bibinfo{person}{Zhongxia Yan}, {and} \bibinfo{person}{Cathy Wu}.} \bibinfo{year}{2021}\natexlab{}.
\newblock \showarticletitle{Learning to delegate for large-scale vehicle routing}.
\newblock \bibinfo{journal}{\emph{Advances in Neural Information Processing Systems}}  \bibinfo{volume}{34} (\bibinfo{year}{2021}), \bibinfo{pages}{26198--26211}.
\newblock


\bibitem[\protect\citeauthoryear{Liu, Li, and Liu}{Liu et~al\mbox{.}}{2017}]%
        {liu2017capacitated}
\bibfield{author}{\bibinfo{person}{Ling Liu}, \bibinfo{person}{Kunpeng Li}, {and} \bibinfo{person}{Zhixue Liu}.} \bibinfo{year}{2017}\natexlab{}.
\newblock \showarticletitle{A capacitated vehicle routing problem with order available time in e-commerce industry}.
\newblock \bibinfo{journal}{\emph{Engineering Optimization}} \bibinfo{volume}{49}, \bibinfo{number}{3} (\bibinfo{year}{2017}), \bibinfo{pages}{449--465}.
\newblock


\bibitem[\protect\citeauthoryear{Liu, Liu, Niu, Long, Zhang, and Xu}{Liu et~al\mbox{.}}{2024}]%
        {liu20242d}
\bibfield{author}{\bibinfo{person}{Qidong Liu}, \bibinfo{person}{Chaoyue Liu}, \bibinfo{person}{Shaoyao Niu}, \bibinfo{person}{Cheng Long}, \bibinfo{person}{Jie Zhang}, {and} \bibinfo{person}{Mingliang Xu}.} \bibinfo{year}{2024}\natexlab{}.
\newblock \showarticletitle{2D-Ptr: 2D Array Pointer Network for Solving the Heterogeneous Capacitated Vehicle Routing Problem}. In \bibinfo{booktitle}{\emph{Proceedings of the 23rd International Conference on Autonomous Agents and Multiagent Systems}}. \bibinfo{pages}{1238--1246}.
\newblock


\bibitem[\protect\citeauthoryear{Lu, Zhang, and Yang}{Lu et~al\mbox{.}}{2019}]%
        {lu2019learning}
\bibfield{author}{\bibinfo{person}{Hao Lu}, \bibinfo{person}{Xingwen Zhang}, {and} \bibinfo{person}{Shuang Yang}.} \bibinfo{year}{2019}\natexlab{}.
\newblock \showarticletitle{A learning-based iterative method for solving vehicle routing problems}. In \bibinfo{booktitle}{\emph{International conference on learning representations}}.
\newblock


\bibitem[\protect\citeauthoryear{Luo, Lin, Liu, Zhang, and Wang}{Luo et~al\mbox{.}}{2024a}]%
        {luo2024neural}
\bibfield{author}{\bibinfo{person}{Fu Luo}, \bibinfo{person}{Xi Lin}, \bibinfo{person}{Fei Liu}, \bibinfo{person}{Qingfu Zhang}, {and} \bibinfo{person}{Zhenkun Wang}.} \bibinfo{year}{2024}\natexlab{a}.
\newblock \showarticletitle{Neural combinatorial optimization with heavy decoder: Toward large scale generalization}.
\newblock \bibinfo{journal}{\emph{Advances in Neural Information Processing Systems}}  \bibinfo{volume}{36} (\bibinfo{year}{2024}).
\newblock


\bibitem[\protect\citeauthoryear{Luo, Lin, Wang, Xialiang, Yuan, and Zhang}{Luo et~al\mbox{.}}{2024b}]%
        {luo2024self}
\bibfield{author}{\bibinfo{person}{Fu Luo}, \bibinfo{person}{Xi Lin}, \bibinfo{person}{Zhenkun Wang}, \bibinfo{person}{Tong Xialiang}, \bibinfo{person}{Mingxuan Yuan}, {and} \bibinfo{person}{Qingfu Zhang}.} \bibinfo{year}{2024}\natexlab{b}.
\newblock \showarticletitle{Self-Improved Learning for Scalable Neural Combinatorial Optimization}.
\newblock \bibinfo{journal}{\emph{arXiv preprint arXiv:2403.19561}} (\bibinfo{year}{2024}).
\newblock


\bibitem[\protect\citeauthoryear{Ma, Cao, and Chee}{Ma et~al\mbox{.}}{2024}]%
        {ma2024learning}
\bibfield{author}{\bibinfo{person}{Yining Ma}, \bibinfo{person}{Zhiguang Cao}, {and} \bibinfo{person}{Yeow~Meng Chee}.} \bibinfo{year}{2024}\natexlab{}.
\newblock \showarticletitle{Learning to search feasible and infeasible regions of routing problems with flexible neural k-opt}.
\newblock \bibinfo{journal}{\emph{Advances in Neural Information Processing Systems}}  \bibinfo{volume}{36} (\bibinfo{year}{2024}).
\newblock


\bibitem[\protect\citeauthoryear{Ma, Li, Cao, Song, Zhang, Chen, and Tang}{Ma et~al\mbox{.}}{2021}]%
        {ma2021learning}
\bibfield{author}{\bibinfo{person}{Yining Ma}, \bibinfo{person}{Jingwen Li}, \bibinfo{person}{Zhiguang Cao}, \bibinfo{person}{Wen Song}, \bibinfo{person}{Le Zhang}, \bibinfo{person}{Zhenghua Chen}, {and} \bibinfo{person}{Jing Tang}.} \bibinfo{year}{2021}\natexlab{}.
\newblock \showarticletitle{Learning to iteratively solve routing problems with dual-aspect collaborative transformer}.
\newblock \bibinfo{journal}{\emph{Advances in Neural Information Processing Systems}}  \bibinfo{volume}{34} (\bibinfo{year}{2021}), \bibinfo{pages}{11096--11107}.
\newblock


\bibitem[\protect\citeauthoryear{Manchanda, Michel, Drakulic, and Andreoli}{Manchanda et~al\mbox{.}}{2022}]%
        {manchanda2022generalization}
\bibfield{author}{\bibinfo{person}{Sahil Manchanda}, \bibinfo{person}{Sofia Michel}, \bibinfo{person}{Darko Drakulic}, {and} \bibinfo{person}{Jean-Marc Andreoli}.} \bibinfo{year}{2022}\natexlab{}.
\newblock \showarticletitle{On the generalization of neural combinatorial optimization heuristics}. In \bibinfo{booktitle}{\emph{Joint European Conference on Machine Learning and Knowledge Discovery in Databases}}. Springer, \bibinfo{pages}{426--442}.
\newblock


\bibitem[\protect\citeauthoryear{Nazari, Oroojlooy, Snyder, and Tak{\'a}c}{Nazari et~al\mbox{.}}{2018}]%
        {nazari2018reinforcement}
\bibfield{author}{\bibinfo{person}{Mohammadreza Nazari}, \bibinfo{person}{Afshin Oroojlooy}, \bibinfo{person}{Lawrence Snyder}, {and} \bibinfo{person}{Martin Tak{\'a}c}.} \bibinfo{year}{2018}\natexlab{}.
\newblock \showarticletitle{Reinforcement learning for solving the vehicle routing problem}.
\newblock \bibinfo{journal}{\emph{Advances in neural information processing systems}}  \bibinfo{volume}{31} (\bibinfo{year}{2018}).
\newblock


\bibitem[\protect\citeauthoryear{Pan, Jin, Ding, Feng, Zhao, Song, and Bian}{Pan et~al\mbox{.}}{2023}]%
        {pan2023h}
\bibfield{author}{\bibinfo{person}{Xuanhao Pan}, \bibinfo{person}{Yan Jin}, \bibinfo{person}{Yuandong Ding}, \bibinfo{person}{Mingxiao Feng}, \bibinfo{person}{Li Zhao}, \bibinfo{person}{Lei Song}, {and} \bibinfo{person}{Jiang Bian}.} \bibinfo{year}{2023}\natexlab{}.
\newblock \showarticletitle{H-tsp: Hierarchically solving the large-scale traveling salesman problem}. In \bibinfo{booktitle}{\emph{Proceedings of the AAAI Conference on Artificial Intelligence}}, Vol.~\bibinfo{volume}{37}. \bibinfo{pages}{9345--9353}.
\newblock


\bibitem[\protect\citeauthoryear{Pateria, Subagdja, Tan, and Quek}{Pateria et~al\mbox{.}}{2021}]%
        {pateria2021hierarchical}
\bibfield{author}{\bibinfo{person}{Shubham Pateria}, \bibinfo{person}{Budhitama Subagdja}, \bibinfo{person}{Ah-hwee Tan}, {and} \bibinfo{person}{Chai Quek}.} \bibinfo{year}{2021}\natexlab{}.
\newblock \showarticletitle{Hierarchical reinforcement learning: A comprehensive survey}.
\newblock \bibinfo{journal}{\emph{ACM Computing Surveys (CSUR)}} \bibinfo{volume}{54}, \bibinfo{number}{5} (\bibinfo{year}{2021}), \bibinfo{pages}{1--35}.
\newblock


\bibitem[\protect\citeauthoryear{Qiu, Sun, and Yang}{Qiu et~al\mbox{.}}{2022}]%
        {qiu2022dimes}
\bibfield{author}{\bibinfo{person}{Ruizhong Qiu}, \bibinfo{person}{Zhiqing Sun}, {and} \bibinfo{person}{Yiming Yang}.} \bibinfo{year}{2022}\natexlab{}.
\newblock \showarticletitle{Dimes: A differentiable meta solver for combinatorial optimization problems}.
\newblock \bibinfo{journal}{\emph{Advances in Neural Information Processing Systems}}  \bibinfo{volume}{35} (\bibinfo{year}{2022}), \bibinfo{pages}{25531--25546}.
\newblock


\bibitem[\protect\citeauthoryear{Son, Kim, Kim, and Park}{Son et~al\mbox{.}}{2023}]%
        {son2023meta}
\bibfield{author}{\bibinfo{person}{Jiwoo Son}, \bibinfo{person}{Minsu Kim}, \bibinfo{person}{Hyeonah Kim}, {and} \bibinfo{person}{Jinkyoo Park}.} \bibinfo{year}{2023}\natexlab{}.
\newblock \showarticletitle{Meta-sage: Scale meta-learning scheduled adaptation with guided exploration for mitigating scale shift on combinatorial optimization}. In \bibinfo{booktitle}{\emph{International Conference on Machine Learning}}. PMLR, \bibinfo{pages}{32194--32210}.
\newblock


\bibitem[\protect\citeauthoryear{Vidal, Crainic, Gendreau, Lahrichi, and Rei}{Vidal et~al\mbox{.}}{2012}]%
        {vidal2012hybrid}
\bibfield{author}{\bibinfo{person}{Thibaut Vidal}, \bibinfo{person}{Teodor~Gabriel Crainic}, \bibinfo{person}{Michel Gendreau}, \bibinfo{person}{Nadia Lahrichi}, {and} \bibinfo{person}{Walter Rei}.} \bibinfo{year}{2012}\natexlab{}.
\newblock \showarticletitle{A hybrid genetic algorithm for multidepot and periodic vehicle routing problems}.
\newblock \bibinfo{journal}{\emph{Operations Research}} \bibinfo{volume}{60}, \bibinfo{number}{3} (\bibinfo{year}{2012}), \bibinfo{pages}{611--624}.
\newblock


\bibitem[\protect\citeauthoryear{Vinyals, Fortunato, and Jaitly}{Vinyals et~al\mbox{.}}{2015}]%
        {vinyals2015pointer}
\bibfield{author}{\bibinfo{person}{Oriol Vinyals}, \bibinfo{person}{Meire Fortunato}, {and} \bibinfo{person}{Navdeep Jaitly}.} \bibinfo{year}{2015}\natexlab{}.
\newblock \showarticletitle{Pointer networks}.
\newblock \bibinfo{journal}{\emph{Advances in neural information processing systems}}  \bibinfo{volume}{28} (\bibinfo{year}{2015}).
\newblock


\bibitem[\protect\citeauthoryear{Williams}{Williams}{1992}]%
        {williams1992simple}
\bibfield{author}{\bibinfo{person}{Ronald~J Williams}.} \bibinfo{year}{1992}\natexlab{}.
\newblock \showarticletitle{Simple statistical gradient-following algorithms for connectionist reinforcement learning}.
\newblock \bibinfo{journal}{\emph{Machine learning}}  \bibinfo{volume}{8} (\bibinfo{year}{1992}), \bibinfo{pages}{229--256}.
\newblock


\bibitem[\protect\citeauthoryear{Xin, Song, Cao, and Zhang}{Xin et~al\mbox{.}}{2020}]%
        {xin2020step}
\bibfield{author}{\bibinfo{person}{Liang Xin}, \bibinfo{person}{Wen Song}, \bibinfo{person}{Zhiguang Cao}, {and} \bibinfo{person}{Jie Zhang}.} \bibinfo{year}{2020}\natexlab{}.
\newblock \showarticletitle{Step-wise deep learning models for solving routing problems}.
\newblock \bibinfo{journal}{\emph{IEEE Transactions on Industrial Informatics}} \bibinfo{volume}{17}, \bibinfo{number}{7} (\bibinfo{year}{2020}), \bibinfo{pages}{4861--4871}.
\newblock


\bibitem[\protect\citeauthoryear{Xin, Song, Cao, and Zhang}{Xin et~al\mbox{.}}{2021}]%
        {xin2021neurolkh}
\bibfield{author}{\bibinfo{person}{Liang Xin}, \bibinfo{person}{Wen Song}, \bibinfo{person}{Zhiguang Cao}, {and} \bibinfo{person}{Jie Zhang}.} \bibinfo{year}{2021}\natexlab{}.
\newblock \showarticletitle{Neurolkh: Combining deep learning model with lin-kernighan-helsgaun heuristic for solving the traveling salesman problem}.
\newblock \bibinfo{journal}{\emph{Advances in Neural Information Processing Systems}}  \bibinfo{volume}{34} (\bibinfo{year}{2021}), \bibinfo{pages}{7472--7483}.
\newblock


\bibitem[\protect\citeauthoryear{Ye, Wang, Liang, Cao, Li, and Li}{Ye et~al\mbox{.}}{2024}]%
        {ye2024glop}
\bibfield{author}{\bibinfo{person}{Haoran Ye}, \bibinfo{person}{Jiarui Wang}, \bibinfo{person}{Helan Liang}, \bibinfo{person}{Zhiguang Cao}, \bibinfo{person}{Yong Li}, {and} \bibinfo{person}{Fanzhang Li}.} \bibinfo{year}{2024}\natexlab{}.
\newblock \showarticletitle{Glop: Learning global partition and local construction for solving large-scale routing problems in real-time}. In \bibinfo{booktitle}{\emph{Proceedings of the AAAI Conference on Artificial Intelligence}}, Vol.~\bibinfo{volume}{38}. \bibinfo{pages}{20284--20292}.
\newblock


\bibitem[\protect\citeauthoryear{Zhang, Kim, O'Donoghue, and Boyd}{Zhang et~al\mbox{.}}{2021}]%
        {zhang2021sample}
\bibfield{author}{\bibinfo{person}{Junzi Zhang}, \bibinfo{person}{Jongho Kim}, \bibinfo{person}{Brendan O'Donoghue}, {and} \bibinfo{person}{Stephen Boyd}.} \bibinfo{year}{2021}\natexlab{}.
\newblock \showarticletitle{Sample efficient reinforcement learning with REINFORCE}. In \bibinfo{booktitle}{\emph{Proceedings of the AAAI conference on artificial intelligence}}, Vol.~\bibinfo{volume}{35}. \bibinfo{pages}{10887--10895}.
\newblock


\bibitem[\protect\citeauthoryear{Zheng, Zhou, Xialiang, Yuan, and Wang}{Zheng et~al\mbox{.}}{2024}]%
        {zheng2024udc}
\bibfield{author}{\bibinfo{person}{Zhi Zheng}, \bibinfo{person}{Changliang Zhou}, \bibinfo{person}{Tong Xialiang}, \bibinfo{person}{Mingxuan Yuan}, {and} \bibinfo{person}{Zhenkun Wang}.} \bibinfo{year}{2024}\natexlab{}.
\newblock \showarticletitle{{UDC}: A Unified Neural Divide-and-Conquer Framework for Large-Scale Combinatorial Optimization Problems}. In \bibinfo{booktitle}{\emph{The Thirty-eighth Annual Conference on Neural Information Processing Systems}}.
\newblock


\bibitem[\protect\citeauthoryear{Zhou, Wu, Song, Cao, and Zhang}{Zhou et~al\mbox{.}}{2023}]%
        {zhou2023towards}
\bibfield{author}{\bibinfo{person}{Jianan Zhou}, \bibinfo{person}{Yaoxin Wu}, \bibinfo{person}{Wen Song}, \bibinfo{person}{Zhiguang Cao}, {and} \bibinfo{person}{Jie Zhang}.} \bibinfo{year}{2023}\natexlab{}.
\newblock \showarticletitle{Towards omni-generalizable neural methods for vehicle routing problems}. In \bibinfo{booktitle}{\emph{International Conference on Machine Learning}}. PMLR, \bibinfo{pages}{42769--42789}.
\newblock


\bibitem[\protect\citeauthoryear{Zong, Wang, Wang, Zheng, and Li}{Zong et~al\mbox{.}}{2022}]%
        {zong2022rbg}
\bibfield{author}{\bibinfo{person}{Zefang Zong}, \bibinfo{person}{Hansen Wang}, \bibinfo{person}{Jingwei Wang}, \bibinfo{person}{Meng Zheng}, {and} \bibinfo{person}{Yong Li}.} \bibinfo{year}{2022}\natexlab{}.
\newblock \showarticletitle{Rbg: Hierarchically solving large-scale routing problems in logistic systems via reinforcement learning}. In \bibinfo{booktitle}{\emph{Proceedings of the 28th ACM SIGKDD Conference on Knowledge Discovery and Data Mining}}. \bibinfo{pages}{4648--4658}.
\newblock


\end{thebibliography}

%%%%%%%%%%%%%%%%%%%%%%%%%%%%%%%%%%%%%%%%%%%%%%%%%%%%%%%%%%%%%%%%%%%%%%%%

\newpage
\renewcommand{\algorithmicrequire}{\textbf{Input:}}
\renewcommand{\algorithmicensure}{\textbf{Output:}}

\setcounter{section}{0}
\setcounter{equation}{0}
\renewcommand\thesection{\Alph{section}} 

\section{Experiments}

In this section, we present the specifics of both training and evaluation for our proposed HLGP methods, including the problem settings, hyperparameter configurations, and practical training protocols. Subsequently, we conduct both a hyperparameter study and an ablation study to scrutinize the influence of key hyperparameters and modules in the HLGP framework on overall performance. Additionally, we include visualization results for intuitive comparisons. Furthermore, detailed implementations of baseline methods are provided.

\subsection{Training Details}

During the training phase for both RL-driven and SL-driven HLGP, we strictly adhere to the standard problem settings used in GLOP~\cite{ye2024glop}. In these settings, each CVRP instance consists of 1000 nodes uniformly distributed within a unit square area. The vehicle capacity, denoted as $D$, is fixed at 200. The maximum permissible number of times the vehicle returns to the depot is accordingly calculated as $\lceil \sum_{i=1}^{N_{v}} d_{i} / D \rceil + 1$, where $N_{v}$ represents the number of customer nodes and $d_{i}$ signifies the demands associated with each customer node. The demand $d_{i}$ is uniformly sampled from the range $\{1, \ldots, 9 \}$.

The RL-driven HLGP adopts the same isomorphic GNN architecture~\cite{joshi2019efficient, qiu2022dimes} as employed in GLOP~\cite{ye2024glop} for both the global and local partition policies to generate partition heatmaps. These heatmaps, specific to each CVRP instance, aid in progressively decoding feasible partition solutions that adhere to the predefined CVRP constraints. Subsequently, the same pretrained local permutation policy, as utilized in GLOP~\cite{ye2024glop}, is applied to resolve the subproblems derived from these feasible partition solutions. The resulting subsolutions to these subproblems constitute the overall CVRP solution. Consequently, the cost of the CVRP solution corresponds to the evaluation of the feasible partition solution.

The training phase of the RL-driven HLGP comprises 20 epochs, with each epoch consisting of 256 iterations. During each iteration, 5 CVRP instances are randomly generated to train both the global and local partition policies. For detailed training procedures, please refer to the Algorithm Pseudocodes. Consistent with the approach in GLOP~\cite{ye2024glop}, 20 solutions are simultaneously generated for both global and local partition policies to calculate the baseline, thereby reducing gradient variance. The Adam Optimizer is employed with an initial learning rate of $3 \times 10^{-4}$ for the involved partition policies. Additionally, a Cosine Annealing scheduler is utilized to gradually decrease the learning rate as the training epoch progresses. Similarly, we impose a constraint to maintain the maximum L2 norm of gradients below 1. The RL-driven HLGP is trained on a NVIDIA RTX 3090 GPU and an INTEL(R) XEON(R) GOLD 5218R CPU@2.10GHz.

\begin{table*}[hbt]
    \caption{Hyperparameter study for the number of levels in RL-driven HLGP (denoted as $K$). ``G" denotes the Gaussian distribution. ``U" denotes the Uniform distribution.}
    \setlength\tabcolsep{0.9pt}
    \renewcommand{\arraystretch}{0.9}
    \small
    \centering
    \begin{tabular}{cccccccccccccccc}
    \toprule[1.2pt]
       \multirow{2}{*}{RL-driven HLGP}  & \multicolumn{3}{c}{$K=1$} & \multicolumn{3}{c}{$K=2$} & \multicolumn{3}{c}{$K=3$} & \multicolumn{3}{c}{$K=4$} & \multicolumn{3}{c}{$K=5$} \\
    \cmidrule(r){2-4} \cmidrule(r){5-7} \cmidrule(r){8-10} \cmidrule(r){11-13} \cmidrule(r){14-16}
        & \normalsize{$Avg.$} & \normalsize{$Std.$} & \normalsize{$Time$} & \normalsize{$Avg.$} & \normalsize{$Std.$} & \normalsize{$Time$} & \normalsize{$Avg.$} & \normalsize{$Std.$} & \normalsize{$Time$} & \normalsize{$Avg.$} & \normalsize{$Std.$} & \normalsize{$Time$} & \normalsize{$Avg.$} & \normalsize{$Std.$} & \normalsize{$Time$} \\
    \midrule[0.7pt]
    CVRP1K+G & 35.13 & 1.26 & 1.18s & 34.84 & 1.26 & 1.78s	& 34.69 & 1.25 & 2.37s & 34.63 & 1.25 & 2.91s & 34.58 & 1.26 & 3.47s \\
    CVRP2K+U & 60.32 & 1.18 & 3.94s & 59.85 & 1.18 & 5.50s & 59.70 & 1.17 & 7.07s & 59.61 & 1.16 & 8.51s & 59.58 & 1.17 & 10.03s \\
    \midrule[0.7pt]
    \multirow{2}{*}{RL-driven HLGP}  & \multicolumn{3}{c}{$K=6$} & \multicolumn{3}{c}{$K=7$} & \multicolumn{3}{c}{$K=8$} & \multicolumn{3}{c}{$K=9$} & \multicolumn{3}{c}{$K=10$} \\
    \cmidrule(r){2-4} \cmidrule(r){5-7} \cmidrule(r){8-10} \cmidrule(r){11-13} \cmidrule(r){14-16}
        & \normalsize{$Avg.$} & \normalsize{$Std.$} & \normalsize{$Time$} & \normalsize{$Avg.$} & \normalsize{$Std.$} & \normalsize{$Time$} & \normalsize{$Avg.$} & \normalsize{$Std.$} & \normalsize{$Time$} & \normalsize{$Avg.$} & \normalsize{$Std.$} & \normalsize{$Time$} & \normalsize{$Avg.$} & \normalsize{$Std.$} & \normalsize{$Time$} \\
    \midrule[0.7pt]
    CVRP1K+G & 34.56 & 1.25 & 4.10s & 34.51 & 1.24 & 4.57s	& 34.51 & 1.24 & 5.11s & 34.51 & 1.24 & 5.77s & 34.51 & 1.24 & 6.31s \\
    CVRP2K+U & 59.56 & 1.17 & 11.40s & 59.55 & 1.17 & 13.00s & 59.50 & 1.17 & 14.70s & 59.50 & 1.17 & 16.39s & 59.50 & 1.17 & 17.66s \\
    \bottomrule[1.2pt]
    \end{tabular}
    \label{tab:rl_K}
    %\vspace{-8pt}
\end{table*}

\begin{table*}[hbt]
    \caption{Hyperparameter study for the number of levels in SL-driven HLGP (denoted as $K$). ``G" denotes the Gaussian distribution. ``U" denotes the Uniform distribution.}
    \setlength\tabcolsep{0.9pt}
    \renewcommand{\arraystretch}{0.9}
    \small
    \centering
    \begin{tabular}{cccccccccccccccc}
    \toprule[1.2pt]
       \multirow{2}{*}{SL-driven HLGP}  & \multicolumn{3}{c}{$K=1$} & \multicolumn{3}{c}{$K=2$} & \multicolumn{3}{c}{$K=3$} & \multicolumn{3}{c}{$K=4$} & \multicolumn{3}{c}{$K=5$} \\
    \cmidrule(r){2-4} \cmidrule(r){5-7} \cmidrule(r){8-10} \cmidrule(r){11-13} \cmidrule(r){14-16}
        & \normalsize{$Avg.$} & \normalsize{$Std.$} & \normalsize{$Time$} & \normalsize{$Avg.$} & \normalsize{$Std.$} & \normalsize{$Time$} & \normalsize{$Avg.$} & \normalsize{$Std.$} & \normalsize{$Time$} & \normalsize{$Avg.$} & \normalsize{$Std.$} & \normalsize{$Time$} & \normalsize{$Avg.$} & \normalsize{$Std.$} & \normalsize{$Time$} \\
    \midrule[0.7pt]
    CVRP1K+G & 32.65 & 1.21 & 4.38s & 32.60 & 1.21 & 5.06s	& 32.58 & 1.21 & 5.78s & 32.56 & 1.21 & 6.49s & 32.55 & 1.21 & 7.21s \\
    CVRP2K+U & 57.73 & 1.80 & 19.54s & 57.69 & 1.08 & 23.48s & 57.67 & 1.08 & 26.96s & 57.67 & 1.08 & 29.97s & 57.67 & 1.08 & 32.40s \\
    \midrule[0.7pt]
    \multirow{2}{*}{SL-driven HLGP}  & \multicolumn{3}{c}{$K=6$} & \multicolumn{3}{c}{$K=7$} & \multicolumn{3}{c}{$K=8$} & \multicolumn{3}{c}{$K=9$} & \multicolumn{3}{c}{$K=10$} \\
    \cmidrule(r){2-4} \cmidrule(r){5-7} \cmidrule(r){8-10} \cmidrule(r){11-13} \cmidrule(r){14-16}
        & \normalsize{$Avg.$} & \normalsize{$Std.$} & \normalsize{$Time$} & \normalsize{$Avg.$} & \normalsize{$Std.$} & \normalsize{$Time$} & \normalsize{$Avg.$} & \normalsize{$Std.$} & \normalsize{$Time$} & \normalsize{$Avg.$} & \normalsize{$Std.$} & \normalsize{$Time$} & \normalsize{$Avg.$} & \normalsize{$Std.$} & \normalsize{$Time$} \\
    \midrule[0.7pt]
    CVRP1K+G & 32.55 & 1.21 & 7.92s & 32.55 & 1.21 & 8.62s	& 32.55 & 1.21 & 9.34s & 32.55 & 1.21 & 10.05s & 32.54 & 1.21 & 10.76s \\
    CVRP2K+U & 57.66 & 1.08 & 34.96s & 57.66 & 1.08 & 37.60s & 57.66 & 1.08 & 40.00s & 57.66 & 1.08 & 42.37s & 57.66 & 1.08 & 44.77s \\
    \bottomrule[1.2pt]
    \end{tabular}
    \label{tab:sl_K}
    %\vspace{-8pt}
\end{table*}

The SL-driven HLGP leverages the same variant Transformer architecture utilized in BQ~\cite{drakulic2024bq} for both the global and local partition policies to incrementally generate the partition solution. Although BQ~\cite{drakulic2024bq} was initially designed to serve as a neural solver for producing the CVRP solution directly, it can also be repurposed to generate the partition solution. This adaptability arises from the fact that each subtour in the CVRP solution can be extracted by rearranging the node sequence within the subgraph of the partition solution. Consequently, the decoded node sequence from BQ~\cite{drakulic2024bq} can be interpreted as the partition solution, regardless of the specific order of decoding. This justifies that any neural solver crafted for generating CVRP solutions can be repurposed as the partition policy. Subsequently, the same pretrained local permutation policy utilized in the RL-driven HLGP is applied to address the subproblems derived from partition policies. Similarly, the cost of the CVRP solution corresponds to the evaluation of the partition solution.

The training phase of the SL-driven HLGP consists of two stages. Notably, given that the two partition policies are initialized randomly, within our proposed multi-level hierarchical learning (HL) framework, incorporating beam search into the global and local partition policies at each level to generate partition solutions as labels for large-scale instances does not ensure that these solutions are of sufficiently high quality to effectively supervise the training of the involved partition policies. Additionally, there are no optimal solvers specifically tailored for the partition task in the context of CVRP. However, as previously mentioned, the CVRP solution can be regarded as a partition solution, irrespective of the node order, with the optimal CVRP solution being synonymous with the optimal partition solution. Thus, in the first stage, we employ curriculum learning to train both the global and local partition policies on CVRP instances comprising 100 nodes. The labels for the small-scale instances of 100 nodes are generated from the procedure employed in BQ~\cite{drakulic2024bq}, and we adhere to the training methodology outlined in BQ~\cite{drakulic2024bq} during this stage. In the second stage, we use CVRP instances with 1000 nodes to train the partition policies. For detailed training procedures in this stage, please refer to the Algorithm Pseudocodes. This stage comprises 20 epochs, each consisting of 500 iterations. Within each epoch, 100 problem instances with 1000 nodes are randomly sampled. The corresponding labels are generated by executing the involved policies with beam search at each level within our proposed multi-level HL framework (as illustrated in Figure 2(b) in the main body), with a beam size of 16. These labeled instances constitute the training dataset. During each iteration, batches of data from the training dataset are utilized to train the global partition policy, with a batch size set at 50. Additionally, we employ the Adam Optimizer with an initial learning rate of $1 \times 10^{-5}$. The learning rate is subject to decay by a factor of 0.9 every 5 update steps. We enforce a constraint to keep the maximum L2 norm of gradients below 1. The coefficients for the L2 regularization terms in the loss function are set at $1 \times 10^{-6}$, denoted as $\lambda_{G}=\lambda_{L}=1 \times 10^{-6}$, which aligns with common practices in Transformer-based neural solvers~\cite{kwon2020pomo,kim2022sym,zhou2023towards,gao2023towards}. The SL-driven HLGP is trained on 4 NVIDIA RTX 3090 GPUs and an INTEL(R) XEON(R) GOLD 5218R CPU@2.10GHz.

\subsection{Evaluation Details}

During the evaluation phase, our focus is on assessing the generalization capabilities of various models. Therefore, we utilize diverse datasets with varying scales and distributions. Each evaluation dataset can specify the number of customer nodes as 1000, 2000, 5000, 7000, or 10000. The customer nodes in each dataset are distributed according to a Uniform distribution, a Gaussian distribution, an explosion pattern, or a rotation pattern. Except for the dataset with 1000 customer nodes setting capacity as 200, the capacity for the other datasets is set to 300. Each dataset comprises 128 instances. The process of generating problem instances aligns with the methodologies outlined in~\cite{kool2018attention, zhou2023towards}. In addition, for comparison, the metrics include the average solution costs ($Avg.$), the standard deviation of solution costs ($Std.$), and average inference time ($Time$). 
For both RL-driven and SL-driven HLGP, the number of levels, denoted as $K$ is set as 5 across various evaluation settings. Additionally, the beam size in SL-driven HLGP for the evaluation is set as 16, 16, 8, 4, and 4 for the CVRP datasets with node counts of 1000, 2000, 5000, 7000, and 10000, respectively.

\subsection{Hyperparameter Studies}

We begin by examining the influence of the number of levels in the multi-level HL framework, denoted as $K$, on the performance of both RL-driven and SL-driven HLGP. In Table~\ref{tab:rl_K}, we assess RL-driven HLGP using the CVRP1K dataset with Gaussian distributed nodes (CVRP1K+G) and the CVRP2K dataset with uniformly distributed nodes (CVRP2K+U). Here, $K$ ranges from 1 to 10. It is evident that the average costs gradually converge to 34.51 when $K=7$ for the CVRP1K+G dataset and to 59.50 when $K=8$ for the CVRP2K+U dataset. Concurrently, the time consumption increases as the number of levels rises. Table~\ref{tab:sl_K} illustrates the performance variations of SL-driven HLGP on the CVRP1K+G and CVRP2K+U datasets as $K$ varies. Notably, for the CVRP1K+G dataset, the average cost has already converged to 32.55 when $K=5$, while for the CVRP2K+U dataset, the convergent average cost is 57.66 at $K=6$. Likewise, the time taken for computation escalates with the increase in the number of levels. In practice, to trade off the performance against efficiency, we select $K=5$ for both RL-driven and SL-driven HLGP.

We then investigate the impact of the coefficients of the entropy terms corresponding to the global and local partition policies, denoted as $\lambda_{G}$ and $\lambda_{L}$, in the loss function of RL-driven HLGP. Since the value of $\lambda_{G}$ directly influences the performance of the global partition policy, as shown in Table~\ref{tab:rl_lambda_G}, we analyze how the performance of the global partition policy varies across four datasets with node numbers ranging from 1000 to 7000 as $\lambda_{G}$ changes. Notably, setting $\lambda_{G}$ to 0.1 consistently yields the best performance across all datasets. In Table~\ref{tab:rl_lambda_L}, we display the performance of RL-driven HLGP on the same four datasets as $\lambda_{L}$ varies. The model with $\lambda_{L}=0.005$ excels on three out of four datasets. Therefore, we decide to set $\lambda_{G}=0.1$ and $\lambda_{L}=0.005$. It is worth noting that the coefficients of L2 regularization terms in the loss function are commonly set to $1 \times 10^{-6}$ in Transformer-based neural solvers~\cite{kwon2020pomo,kim2022sym,zhou2023towards,gao2023towards}. Therefore, we maintain this consistent setting for the coefficients of the L2 regularization terms for both the global and local partition policies, denoted as $\lambda_{G}=\lambda_{L}=1 \times 10^{-6}$. This configuration has proven effective in our SL-driven HLGP setup.

\begin{table}[t]
    \caption{Hyperparameter study for $\lambda_{G}$ in RL-driven HLGP on uniformly distributed CVRP instances.}
    \setlength\tabcolsep{0.9pt}
    \renewcommand{\arraystretch}{0.9}
    \small
    \centering
    \begin{tabular}{lcccccccc}
    \toprule[1.2pt]
       \multirow{2}{*}{$\lambda_{G}$}  & \multicolumn{2}{c}{CVRP1K} & \multicolumn{2}{c}{CVRP2K} & \multicolumn{2}{c}{CVRP5K} & \multicolumn{2}{c}{CVRP7K} \\
    \cmidrule(r){2-3} \cmidrule(r){4-5} \cmidrule(r){6-7} \cmidrule(r){8-9} 
        & \normalsize{$Avg.$} & \normalsize{$Std.$} & \normalsize{$Avg.$} & \normalsize{$Std.$} & \normalsize{$Avg.$} & \normalsize{$Std.$} & \normalsize{$Avg.$} & \normalsize{$Std.$} \\
    \midrule[0.7pt]
0.1 & 46.40 & 0.91 & 63.55 & 1.19 & 138.37 & 3.51 & 187.91 & 4.99 \\
0.05 & 46.48 & 0.82 & 63.68 & 1.21 & 138.68 & 3.57 & 188.65 & 5.09 \\
0.01 & 46.51 & 0.92 & 63.69 & 1.19 & 139.02 & 3.57 & 189.31 & 5.13 \\
0.005 & 46.54 & 0.92 & 63.77 & 1.20 & 138.74 & 3.54 & 188.71 & 5.05 \\
0.001 & 46.64 & 0.93 & 63.84 & 1.20 & 139.49 & 3.61 & 190.13 & 5.18 \\
0.0005 & 46.54 & 0.92 & 63.78 & 1.20 & 139.20 & 3.62 & 189.80 & 5.23 \\
0.0001 & 46.60 & 0.92 & 63.70 & 1.20 & 138.89 & 3.60 & 189.04 & 5.19 \\
0.0 & 46.55 & 0.94 & 63.75 & 1.21 & 139.16 & 3.58 & 189.30 & 5.14 \\
    \bottomrule[1.2pt]
    \end{tabular}
    \label{tab:rl_lambda_G}
    %\vspace{-8pt}
\end{table}

\begin{table}[t]
    \caption{Hyperparameter study for $\lambda_{L}$ in RL-driven HLGP on uniformly distributed CVRP instances.}
    \setlength\tabcolsep{0.9pt}
    \renewcommand{\arraystretch}{0.9}
    \small
    \centering
    \begin{tabular}{lcccccccc}
    \toprule[1.2pt]
       \multirow{2}{*}{$\lambda_{L}$}  & \multicolumn{2}{c}{CVRP1K} & \multicolumn{2}{c}{CVRP2K} & \multicolumn{2}{c}{CVRP5K} & \multicolumn{2}{c}{CVRP7K} \\
    \cmidrule(r){2-3} \cmidrule(r){4-5} \cmidrule(r){6-7} \cmidrule(r){8-9} 
        & \normalsize{$Avg.$} & \normalsize{$Std.$} & \normalsize{$Avg.$} & \normalsize{$Std.$} & \normalsize{$Avg.$} & \normalsize{$Std.$} & \normalsize{$Avg.$} & \normalsize{$Std.$} \\
    \midrule[0.7pt]
0.1 & 43.92 & 0.82 & 59.81 & 1.11 & 128.31 & 3.04 & 173.78 & 4.36 \\
0.05 & 43.91 & 0.82 & 59.73 & 1.12 & 128.16 & 3.00 & 173.63 & 4.35 \\
0.01 & 43.91 & 0.81 & 59.80 & 1.12 & 128.52 & 3.01 & 174.02 & 4.32 \\
0.005 & 43.78 & 0.85 & 59.58 & 1.17 & 128.12 & 3.06 & 173.71 & 4.39 \\
0.001 & 44.01 & 0.81 & 60.01 & 1.13 & 128.38 & 3.02 & 173.40 & 4.22 \\
0.0005 & 44.05 & 0.82 & 60.14 & 1.11 & 128.38 & 3.03 & 173.35 & 4.23 \\
0.0001 & 44.47 & 0.94 & 60.14 & 1.25 & 133.39 & 3.41 & 182.30 & 4.78 \\
0.0 & 43.94 & 0.86 & 59.70 & 1.17 & 129.07 & 3.08 & 175.10 & 4.27 \\
    \bottomrule[1.2pt]
    \end{tabular}
    \label{tab:rl_lambda_L}
    %\vspace{-8pt}
\end{table}

\subsection{Ablation Studies}

\begin{table*}[hbt]
    \caption{Ablation study for RL-driven HLGP.}
    \small
    \centering
    \begin{tabular}{lcccccccc}
    \toprule[1.2pt]
       \multirow{2}{*}{RL-driven HLGP}  & \multicolumn{2}{c}{glob.} & \multicolumn{2}{c}{glob.+subp.} & \multicolumn{2}{c}{glob.+loc.} & \multicolumn{2}{c}{glob.+loc.+subp.} \\
       \cmidrule(r){2-3} \cmidrule(r){4-5} \cmidrule(r){6-7} \cmidrule(r){8-9} 
        & \normalsize{$Avg.$} & \normalsize{$Std.$} & \normalsize{$Avg.$} & \normalsize{$Std.$} & \normalsize{$Avg.$} & \normalsize{$Std.$} & \normalsize{$Avg.$} & \normalsize{$Std.$} \\
    \midrule[0.7pt]
CVRP1K+G & 39.12 & 1.41 & 37.98 & 1.35 & 35.04 & 1.28 & 34.58 & 1.26 \\
CVRP2K+U & 64.38 & 1.25 & 63.55 & 1.19 & 60.33 & 1.14 & 59.58 & 1.17 \\
    \bottomrule[1.2pt]
    \end{tabular}
    \label{tab:rl_abl}
    %\vspace{-8pt}
\end{table*}

\begin{table*}[hbt]
    \caption{Ablation study for SL-driven HLGP.}
    \small
    \centering
    \begin{tabular}{lcccccccc}
    \toprule[1.2pt]
       \multirow{2}{*}{SL-driven HLGP}  & \multicolumn{2}{c}{SS.+glob.} & \multicolumn{2}{c}{SS.+LS.+glob.} & \multicolumn{2}{c}{SS.+glob.+loc.} & \multicolumn{2}{c}{SS.+LS.+glob.+loc.} \\
       \cmidrule(r){2-3} \cmidrule(r){4-5} \cmidrule(r){6-7} \cmidrule(r){8-9} 
        & \normalsize{$Avg.$} & \normalsize{$Std.$} & \normalsize{$Avg.$} & \normalsize{$Std.$} & \normalsize{$Avg.$} & \normalsize{$Std.$} & \normalsize{$Avg.$} & \normalsize{$Std.$} \\
    \midrule[0.7pt]
CVRP1K+G & 33.28 & 1.26 & 32.73 & 1.21 & 32.88 & 1.23 & 32.55 & 1.21 \\
CVRP2K+U & 59.51 & 1.09 & 57.77 & 1.08 & 59.10 & 1.09 & 57.67 & 1.08 \\
    \bottomrule[1.2pt]
    \end{tabular}
    \label{tab:sl_abl}
    %\vspace{-8pt}
\end{table*}

\begin{table*}[hbt]
    \caption{Time overheads analysis for HLGP.}
    \small
    \centering
    \begin{tabular}{lcccccc}
    \toprule[1.2pt]
       \multirow{2}{*}{RL-driven HLGP} & \multicolumn{3}{c}{global partition process} & \multicolumn{3}{c}{overall partition process} \\
       \cmidrule(r){2-4} \cmidrule(r){5-7}  
        & \normalsize{$Avg.$} & \normalsize{$Std.$} & \normalsize{$Time$} & \normalsize{$Avg.$} & \normalsize{$Std.$} & \normalsize{$Time$} \\
    \midrule[0.7pt]
CVRP1K+G & 37.98 & 1.35 & 0.58s & 34.58 & 1.26 & 3.47s \\
CVRP2K+U & 63.55 & 1.19 & 2.37s & 59.58 & 1.17 & 10.03s \\
    \midrule[0.7pt]
    \multirow{2}{*}{SL-driven HLGP} & \multicolumn{3}{c}{global partition process} & \multicolumn{3}{c}{overall partition process} \\
       \cmidrule(r){2-4} \cmidrule(r){5-7}  
        & \normalsize{$Avg.$} & \normalsize{$Std.$} & \normalsize{$Time$} & \normalsize{$Avg.$} & \normalsize{$Std.$} & \normalsize{$Time$} \\
    \midrule[0.7pt]
        CVRP1K+G & 32.73 & 1.21 & 3.69s & 32.55 & 1.21 & 7.21s \\
        CVRP2K+U & 57.77 & 1.08 & 15.11s & 57.67 & 1.08 & 32.40s \\
    \bottomrule[1.2pt]
    \end{tabular}
    \label{tab:overhead}
    %\vspace{-8pt}
\end{table*}

In this section, we first evaluate the individual contributions of each module in RL-driven HLGP on the CVRP1K+G and CVRP2K+U datasets. In Table~\ref{tab:rl_abl}, ``glob." denotes that only the global partition policy is utilized, trained solely on randomly sampled CVRP1K instances. ``glob.+subp." indicates that the global partition policy is trained on both the random CVRP1K instances and the encountered subproblems during training. On the other hand, ``glob.+loc." and ``glob.+loc.+subp." signify that these models incorporate the local partition policy to progressively rectify misclusterings. It is evident from Table~\ref{tab:rl_abl} that both ``subp" and ``loc." modules contribute to performance enhancement. However, the ``loc." module notably makes a more substantial contribution to the final performance on both evaluation datasets. Additionally, Figure~\ref{fig:rl_curves} presents the training and validation curves of the global partition policy of RL-driven HLGP. It is clear that incorporating the encountered subproblems during training for the global partition policy can expedite the training process.

In Table~\ref{tab:sl_abl}, we explore the individual contributions of each module in SL-driven HLGP on the CVRP1K+G and CVRP2K+U datasets. "SS.+glob." signifies that only the global partition policy trained on small-scale instances (100 nodes) is used for evaluation. "SS.+LL.+glob." indicates that the global partition policy is additionally trained on large-scale instances (1000 nodes). Similarly, "SS.+glob.+loc." and "SS.+LS.+glob.+loc." represent models incorporating the local partition policy. The table indicates that when evaluating the CVRP1K+G dataset, the "LS." and "loc." modules roughly make the equivalent contributions to the performance. However, on the larger-scale dataset (CVRP2K+U), the "LS." module clearly outweighs the "loc." module in importance. This suggests that our proposed multi-level HL framework, coupled with beam search for each involved policy, can indeed yield dependable high-quality solutions serving as labels for supervised training, thereby enhancing the generalization capability of the partition policy.

\begin{figure}[t]
\centering
\subfigure[]{\label{fig:rl_curve_1}\includegraphics[width=0.85\columnwidth]{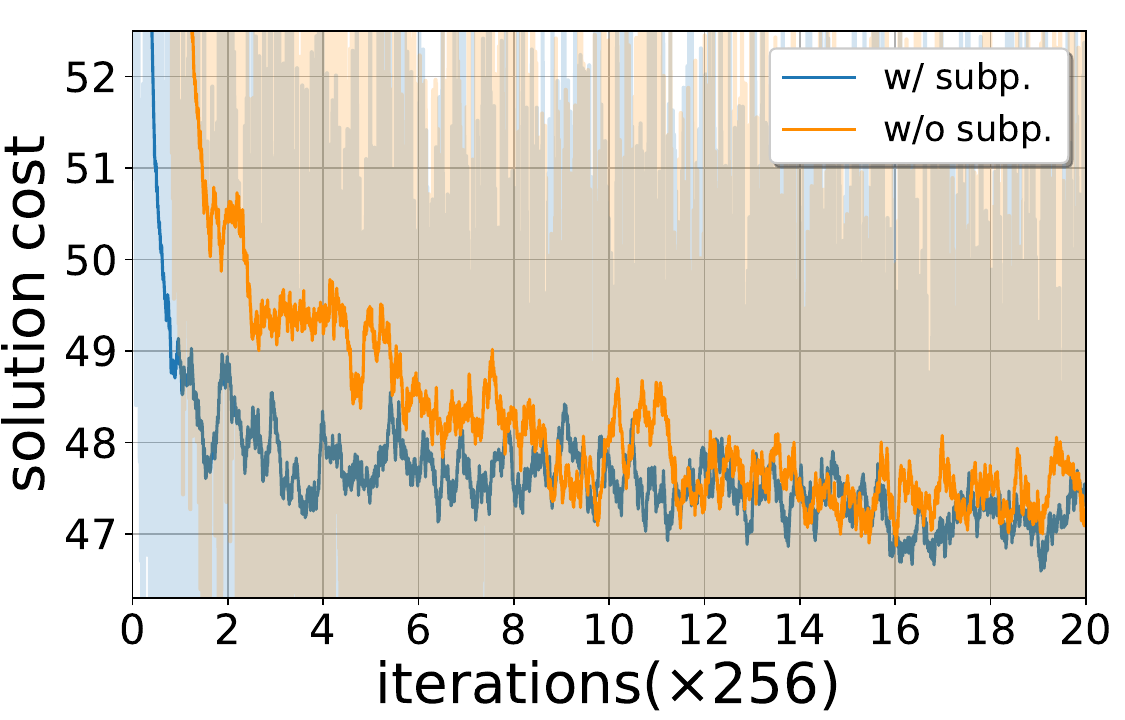}}
\subfigure[]{\label{fig:rl_curve_2}\includegraphics[width=0.85\columnwidth]{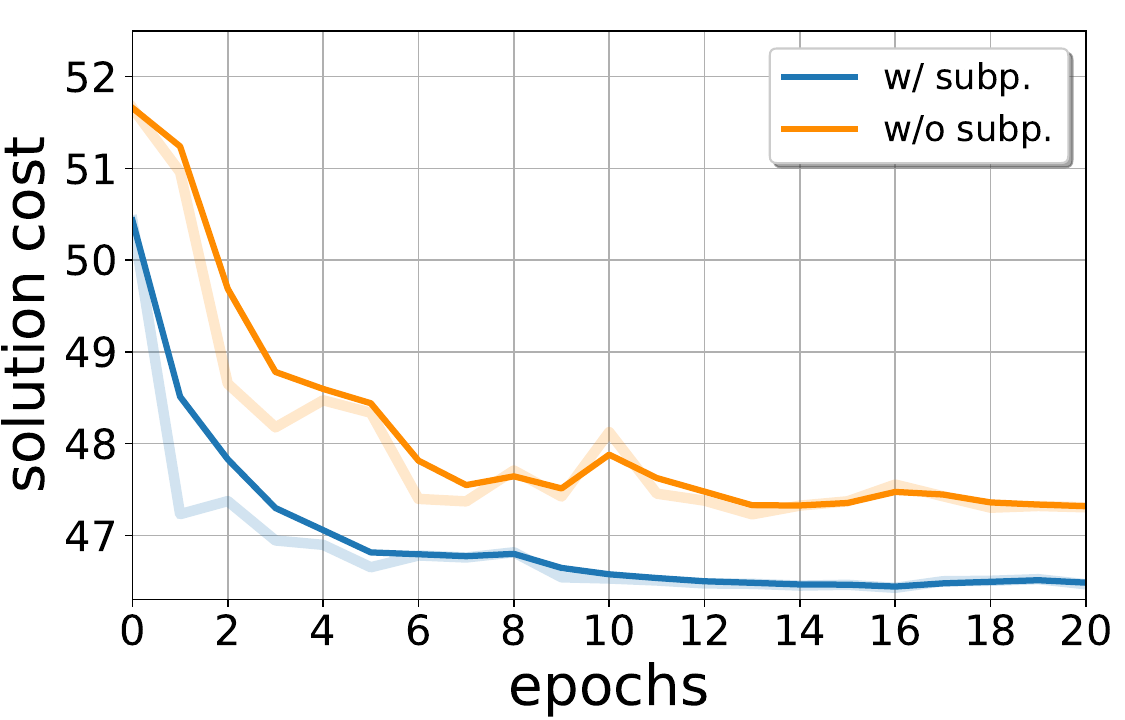}}
\caption{The training curve (a) and the validation curve (b) of the global partition policy in RL-driven HLGP.
%\vspace{-10pt}
}
\label{fig:rl_curves}
\end{figure}

\subsection{Time Overheads Analysis}
In this section, we analyze the time overhead contributions of the global partition process and the local partition process for both RL-driven HLGP and SL-driven HLGP, as illustrated in Table~\ref{tab:overhead}. In the RL-driven HLGP, it is evident that the local partition process contributes significantly more to the average time overhead compared to the global partition process. This disparity arises due to the local partition process typically involving multiple levels of partitioning. Conversely, in the SL-driven HLGP, we observe a similar average time overhead between the global and local partition processes. This similarity can be attributed to the nature of the global partition process in the SL-driven HLGP. In this process, with each step, a new encountered subproblem is derived for the current node selection, necessitating each encountered subproblem to be processed by the policy network. As the number of encountered subproblems is linearly proportional to the graph size, the time overhead of the global partition process increases accordingly. In contrast, in the global partition process of the RL-driven HLGP, only an encountered subproblem is formed and fed to the policy network when a new subgraph is constructed. Consequently, the time overhead associated with the global partition process is less significant compared to that of the local partition process.

\subsection{Visualization Results}

In Figure~\ref{fig:rl_visual} and Figure~\ref{fig:sl_visual}, we present the visualized CVRP solutions generated by RL-driven and SL-driven HLGP, respectively, on the CVRP1K datasets with distributions including Uniform, Gaussian, Explosion, and Rotation.

\subsection{Implementation details of baselines}
In this section, we provide the implementation details of the baseline methods, including both both non-learning and learning-based approaches, selected for comparison with our proposed approaches.

\textbf{LKH3.} We adhere to the settings outlined in Omni-POMO~\cite{zhou2023towards} to reproduce the results of LKH3. For solving each CVRP instance with varying scales and distributions, we set the maximum number of trials to 10000.

\textbf{HGS.} In order to decrease the runtime, we adjust the number of iterations of HGS for the CVRP datasets with varying scales. Specifically, we set the number of iterations to 20000, 5000, 2000, 1500, and 1000 for the CVRP instances with node counts of 1000, 2000, 5000, 7000, and 10000, respectively.

\textbf{AM.} We follow the implementation guidelines of AM as outlined by~\citet{ye2024glop}, where the checkpoint trained on CVRP100 is adapted for comparison purposes. In the evaluation stage, to enhance the quality of solutions generated by AM, we employ a sampling decoding strategy with 1280 solutions for each instance. The temperature of the softmax function in the output layer is set to 0.1 to prevent solutions from diverging towards lower-quality outcomes.

\textbf{POMO.} To benchmark against POMO, we generalize the checkpoint trained on CVRP100 to generate results across different CVRP datasets. The POMO size is adjusted to align with the number of nodes in each CVRP instance. Furthermore, data augmentation is implemented for each CVRP instance with an augmentation factor of 8.

\textbf{Sym-POMO.} We stick to the original implementations of Sym-POMO, where the POMO size is adjusted to match the number of nodes, and each CVRP instance is augmented by a factor of 8. The checkpoint trained on CVRP100 is generalized for comparisons.

\textbf{AMDKD.} AMDKD aims to improve the generalization capacity of neural solvers on CVRP datasets with scale and distribution shifts through knowledge distillation. We adhere to the original evaluation settings with the model trained via knowledge distillation on CVRP100 datasets with various distributions. The default POMO size and augmentation factor are employed in our evaluations.

\textbf{Omni-POMO.} Omni-POMO aims to transfer neural solvers to problem instances with diverse scales and distributions through meta-learning techniques. Therefore, we strictly follow the evaluation configurations by utilizing the provided model trained via MAML with 250000 epochs to replicate the results of Omni-POMO across various CVRP datasets. The POMO size is configured to match the number of nodes in each CVRP instance. Additionally, data augmentation is applied to every CVRP instance with an augmentation factor of 8.

\textbf{ELG-POMO.} ELO-POMO integrates both a global neural solver and a local neural solver to leverage insensitive local topological features, thereby enhancing the overall system's generalization capability. We generalize the checkpoint trained on CVRP100 with 250000 epochs for comparison purposes. Throughout the evaluation phase, we maintain the default values of the POMO size and augmentation factors as recommended in the original implementations.

\textbf{INViT.} INViT utilizes multiple encoders to capture the features of subgraphs of varying scales within each CVRP instance, enabling the utilization of local topological features for the generalization improvement. In our study, we generalize the checkpoint trained on CVRP100 to evaluate the generalization ability against other baselines. During the evaluation phase, given that each instance in the evaluation dataset consists of at least 1000 nodes, the beam size is accordingly set to 16 for each evaluation CVRP dataset, as recommended in the original implementation. All other settings strictly conform to the original implementations.

\textbf{LEHD.} LEHD incorporates iterative local revision (known as RRC) by utilizing a SL-trained neural solver to improve the generalization capability of the constructive method. We leverage the officially provided checkpoint to compare its generalization ability with other methods. During the evaluation phase, as all baseline methods are evaluated on large-scale instances with a minimum of 1000 nodes, to balance performance and efficiency, the number of RRC is set to 200, 50, 5, and 1 for CVRP datasets with node counts of 1000, 2000, 5000, and 7000. For the CVRP10K dataset, the greedy model is directly deployed for comparison purposes.

\textbf{BQ.} BQ enhances generalization ability by leveraging the inherent recursive property of the CVRP instance. We utilize the officially provided checkpoint to generalize its application to various CVRP datasets. During the evaluation phase, to achieve a trade-off between performance and efficiency, the beam size is set to 16, 16, 8, 4, and 4 for the CVRP datasets with node counts of 1000, 2000, 5000, 7000, and 10000, respectively. Other settings remain aligned with the original implementations.

\textbf{L2I.} L2I is an iterative approach that merges an RL-based policy with predefined local operators to iteratively enhance a given solution. We directly apply the provided official checkpoint of the RL-based policy and the predefined set of local operators to diverse CVRP datasets for generalization comparisons. Additionally, to prevent unexpected increases in makespan due to excessive iterations, we limit the maximum number of rollout steps for the RL-based policy to 40, 10, 6, 4, and 2 for the CVRP datasets with node counts of 1000, 2000, 5000, 7000, and 10000, respectively.

\textbf{NLNS.} NLNS seamlessly integrates heuristic destroy operators and a collection of learning-based repair policies to iteratively enhance a given solution. We closely follow the official implementations of NLNS for generalization comparisons, with the exception of the setting for maximum running time per instance. Since NLNS is intended to generalize to large-scale instances, the maximum running time in our study is extended to approximately 1200ms per instance for various CVRP datasets.

\textbf{DACT.} DACT, as an iterative method, redesigns the solution representation for the RL-based policy to improve the given solution. We thus generalize the official checkpoint trained on CVRP100 to various CVRP datasets for comparisons. Following the guidelines in the official implementation, to tailor DACT to larger CVRP instances, smaller values are recommended for the number of perturbations and the dummy rate. As a result, we set the dummy rate to 0.05 and the number of perturbations to 2. Furthermore, to prevent unexpected increases in makespan resulting from an excessive number of iterations, the maximum rollout steps of the RL-based policy are set to 400.

\textbf{L2D.} L2D is replicated by adhering to the publicly available open-source implementation across various CVRP datasets. It is impressive to observe L2D's strong performance on CVRP datasets exhibiting distribution and scale shifts. This success can be attributed to L2D's heavy reliance on near-optimal solutions from numerous neighboring subgraphs, derived from classical heuristic solvers, to guide both clustering subgraphs and resolving subproblems.

\textbf{GLOP.} We rigorously follow the official implementations of GLOP to benchmark it against other baseline methods. Moreover, for the selected baseline methods that align with those in GLOP, we rely on GLOP's recommendations to reproduce these methods, thereby ensuring that our replicated results are at least on par with those documented in GLOP.

%\section{Algorithm Details}
%\subsection{Algorithm Pseudocodes}

\section{Algorithm Pseudocodes}
The algorithm pseudocodes for both RL-driven and SL-driven HLGP are illustrated in Algorithm~\ref{alg:rl} and Algorithm~\ref{alg:sl}, respectively.

\begin{algorithm}[h]
    \caption{RL-driven HLGP}
    \label{alg:rl}
    %\textbf{Initialize}: global partition policy $\pi_{\theta_{G}^{0}}$, local partition policy $\pi_{\theta_{L}^{0}}$; \\
    %\textbf{Input}: CVRP instance generator
    %$\mathrm{Gen}_{\mathrm{CVRP}}$, number of iterations $N$; \\
    %\textbf{Parameter}: number of levels $K$, learning rate lr; \\
    %\textbf{Output}: updated global partition policy $\pi_{\theta_{G}^{N}}$, updated local partition policy $\pi_{\theta_{L}^{N}}$ \\
    \begin{algorithmic}[1] %[1] enables line numbers
    \REQUIRE Problem instance distribution $p_{I}$, permutation policy $\pi_{\mathrm{perm}}$
    \ENSURE global partition policy $\pi_{\theta_{G}}$, local partition policy $\pi_{\theta_{L}}$
        \FOR{$n=0$ to $N$}
        %\STATE \texttt{\# sample a CVRP instance}
        \STATE Sample an instance: $I=(G, D, N_{\mathrm{max}}) \sim p_{I}$
        \STATE Initialize $\mathcal{C}^{(0)}$: $\mathcal{C}^{(0)} = \{\}$
        \STATE \textbf{while} $G \neq \emptyset$ \textbf{do}
        \STATE \quad \texttt{\# sample a partition solution}
        \STATE \quad $\hat{\mathcal{C}}^{(0)} = \{\hat{c}_{1}^{(0)}, \ldots,  \hat{c}_{N_{c}}^{(0)}\} \sim \pi_{\theta_{G}^{n}}(\cdot|I)$
        \STATE \quad \texttt{\# construct the subproblem}
        \STATE \quad $N_{\mathrm{max}} \leftarrow N_{\mathrm{max}} - 1$, $I \leftarrow (\cup_{i=2}^{N_{c}}\hat{c}_{i}^{(0)}, D, N_{\mathrm{max}})$
        \STATE \quad Update $\mathcal{C}^{(0)}$: $\mathcal{C}^{(0)} \leftarrow \mathcal{C}^{(0)} \cup \{ \hat{c}_{1}^{(0)} \}$
        \STATE \quad \texttt{\# train global partition policy}
        \STATE \quad CVRP solution and Cost: $\mathcal{T} \sim \pi_{\mathrm{perm}}(\cdot|\hat{\mathcal{C}}^{(0)})$, $e(\mathcal{T})$
        \STATE \quad Update parameter: $\theta_{G}^{n} \leftarrow
        \mathrm{AdamOpt}(\theta_{G}^{n}, \nabla_{\theta_{G}}\hat{J}(\theta_{G}^{n}, \theta_{L}^{n}))$
        \STATE \textbf{end while}
        \FOR{$k=1$ to K}
        \STATE Construct subproblems: $I_{j}^{(k-1)}, 1 \leq j \leq \lfloor \frac{N_{c}}{2}\rfloor$
        \STATE \texttt{\# sample partition solutions }
        \STATE $\mathcal{C}_{j}^{(k-1)} \sim \pi_{\theta_{L}^{n}}(\cdot|I_{j}^{(k-1)}), 1 \leq j \leq \lfloor \frac{N_{c}}{2}\rfloor$
        \STATE \texttt{\# train local partition policy}
        \STATE CVRP solutions and Costs: $\mathcal{T}_{j}\sim\pi_{\mathrm{perm}}(\cdot|\mathcal{C}_{j}^{(k-1)}), e(\mathcal{T}_{j})$
        \STATE Update parameter: $\theta_{L}^{n} \leftarrow
        \mathrm{AdamOpt}(\theta_{L}^{n}, \nabla_{\theta_{L}}\hat{J}(\theta_{G}^{n}, \theta_{L}^{n}))$
        \ENDFOR
        \STATE $\theta_{G}^{n+1} \leftarrow \theta_{G}^{n}$; $\theta_{L}^{n+1} \leftarrow \theta_{L}^{n}$
        \ENDFOR
    \end{algorithmic}
\end{algorithm}

\begin{algorithm}[h]
    \caption{SL-driven HLGP}
    \label{alg:sl}
    %\textbf{Initialize}: global partition policy $\pi_{\theta_{G}^{0}}$ parameterized by $\theta_{G}^{0}$, local partition policy $\pi_{\theta_{L}^{0}}$ parameterized by $\theta_{L}^{0}$; \\
    %\textbf{Input}: CVRP instance generator $\mathrm{Gen}_{\mathrm{CVRP}}$, number of iterations $N$; \\
    %\textbf{Parameter}: number of levels $K$, learning rate lr; \\
    %\textbf{Output}: updated global partition policy $\pi_{\theta_{G}^{N}}$, updated local partition policy $\pi_{\theta_{L}^{N}}$
    \begin{algorithmic}[1] %[1] enables line numbers
    \REQUIRE Problem instance distribution $p_{I}$, permutation policy $\pi_{\mathrm{perm}}$
    \ENSURE global partition policy $\pi_{\theta_{G}}$, local partition policy $\pi_{\theta_{L}}$
        \FOR{$N=0$ to $N$}
        %\STATE \texttt{\# sample a CVRP instance}
        \STATE Sample an instance: $I=(G, D, N_{\mathrm{max}}) \sim p_{I}$
        \STATE \texttt{\# generate the label}
        \STATE $\bar{\mathcal{C}}^{(0)} \sim \mathrm{Beam Search}
        (I, \pi_{\theta_{G}^{n}}, \pi_{\mathrm{perm}})$
        \FOR{$k=1$ to K}
        \STATE Construct subproblems $I_{j}^{(k-1)}, 1 \leq j \leq \lfloor \frac{N_{G}}{2}\rfloor$
        \STATE \texttt{\# solve each subproblem}
        \STATE $\bar{\mathcal{C}}_{j}^{(k-1)} \sim \mathrm{Beam Search}(I_{j}^{(k-1)}, \pi_{\theta_{L}^{n}}, \pi_{\mathrm{perm}})$
        \STATE Construct $\bar{\mathcal{C}}^{(K)}$ from the set of $\bar{\mathcal{C}}_{j}^{(k-1)}$.
        \ENDFOR
        \STATE $\bar{\mathcal{C}} \leftarrow \bar{\mathcal{C}}^{(K)}$
        \STATE \texttt{\# train global partition policy}
        \STATE Generate labeled instances: $(I_{N_{v}(t)}, \bar{\mathcal{C}}[t]), t \geq 1$
        \STATE Update parameter: $\theta_{G}^{n} \leftarrow
        \mathrm{AdamOpt}(\theta_{G}^{n}, \nabla_{\theta_{G}}\hat{J}(\theta_{G}^{n}, \theta_{L}^{n}))$
        \STATE \texttt{\# train local partition policy}
        \STATE Generate labeled instances: $(I_{i}, \bar{C}_{i}), i \geq 1$
        \STATE Update parameter: $\theta_{L}^{n} \leftarrow
        \mathrm{AdamOpt}(\theta_{L}^{n}, \nabla_{\theta_{L}}\hat{J}(\theta_{G}^{n}, \theta_{L}^{n}))$
        \STATE $\theta_{G}^{n+1} \leftarrow \theta_{G}^{n}$; $\theta_{L}^{n+1} \leftarrow \theta_{L}^{n}$
        \ENDFOR
    \end{algorithmic}
\end{algorithm}

\newpage

\section{Proofs}

\subsection{Proof of Theorem 1}
\begin{theorem}
\label{thm:gplc}
    The objective in solving an original CVRP instance $I$ is to identify a (permutation) policy $\pi(\mathcal{T}|I) \in \Delta(\mathbb{S}_{\mathcal{T}})$ so as to minimize the expected cost $\mathbb{E}_{\mathcal{T} \sim \pi}[e(\mathcal{T})]$. If $\pi_{\mathrm{perm}}^{\ast} \in \Delta(\mathbb{S}_{\mathcal{T}})$ is optimal for each subproblem $(c_{i}, D, 1)$, then the original objective can be reframed as identifying an optimal partition policy $\pi_{\mathrm{part}}^{\ast} \in \Delta(\mathbb{S}_{\mathcal{C}})$ to minimize the expected cost, expressed as:
    \begin{equation}
    \label{equ:gplc}
    \min_{\pi_{\mathrm{part}}} \; \mathbb{E}_{\mathcal{C} \sim \pi_{\mathrm{part}}} [\sum_{i=1}^{N_{c}} \mathbb{E}_{\tau_{i} \sim \pi_{\mathrm{perm}}^{\ast}} (e(\tau_{i})) ],
    \end{equation}
    The partition policy $\pi_{\mathrm{part}}$ and the permutation policy $\pi_{\mathrm{perm}}$ are formulated respectively as follows:
    \begin{equation}
    \label{equ:part_perm_pi}
    \begin{split}
    &\pi_{\mathrm{part}}(\mathcal{C}|I)=\prod_{n=1}^{N_{\mathrm{sol}}}\pi_{\mathrm{part}}(\mathcal{C}[n]|\mathcal{C}[0:n-1], I), \\
    &\pi_{\mathrm{perm}}(\mathcal{T}|\mathcal{C}) = \prod_{i=1}^{N_{c}} \pi_{\mathrm{perm}}(\tau_{i}|c_{i})
    \end{split}
    \end{equation}
    where $N_{\mathrm{sol}}$ denotes the length of partition solution.
\end{theorem}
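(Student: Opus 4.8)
The plan is to exploit the natural decomposition of any feasible CVRP solution $\mathcal{T}$ into a partition component and a within-subgraph ordering component, and then to show that optimizing the orderings decouples subgraph by subgraph once the partition is fixed. The whole argument rests on the fact that both the feasibility constraints and the travel cost separate additively across subtours.

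First I would make precise the correspondence between feasible solutions and partition-plus-ordering pairs. A feasible $\mathcal{T}$ decomposes into feasible subtours $\tau_1,\ldots,\tau_{N_c}$, each beginning and ending at the depot. Discarding the visiting order inside $\tau_i$ collapses it to the subgraph $c_i=\{v_0\}\cup\{\text{customers of }\tau_i\}$, and the collection $\{c_i\}_{i=1}^{N_c}$ is exactly a feasible partition $\mathcal{C}$ (each customer in one cluster, shared depot, per-cluster demand at most $D$). Conversely, a feasible partition together with an ordering chosen inside every subgraph reconstructs a feasible $\mathcal{T}$. Up to the irrelevant order in which the subtours are listed, this is a bijection between $\mathbb{S}_{\mathcal{T}}$ and such pairs, and the cost is additive along it: $e(\mathcal{T})=\sum_{i=1}^{N_c}e(\tau_i)$, with each $e(\tau_i)$ depending only on the ordering inside $c_i$.

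Next I would factor an arbitrary permutation policy by the chain rule as $\pi(\mathcal{T}|I)=\pi_{\mathrm{part}}(\mathcal{C}|I)\,\pi_{\mathrm{perm}}((\tau_i)_i|\mathcal{C})$, with $\pi_{\mathrm{part}}$ the induced marginal over partitions and $\pi_{\mathrm{perm}}$ the conditional over orderings. Substituting into the objective and using linearity of expectation together with the additive cost gives
\begin{equation*}
\mathbb{E}_{\mathcal{T}\sim\pi}[e(\mathcal{T})]=\mathbb{E}_{\mathcal{C}\sim\pi_{\mathrm{part}}}\Big[\sum_{i=1}^{N_c}\mathbb{E}_{\tau_i\sim\pi_{\mathrm{perm}}(\cdot|\mathcal{C})}\big(e(\tau_i)\big)\Big].
\end{equation*}
I would then carry out the interchange $\min_{\pi}=\min_{\pi_{\mathrm{part}}}\min_{\pi_{\mathrm{perm}}}$. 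For a fixed $\mathcal{C}$ the inner objective is a sum whose $i$-th term depends only on the marginal of $\pi_{\mathrm{perm}}$ over orderings of $c_i$; since $e(\tau_i)$ is insensitive to orderings in the other subgraphs, the inner minimization separates termwise and is solved by the per-subproblem optimal policy $\pi_{\mathrm{perm}}^{\ast}(\cdot|c_i)$ on $(c_i,D,1)$. This simultaneously produces the product form $\pi_{\mathrm{perm}}^{\ast}(\mathcal{T}|\mathcal{C})=\prod_{i=1}^{N_c}\pi_{\mathrm{perm}}^{\ast}(\tau_i|c_i)$ of Equation~\ref{equ:part_perm_pi} and collapses the objective to Equation~\ref{equ:gplc}. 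The autoregressive expression for $\pi_{\mathrm{part}}$ is then just the chain rule applied to the node-list representation of $\mathcal{C}$.

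The main obstacle is the decoupling step: I must justify that restricting to product-form ordering policies loses no optimality, i.e. that the optimal joint conditional $\pi_{\mathrm{perm}}(\cdot|\mathcal{C})$ may be taken to be a product of per-subgraph marginals. This holds because the objective separates additively with no cross-subgraph coupling, so the termwise lower bound $\mathbb{E}_{\tau_i}(e(\tau_i))\ge\min\mathbb{E}_{\tau_i}(e(\tau_i))$ can be attained simultaneously for every $i$ by a product policy, and a correlated joint ordering policy cannot beat the product of the optimal marginals. Verifying this simultaneous attainability, rather than the algebra of expectations, is the only genuinely non-routine part of the argument.
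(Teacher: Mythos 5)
Your proposal is correct and follows essentially the same route as the paper's proof: decompose each feasible $\mathcal{T}$ into a partition $\mathcal{C}$ plus per-subgraph orderings, use additivity of the cost $e(\mathcal{T})=\sum_i e(\tau_i)$ and linearity of expectation, and then interchange the minimization so that the inner ordering problem is solved subgraph by subgraph by $\pi_{\mathrm{perm}}^{\ast}$, collapsing the objective to Equation~\ref{equ:gplc}. The one place you go beyond the paper is the decoupling step you flag as non-routine: the paper's proof passes directly from $\min_{\pi}\sum_{(\mathcal{C},\mathcal{T})}\pi((\mathcal{C},\mathcal{T})|I)\,e(\mathcal{T})$ to a minimization over pairs $(\pi_{\mathrm{part}},\pi_{\mathrm{perm}})$ with $\pi_{\mathrm{perm}}$ already in product form $\prod_i\pi_{\mathrm{perm}}(\tau_i|c_i)$, i.e.\ it silently assumes that restricting to factorized conditionals loses no optimality, whereas you start from an arbitrary (possibly correlated) joint conditional over orderings and argue via marginals that the termwise optima are attained simultaneously by the product of per-subgraph optimal policies. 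This buys genuine rigor at the only step where something could go wrong; the paper's version is shorter but leaves that justification implicit.
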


\begin{proof}
    For a CVRP instance $I$, the feasible CVRP solution $\mathcal{T}$ comprises $N_{\tau}$ subtours $\tau_{i}, 1 \leq i \leq N_{\tau}$. If a partition solution $\mathcal{C}=\{c_{1}, \ldots, c_{N_{c}}\}$ corresponds to $\mathcal{T}$, then $\tau_{i}$ can be obtained by rearranging the nodes in $c_{i}$, and $N_{\tau} = N_{c}$. This observation indicates that a given CVRP solution $\mathcal{T}$ can be represented by the corresponding partition solution and the order of the nodes in each subgraph, denoted as $(\mathcal{C}, \mathcal{T}) = \{c_{1}, \ldots, c_{N_{c}}, \tau_{1}, \ldots, \tau_{N_{c}}\}$. Therefore, we can derive the following expressions for the primary objective.
    \begin{equation}
    \label{equ:thm1_1}
    \begin{split}
        &\min_{\pi} \;\mathbb{E}_{\mathcal{T}} [e(\tau)] \\
    = &\min_{\pi} \; \sum_{\mathcal{T}} \pi(\mathcal{T}|I) e(\mathcal{T}) \\
    = &\min_{\pi} \; \sum_{(\mathcal{C}, \mathcal{T})} \pi((\mathcal{C}, \mathcal{T})|I) e(\mathcal{T}) \\
    = &\min_{\pi_{\mathrm{part}}, \pi_{\mathrm{perm}}} \sum_{\mathcal{C}}\pi_{\mathrm{part}}(\mathcal{C}|I) \sum_{\mathcal{T}}\prod_{i=1}^{N_{c}}\pi_{\mathrm{perm}}(\tau_{i}|c_{i}) \sum_{i}^{N_{c}}e(\tau_{i}).
    \end{split}
    \end{equation}
    We can simplify the expectation term associated with $\pi_{\mathrm{perm}}$ further as follows:
    \begin{equation}
    \label{equ:thm1_2}
    \begin{split}
\sum_{\mathcal{T}}\prod_{i=1}^{N_{c}}\pi_{\mathrm{perm}}(\tau_{i}|c_{i}) \sum_{i}^{N_{c}}e(\tau_{i}) =  \sum_{i=1}^{N_{c}} \sum_{\tau_{i}} \pi_{\mathrm{perm}}(\tau_{i}|c_{i})e(\tau_{i}).
    \end{split}
    \end{equation}
Thus, we can plug Equation~\ref{equ:thm1_2} into Equation~\ref{equ:thm1_1} to derive the following objective function:
\begin{equation}
\label{equ:thm1_3}
\begin{split}
    &\min_{\pi_{\mathrm{part}}, \pi_{\mathrm{perm}}} \sum_{\mathcal{C}}\pi_{\mathrm{part}}(\mathcal{C}|I) \sum_{\mathcal{T}}\prod_{i=1}^{N_{c}}\pi_{\mathrm{perm}}(\tau_{i}|c_{i}) \sum_{i}^{N_{c}}e(\tau_{i}) \\
    = & \min_{\pi_{\mathrm{part}}, \pi_{\mathrm{perm}}} \sum_{\mathcal{C}}\pi_{\mathrm{part}}(\mathcal{C}|I) \sum_{i=1}^{N_{c}} \sum_{\tau_{i}} \pi_{\mathrm{perm}}(\tau_{i}|c_{i})e(\tau_{i}) \\
    = & \min_{\pi_{\mathrm{part}}} \sum_{\mathcal{C}}\pi_{\mathrm{part}}(\mathcal{C}|I) \sum_{i}^{N_{c}}\min_{\pi_{\mathrm{perm}}} \sum_{\tau_{i}} \pi_{\mathrm{perm}}(\tau_{i}|c_{i})e(\tau_{i}) \\
\end{split}
\end{equation}
If we denote the optimal permutation policy as $\pi_{\mathrm{perm}}^{\ast}$, then the primary objective can be expressed as:
\begin{equation}
    \label{equ:thm1_4}
        \min_{\pi_{\mathrm{part}}} \; \mathbb{E}_{\mathcal{C} \sim \pi_{\mathrm{part}}} [\sum_{i=1}^{N_{c}} \mathbb{E}_{\tau_{i} \sim \pi_{\mathrm{perm}}^{\ast}} (e(\tau_{i})) ].
    \end{equation}
\end{proof}

\subsection{Proof of Theorem 2}
\begin{theorem}
\label{thm:rl_obj}
Let $g(c_{i})$ denote $\mathbb{E}_{\tau_{i} \sim \pi_{\mathrm{perm}}^{\ast}(\cdot|c_{i})}(e(\tau_{i}))$. It is clear that $f(\mathcal{C}) = \sum_{i=1}^{N_{c}} g(c_{i})$ acts as a feasible cost function. Then, the optimization problem in the multi-level HL framework can be transformed equivalently as follows:
\begin{equation}
\label{equ:trans_obj}
\begin{split}
     \min_{\pi_{\mathrm{Gpart}}, \pi_{\mathrm{Lpart}}} &\mathbb{E}_{\mathcal{C}^{(0)}} [f(\mathcal{C}^{(0)})] +  \mathbb{E}_{\mathcal{C}^{(0)}} \mathbb{E}_{\mathcal{C}^{(1)}}[f(\mathcal{C}^{(1)}) - f(\mathcal{C}^{(0)})]+ \\
    &  \cdots + \mathbb{E}_{\mathcal{C}^{(0)}} \mathbb{E}_{\mathcal{C}^{(1)}} \cdots \mathbb{E}_{\mathcal{C}^{(K)}} [f(\mathcal{C}^{(K)}) - f(\mathcal{C}^{(K-1)})].
\end{split}
\end{equation}
The evaluation for $\mathcal{C}^{(k)}, k \geq 1$, can further be derived as:
\begin{align}
\label{equ:eval_C}
    &f(\mathcal{C}^{(k)}) - f(\mathcal{C}^{(k-1)}) = \sum_{j=1}^{\lfloor\frac{N_{c}}{2}\rfloor}
    [h(\mathcal{C}^{(k)}, k, m) - h(\mathcal{C}^{(k-1)}, k, m)]; \nonumber \\
    &h(\mathcal{C}^{(k)}, k, m) = g(c_{(m+k-1)\%N_{c}+1}^{(k)}) + g(c_{(m+k)\%N_{c}+1}^{(k)}),
\end{align}
where $m=2(j-1)$.
\end{theorem}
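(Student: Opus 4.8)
The plan is to dispatch the three assertions in turn. The feasibility of $f(\mathcal{C}) = \sum_{i=1}^{N_{c}} g(c_{i})$ is immediate from Definition~\ref{def:part_cost} and Theorem~\ref{thm:gplc}: because $g(c_{i}) = \mathbb{E}_{\tau_{i} \sim \pi_{\mathrm{perm}}^{\ast}}(e(\tau_{i}))$, the expected cost $\mathbb{E}_{\mathcal{C}}[f(\mathcal{C})]$ is exactly the objective of Equation~\ref{equ:gplc}, whose minimizer is $\pi_{\mathrm{part}}^{\ast}$, so $f$ meets the defining condition. For the telescoping identity in Equation~\ref{equ:trans_obj}, I would first abbreviate the nested expectation as $\mathbb{E}_{0:K}[\cdot] := \mathbb{E}_{\mathcal{C}^{(0)}} \mathbb{E}_{\mathcal{C}^{(1)}} \cdots \mathbb{E}_{\mathcal{C}^{(K)}}[\cdot]$, with $\mathcal{C}^{(0)} \sim \pi_{\mathrm{Gpart}}$ and $\mathcal{C}^{(k)} \sim \pi_{\mathrm{Lpart}}(\cdot|\mathcal{C}^{(k-1)}, k)$. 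The key observation is that the $k$-th summand on the right-hand side depends only on $\mathcal{C}^{(0)}, \ldots, \mathcal{C}^{(k)}$, so padding it with the remaining conditional expectations (each integrating to one) leaves it unchanged; this lets me pull the whole right-hand side under a single $\mathbb{E}_{0:K}$ as $\mathbb{E}_{0:K}[\,f(\mathcal{C}^{(0)}) + \sum_{k=1}^{K}(f(\mathcal{C}^{(k)}) - f(\mathcal{C}^{(k-1)}))\,]$. The bracketed sum telescopes to $f(\mathcal{C}^{(K)})$, which reproduces the objective of Equation~\ref{equ:hl_obj}, so the two optimization problems have equal objectives pointwise in the policies, hence the same value and the same minimizers.

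For the per-level decomposition in Equation~\ref{equ:eval_C}, I would substitute the additive form of $f$ to get $f(\mathcal{C}^{(k)}) - f(\mathcal{C}^{(k-1)}) = \sum_{i=1}^{N_{c}}[g(c_{i}^{(k)}) - g(c_{i}^{(k-1)})]$, and then argue that almost all of these terms vanish. By the subproblem construction in Equation~\ref{equ:loc_subp}, resolving $I_{j}^{(k-1)}$ replaces exactly the pair of subgraphs indexed by $(m+k-1)\%N_{c}+1$ and $(m+k)\%N_{c}+1$ (with $m = 2(j-1)$) by their level-$k$ counterparts, while every subgraph not appearing in any level-$k$ subproblem is carried over unchanged, i.e.\ $c_{i}^{(k)} = c_{i}^{(k-1)}$. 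The unchanged terms cancel in the difference, and regrouping the surviving terms by subproblem index $j$ collects them precisely into $h(\mathcal{C}^{(k)}, k, m) - h(\mathcal{C}^{(k-1)}, k, m)$, giving Equation~\ref{equ:eval_C}.

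The main obstacle I anticipate lies in the index bookkeeping of the second part: I must verify that the pairs $\{(m+k-1)\%N_{c}+1,\ (m+k)\%N_{c}+1\}$ over $j = 1, \ldots, \lfloor N_{c}/2 \rfloor$ are pairwise disjoint and together cover exactly the subgraphs that change between levels --- all of them when $N_{c}$ is even, and all but one (which stays fixed and hence cancels) when $N_{c}$ is odd. Establishing this disjoint-cover claim rigorously from the non-overlap property noted after Equation~\ref{equ:loc_subp} and from the ``left-shift by $k-1$ then merge neighbors'' description, rather than treating it as self-evident, is the one genuinely non-routine step; the telescoping and cancellation arguments are otherwise routine.
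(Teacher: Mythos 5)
Your proposal is correct and follows essentially the same route as the paper's proof: the same telescoping decomposition of $\mathbb{E}_{\mathcal{C}^{(0)}}\cdots\mathbb{E}_{\mathcal{C}^{(K)}}[f(\mathcal{C}^{(K)})]$ (you merely run the chain of equalities from right to left, padding with expectations rather than peeling them off), followed by substituting the additive form of $f$ and regrouping terms by subproblem index. The one difference is that you explicitly flag and handle the disjoint-cover/cancellation bookkeeping (including the odd-$N_{c}$ case where one subgraph is untouched and its terms cancel), which the paper's proof passes over implicitly in its final regrouping step --- a welcome tightening, not a different argument.
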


\begin{proof}
The objective function of the multi-level HL framework can be equivalently transformed as follows:
\begin{equation}
\label{equ:thm2_1}
\begin{split}
    &\min_{\pi_{\mathrm{Gpart}}, \pi_{\mathrm{Lpart}}} \mathbb{E}_{\mathcal{C}^{(0)}} \mathbb{E}_{\mathcal{C}^{(1)}} \cdots \mathbb{E}_{\mathcal{C}^{(K)}} [f(\mathcal{C}^{(K)})] \\
= & \min_{\pi_{\mathrm{Gpart}}, \pi_{\mathrm{Lpart}}} \mathbb{E}_{\mathcal{C}^{(0)}} \mathbb{E}_{\mathcal{C}^{(1)}} \cdots \mathbb{E}_{\mathcal{C}^{(K)}} [f(\mathcal{C}^{(K)}) - f(\mathcal{C}^{(K-1)}) \\
+ & f(\mathcal{C}^{(K-1)}) - f(\mathcal{C}^{(K-2)}) + \ldots 
+ f(\mathcal{C}^{(1)}) - f(\mathcal{C}^{(0)}) + f(\mathcal{C}^{(0)})]
\end{split}
\end{equation}
The RHS of Equation~\ref{equ:thm2_1} can be further decomposed as follows:
\begin{equation}
\label{equ:thm2_2}
\begin{split}
&\mathbb{E}_{\mathcal{C}^{(0)}} \mathbb{E}_{\mathcal{C}^{(1)}} \cdots \mathbb{E}_{\mathcal{C}^{(K)}} [ f(\mathcal{C}^{(K)}) - f(\mathcal{C}^{(K-1)}) ]  + \\
 &\mathbb{E}_{\mathcal{C}^{(0)}} \mathbb{E}_{\mathcal{C}^{(1)}} \cdots \mathbb{E}_{\mathcal{C}^{(K-1)}} [f(\mathcal{C}^{(K-1)}) - f(\mathcal{C}^{(K-2)}) + \ldots \\
&+ f(\mathcal{C}^{(1)}) - f(\mathcal{C}^{(0)}) + f(\mathcal{C}^{(0)})]
\end{split}
\end{equation}
Due to the decomposition demonstrated in Equation~\ref{equ:thm2_2}, the original objective can be iteratively simplified as:
\begin{equation}
\label{equ:thm2_3}
\begin{split}
     \min_{\pi_{\mathrm{Gpart}}, \pi_{\mathrm{Lpart}}} &\mathbb{E}_{\mathcal{C}^{(0)}} [f(\mathcal{G}^{(0)})] +  \mathbb{E}_{\mathcal{C}^{(0)}} \mathbb{E}_{\mathcal{C}^{(1)}}[f(\mathcal{C}^{(1)}) - f(\mathcal{C}^{(0)})]+ \\
    &  \cdots + \mathbb{E}_{\mathcal{C}^{(0)}} \mathbb{E}_{\mathcal{C}^{(1)}} \cdots \mathbb{E}_{\mathcal{C}^{(K)}} [f(\mathcal{C}^{(K)}) - f(\mathcal{C}^{(K-1)})].
\end{split}
\end{equation}
Then for any $k \geq 1$, the evaluation for $\mathcal{C}^{(k)}$ can be written as:
\begin{equation}
\label{equ:thm2_4}
\begin{split}
    &f(\mathcal{C}^{(k)}) - f(\mathcal{C}^{(k-1)}) \\
    = &\sum_{i=1}^{N_{c}} g(c_{i}^{(k)}) - \sum_{i=1}^{N_{c}} g(c_{i}^{(k-1)}) \\
    = &\sum_{j=1}^{\lfloor \frac{N_{c}}{2} \rfloor} [g(c_{(k+m-1)\%N_{c}+1}^{(k)}) + g(c_{(k+m)\%N_{c}+1}^{(k)})] \\
    - &\sum_{j=1}^{\lfloor \frac{N_{c}}{2} \rfloor}[g(c_{(k+m-1)\%N_{c}+1}^{(k-1)}) + g(c_{(k+m)\%N_{c}+1}^{(k-1)}) ],
\end{split}
\end{equation}
where $m=2(j-1)$.
\end{proof}

\subsection{Proof of Theorem 3}
\begin{theorem}
\label{thm:sl_obj}
Given $f(\mathcal{C})=-\mathds{1}(\mathcal{C} = \bar{\mathcal{C}})$, the optimization objective in the HLGP framework for a problem instance $I$ is to identify $\pi_{\theta_{G}}$ and $\pi_{\theta_{L}}$ so as to minimize:
\begin{equation}
    L(\theta_{G}, \theta_{L}, \bar{\mathcal{C}}) = 
    - \log\pi_{\theta_{G}}(\bar{\mathcal{C}}|I) - \sum_{i=1}^{N_{c}} \log\pi_{\theta_{L}}(\bar{\mathcal{C}}_{i}|I_{i}),
\end{equation}
where $I_{i} = (G_{i}, D, 2)$ denotes the subproblem, with $G_{i} = \bar{c}_{i} \cup \bar{c}_{i\%N_{c}+1}$, and $\bar{\mathcal{C}}_{i} = \{\bar{c}_{i}, \bar{c}_{i\%N_{c}+1} \}$ represents the corresponding label.
\end{theorem}

\begin{proof}
Given that $f(\mathcal{C})=-\mathds{1}(\mathcal{C} = \bar{\mathcal{C}})$ serves as the feasible cost function, the original objective function can be written as:
\begin{equation}
\label{equ:thm3_1}
\begin{split}
& \mathbb{E}_{\mathcal{C}^{(0)}} \mathbb{E}_{\mathcal{C}^{(1)}} \cdots \mathbb{E}_{\mathcal{C}^{(K)}} [\mathds{1}(\mathcal{C}^{(K)} = \bar{\mathcal{C}})] \\
= & \frac{1}{K+1} \mathbb{E}_{\mathcal{C}^{(0)}} \mathbb{E}_{\mathcal{C}^{(1)}} \cdots \mathbb{E}_{\mathcal{C}^{(K)}} [\mathds{1}(\mathcal{C}^{(K)} = \bar{\mathcal{C}}) + \ldots + \mathds{1}(\mathcal{C}^{(0)} = \bar{\mathcal{C}})]
\end{split}
\end{equation}
Subsequently, we can apply the same recursive decomposition as demonstrated in Equation~\ref{equ:thm2_2} and Equation~\ref{equ:thm2_3} to Equation~\ref{equ:thm3_1} as follows:
\begin{equation}
\label{equ:thm3_2}
\begin{split}
&\mathbb{E}_{\mathcal{C}^{(0)}} \mathbb{E}_{\mathcal{C}^{(1)}} \cdots \mathbb{E}_{\mathcal{C}^{(K)}} [\mathds{1}(\mathcal{C}^{(K)} = \bar{\mathcal{C}}) + \ldots + \mathds{1}(\mathcal{C}^{(0)} = \bar{\mathcal{C}})] \\ =
&\mathbb{E}_{\mathcal{C}^{(0)}} \mathbb{E}_{\mathcal{C}^{(1)}} \cdots \mathbb{E}_{\mathcal{C}^{(K)}} [\mathds{1}(\mathcal{C}^{(K)} = \bar{\mathcal{C}})]  + \dots + \mathbb{E}_{\mathcal{C}^{(0)}} [\mathds{1}(\mathcal{C}^{(0)} = \bar{\mathcal{C}})]
\end{split}
\end{equation}
Each term on the RHS of Equation~\ref{equ:thm3_2} corresponds to a distinct objective. The global partition policy at level $k=0$ is designed to maximize the probability of the solution $\bar{\mathcal{C}}$ for a specific problem instance $I$. On the other hand, the local partition policy at level $k>0$ aims to maximize the probability of the solution $\bar{\mathcal{C}}_{i}$ for a given subproblem $I_{i}$ as defined earlier. In practical scenarios, maximizing the log-probability objective can be utilized without affecting the theoretical optimal policies.

\end{proof}

\subsection{Proof of Proposition 1}

\begin{proposition}
\label{prop:mlmdp}
In the multi-level MDP framework, at the global partition level, for $t\geq1$, the state $x_{t}^{(0)}\in\mathbb{X}^{(0)}$ comprises problem instance $I$ and the partial partition solution $\mathcal{C}^{(0)}[0:t-1]$ ($\mathcal{C}^{(0)}[0] = \emptyset$). The initial distribution $\mu^{(0)}$ aligns with the problem instance distribution $p_{I}$. The action $u_{t}^{(0)}\in\mathbb{U}^{(0)}$ involves selecting a node denoted as $\mathcal{C}^{(0)}[t]$, from unvisited customer nodes and the depot node. Let $i_{t}$ index subgraphs such that at timestep $t$, the agent is constructing $i_{t}$-th subgraph $c_{i_{t}}^{(0)}$. If the subgraph $c_{i_{t}}^{(0)}$ is created, then the reward $r_{t}^{(0)}$ is set as $-g(c_{i_{t}}^{(0)})$; otherwise, it remains at $0$. The global partition policy, parameterized by $\theta_{G}$, is thus specified as $\pi_{\theta_{G}}(u_{t}^{(0)}|x_{t}^{(0)})$. 

At each local partition level $k\geq1$, the local partition policy is tasked with solving the sequence of subproblems obtained from $\mathcal{C}^{(k-1)}$. In this context, we use $j_{t}$ as an index for subproblems, indicating that the $j_{t}$-th subproblem denoted as $I_{j_{t}}^{(k-1)}$, is currently being addressed but remains incomplete at timestep $t$. The state $x_{t}^{(k)}\in\mathbb{X}^{(k)}$ consists of the subproblem sequence and the partial solution of $I_{j_{t}}^{(k-1)}$. The initial state distribution $\mu^{(k)}$ corresponds to the distribution of the subproblem sequence. The action $u_{t}^{(k)}\in\mathbb{U}^{(k)}$ involves selecting a node for solving $I_{j_{t}}^{(k-1)}$. When $I_{j_{t}}^{(k-1)}$ is successfully solved, the index $j_{t}$ will proceed to the next subproblem, and the reward $r_{t}^{(k)}$ is set as $-(h(\mathcal{C}^{(k)}, k, m) - h(\mathcal{C}^{(k-1)}, k, m))$ (where $m=2(j_{t}-1)$). Otherwise, the reward remains at $0$. Thus, the local partition policy parameterized by $\theta_{L}$, is defined as $\pi_{\theta_{L}}(u_{t}^{(k)}|x_{t}^{(k)})$. The objective is to maximize the sum of expected returns across levels, as defined below:
\begin{equation}
\label{equ:rl_obj}
    J(\theta_{G}, \theta_{L}) = \mathbb{E}_{\omega^{(0)}} [\sum_{t=1}^{T^{(0)}}r_{t}^{(0)}] + \cdots+
\mathbb{E}_{\omega^{(0)}}\cdots\mathbb{E}_{\omega^{(K)}} [\sum_{t=1}^{T^{(K)}}r_{t}^{(K)}],
\end{equation}
where $T^{(k)}$ and $\omega^{(k)}$ denote the horizon and the trajectory at level $k$.
\end{proposition}

\begin{proof}
We aim to prove that, at any partition level $k \geq 0$, the definitions of state and action, as presented in Proposition~\ref{prop:mlmdp}, are essential for preserving the Markovian property of the corresponding MDP. At the global partition level, the partial partition solution evolves from $\mathcal{C}^{0}[0:t-1]$ to $\mathcal{C}^{0}[0:t]$ by selecting an unvisited node $\mathcal{C}^{0}[t]$. It is evident that the dynamics remains Markovian at this level. At the local partition level $k \geq 1$, the state consists of the subproblem sequence and the partial solution of the subproblem $I_{j_{t}}^{k-1}$ addressed at time step $t$. When solving $I_{j_{t}}^{k-1}$, executing the action $u_{t}^{k}$ augments the partial solution of $I_{j_{t}}^{k-1}$ with an unvisited node for the subsequent partial solution of $I_{j_{t}}^{k-1}$. Apart from $I_{j_{t}}^{k-1}$, the remaining subproblems remain unaltered. Upon the completion of $I_{j_{t}}^{k-1}$, the partial solution of $I_{j_{t+1}}^{k-1}$ begins to be addressed. Consequently, the dynamics of each local partition level retain their Markovian nature.
\end{proof}

\section{Related Works}

\textbf{Constructive Methods.} The learning-based constructive method aims to develop an efficient and (near-)optimal end-to-end neural solver for combinatorial optimization problems (COPs). Among them, the PointerNet~\cite{vinyals2015pointer} and the Transformer~\cite{kool2018attention} are two commonly employed architectures trained using either SL~\cite{vinyals2015pointer} or RL~\cite{nazari2018reinforcement, kool2018attention}, to progressively infer complete solutions with the aid of the autoregressive mechanism. Additionally, some methods consider inherent properties in VRPs, such as multiple optima~\cite{kwon2020pomo} and symmetry~\cite{kim2022sym}, to enhance the quality of solutions. However, the delayed rewards in the training of RL policies lead to the high GPU memory demands for gradient backpropagation. Thus, SL-driven policies, such as BQ~\cite{drakulic2024bq}, LEHD~\cite{luo2024neural} and SIL~\cite{luo2024self}, have resurfaced to alleviate training difficulties and moderately improve generalization. Moreover, some approaches employ techniques such as meta-learning~\cite{son2023meta, zhou2023towards, manchanda2022generalization, qiu2022dimes}, knowledge distillation~\cite{bi2022learning}, or ensemble learning~\cite{gao2023towards, jiang2024ensemble, grinsztajn2023winner} to enhance the generalization of neural solvers. However, these constructive methods might still experience performance deterioration when encountering substantial distribution or scale shifts.

\noindent \textbf{Iterative Methods.} In comparison to constructive methods, iterative methods offer the benefit of consistently improving a given solution until convergence. Both L2I~\cite{lu2019learning} and NeuRewriter~\cite{chen2019learning} utilize RL policies to choose from local improvement operators to refine the given initial solutions. Likewise,~\citet{hottung2020neural} interleaves the use of heuristic destroy operators and a set of learning-based repair policies to generate a new solution. Moreover, DACT~\cite{ma2021learning} focuses on the expressive representation of solution encodings provided to the RL policy. Additionally, both NeuralLKH~\cite{xin2021neurolkh} and Neural k-Opt~\cite{ma2024learning} utilize the RL policy to substitute the heuristic rule for edge exchanges in k-opt algorithms. However, these iterative methods trade efficiency for improved performance and still rely heavily on well-crafted rules.

\noindent \textbf{Divide-and-Conquer Methods.} The divide-and-conquer paradigm can leverage local topological features that are insensitive to distribution or scale shifts to mitigate performance deterioration. ~\citet{fu2021generalize}, ~\citet{kim2021learning} and~\citet{cheng2023select} attempt to transfer a standard neural solver for larger instances by sampling multiple small-scale subgraphs using heuristic rules. In contrast, both L2D~\cite{li2021learning} and RGB~\cite{zong2022rbg} learn a policy to choose among heuristically constructed subgraphs for iterative enhancement. However, the used heuristic rules may lead to solutions being trapped in local optima. Unlike the above methods, TAM~\cite{hou2023generalize}, GLOP~\cite{ye2024glop} and UDC~\cite{zheng2024udc} opt to use a learning-based policy to globally partition the entire instance into subproblems that are solved by the pretrained local construction policy. In addition,~\citet{pan2023h} resort to a hierarchical RL model where a local policy solves subproblems assigned by a jointly trained global policy. However, these neural partition policies may suffer performance degradation due to compounded errors during the partition process.

\section{Toy Examples}
The toy examples for the overall HLGP framework, the RL-driven HLGP training framework and the SL-driven HLGP training framework
are illustrated in Figure~\ref{fig:hlgp_toy_example}, Figure~\ref{fig:rl_hlgp_toy_example}, and Figure~\ref{fig:sl_hlgp_toy_example}, respectively.

\begin{figure*}[t]
    \centering
    \includegraphics[width=1\textwidth]{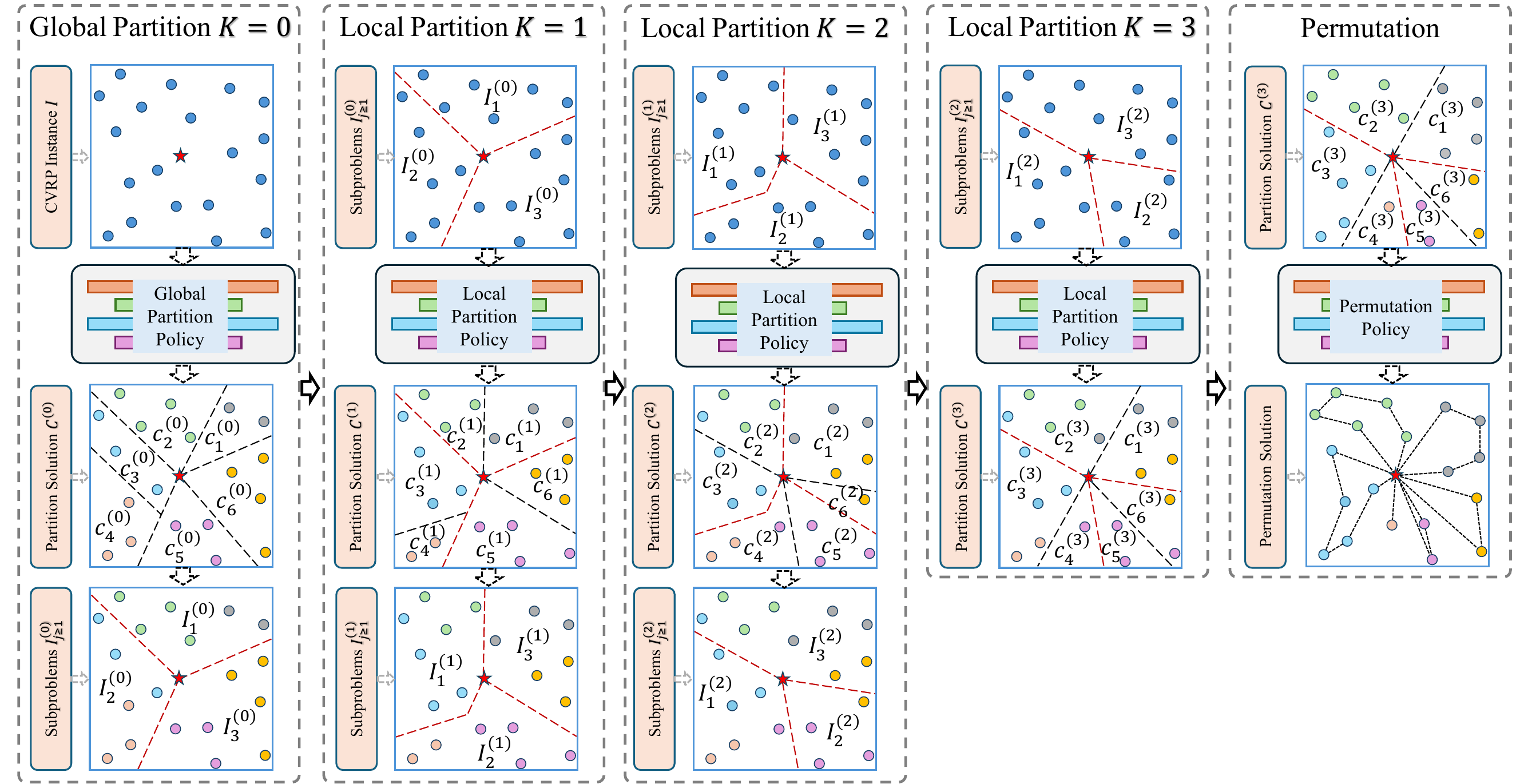}
    \caption{The toy example of the overall HLGP framework.
    }
    \label{fig:hlgp_toy_example}
\end{figure*}

\begin{figure*}[t]
    \centering
    \includegraphics[width=1\textwidth]{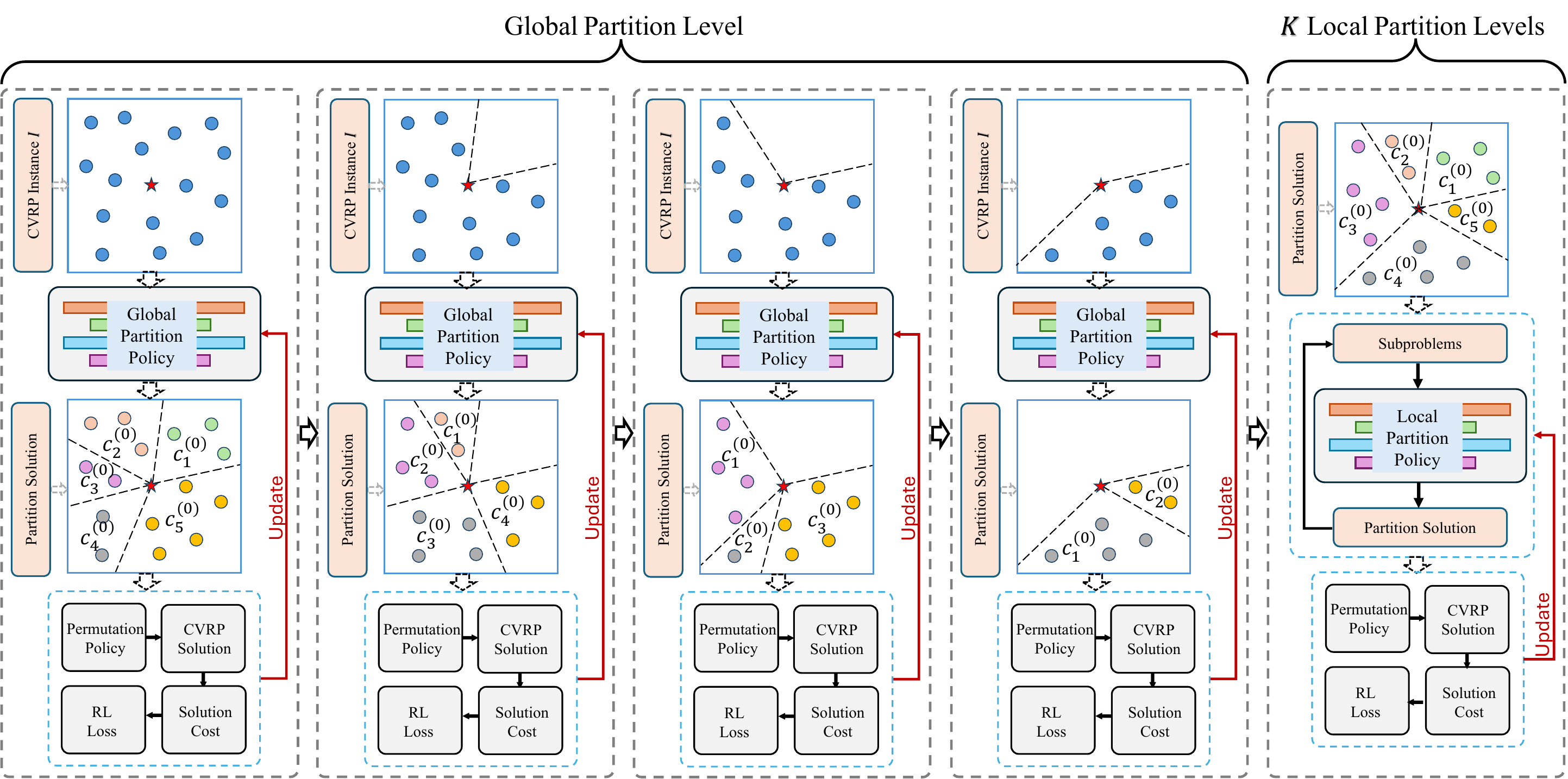}
    \caption{The toy example of the RL-driven HLGP training framework.
    }
    \label{fig:rl_hlgp_toy_example}
\end{figure*}

\begin{figure*}[t]
    \centering
    \includegraphics[width=1\textwidth]{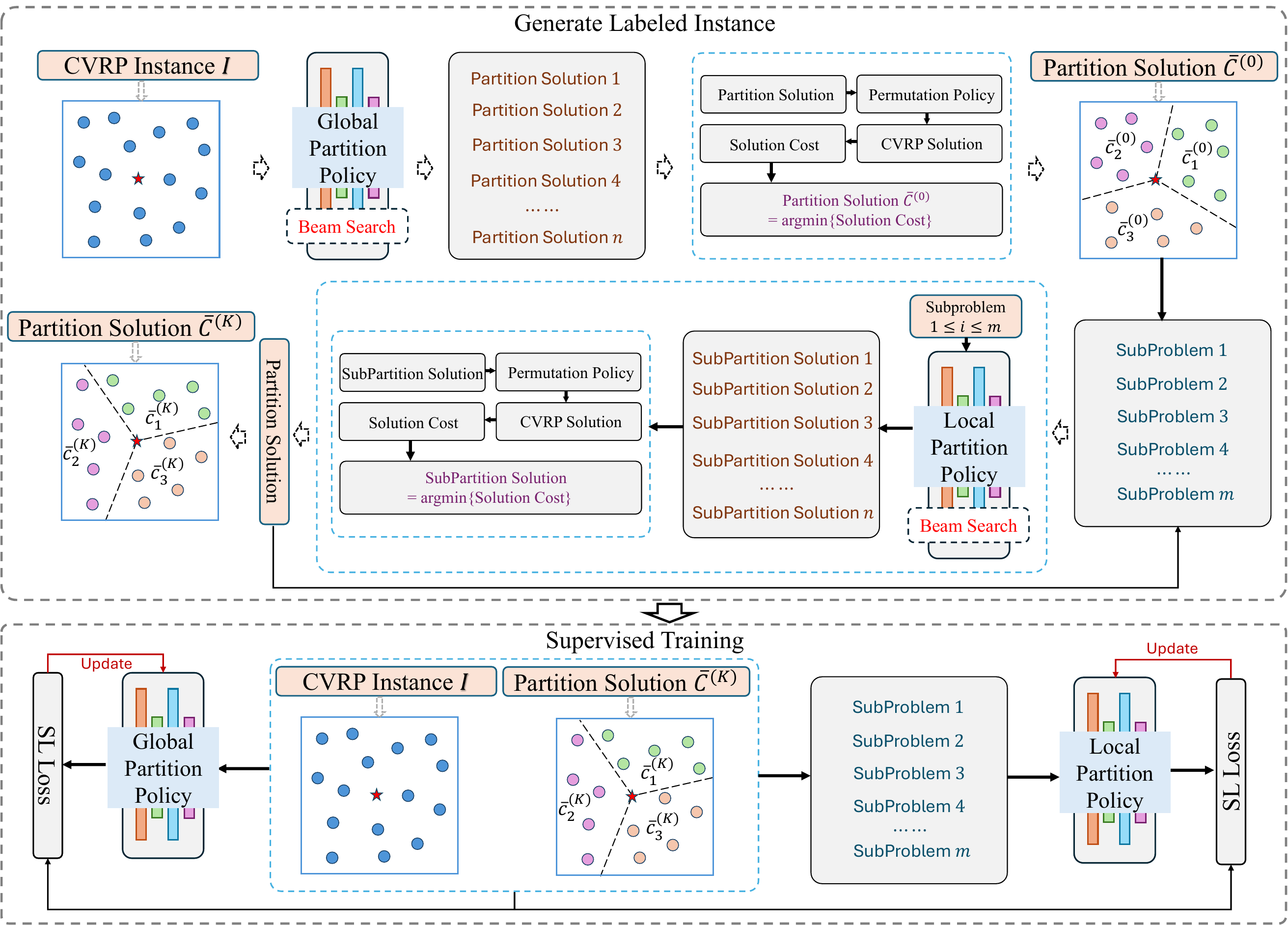}
    \caption{The toy example of the SL-driven HLGP training framework.
    }
    \label{fig:sl_hlgp_toy_example}
\end{figure*}

\newpage

\begin{figure*}[t]
\centering
\rotatebox{90}{~~~~~~~~~~~~\scriptsize{CVRP1000+Uniform}}
\subfigure[\shortstack{Cost=59.37 \\ glob.}]{\label{fig:u_rl_1}\includegraphics[width=0.23\textwidth]{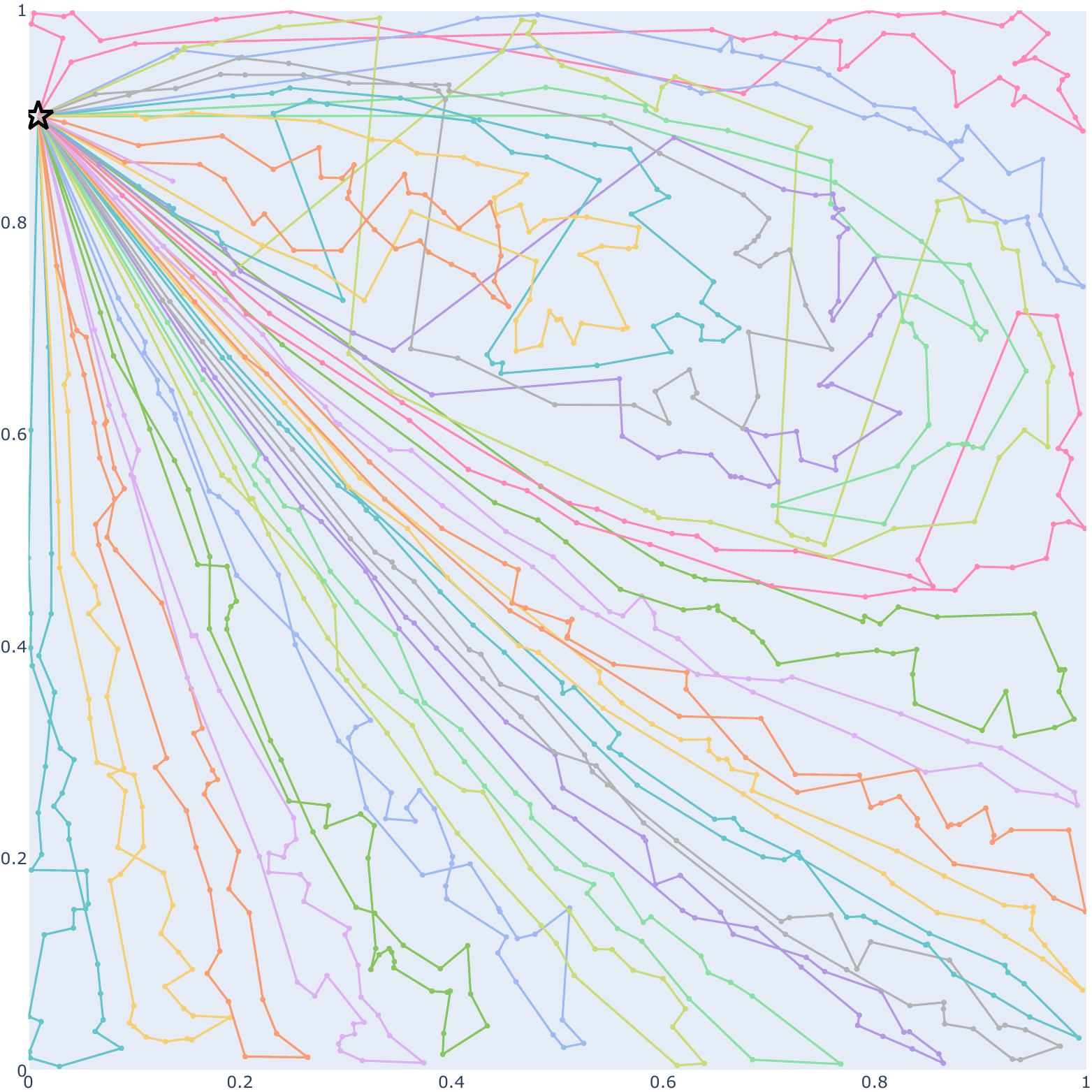}}
\subfigure[\shortstack{Cost=53.99 \\ glob.+loc.}]{\label{fig:u_rl_2}\includegraphics[width=0.23\textwidth]{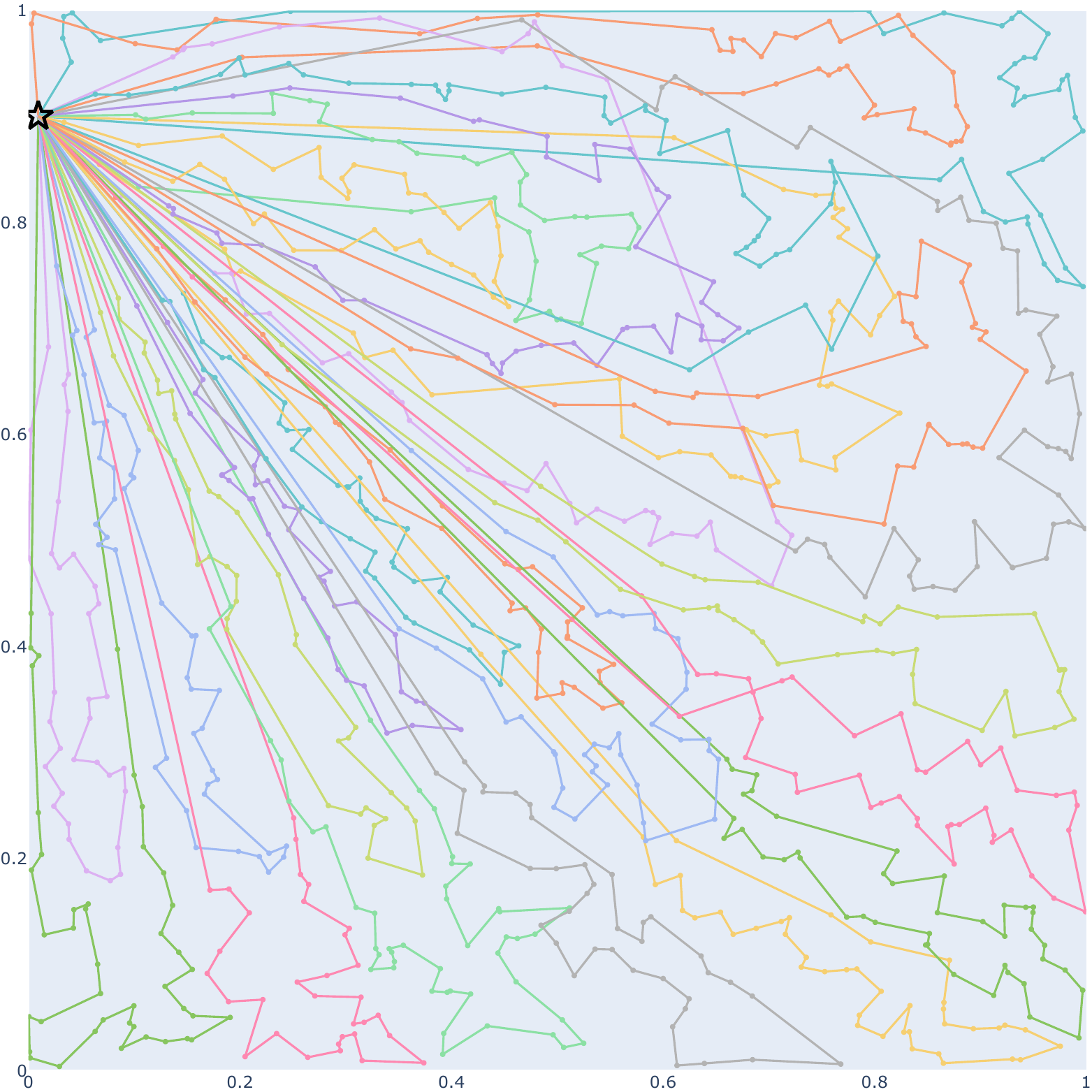}}
\subfigure[\shortstack{Cost=56.49 \\ glob.+subp.}]
{\label{fig:u_rl_3}\includegraphics[width=0.23\textwidth]{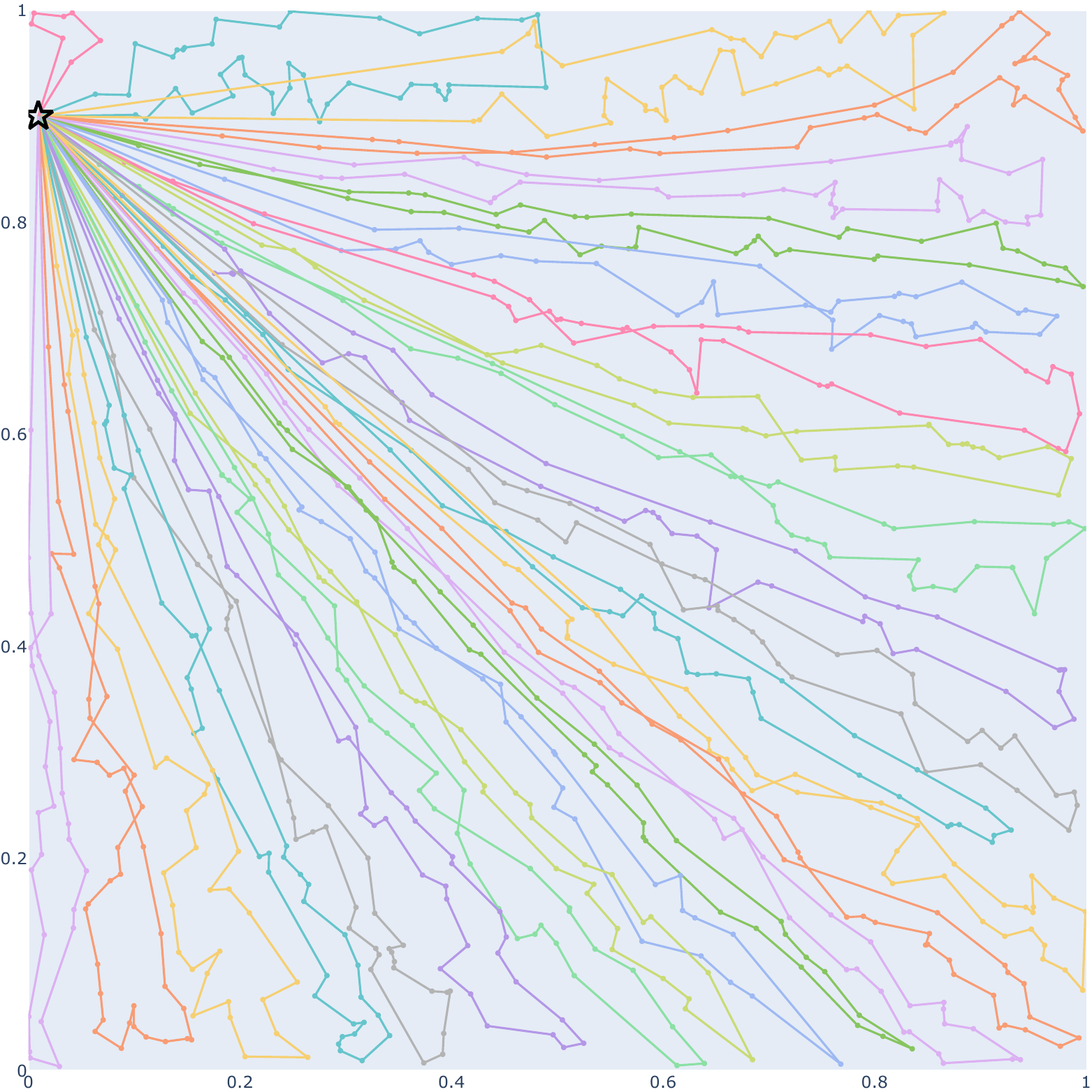}}
\subfigure[\shortstack{Cost=52.73 \\ glob.+loc.+subp.}]
{\label{fig:u_rl_4}\includegraphics[width=0.23\textwidth]{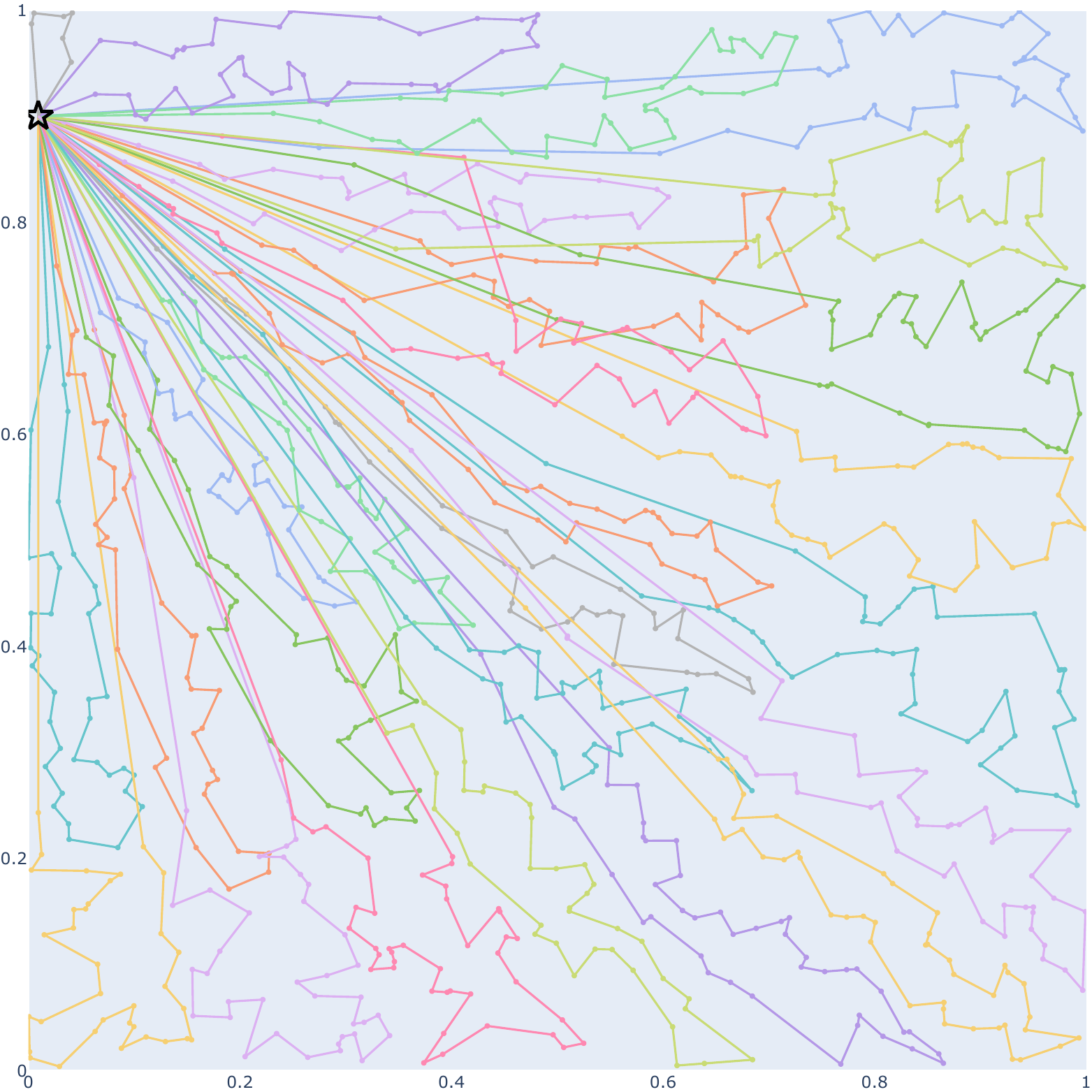}}

\rotatebox{90}{~~~~~~~~~~~~\scriptsize{CVRP1000+Gaussian}}
\subfigure[\shortstack{Cost=51.61 \\ glob.}]{\label{fig:g_rl_1}\includegraphics[width=0.23\textwidth]{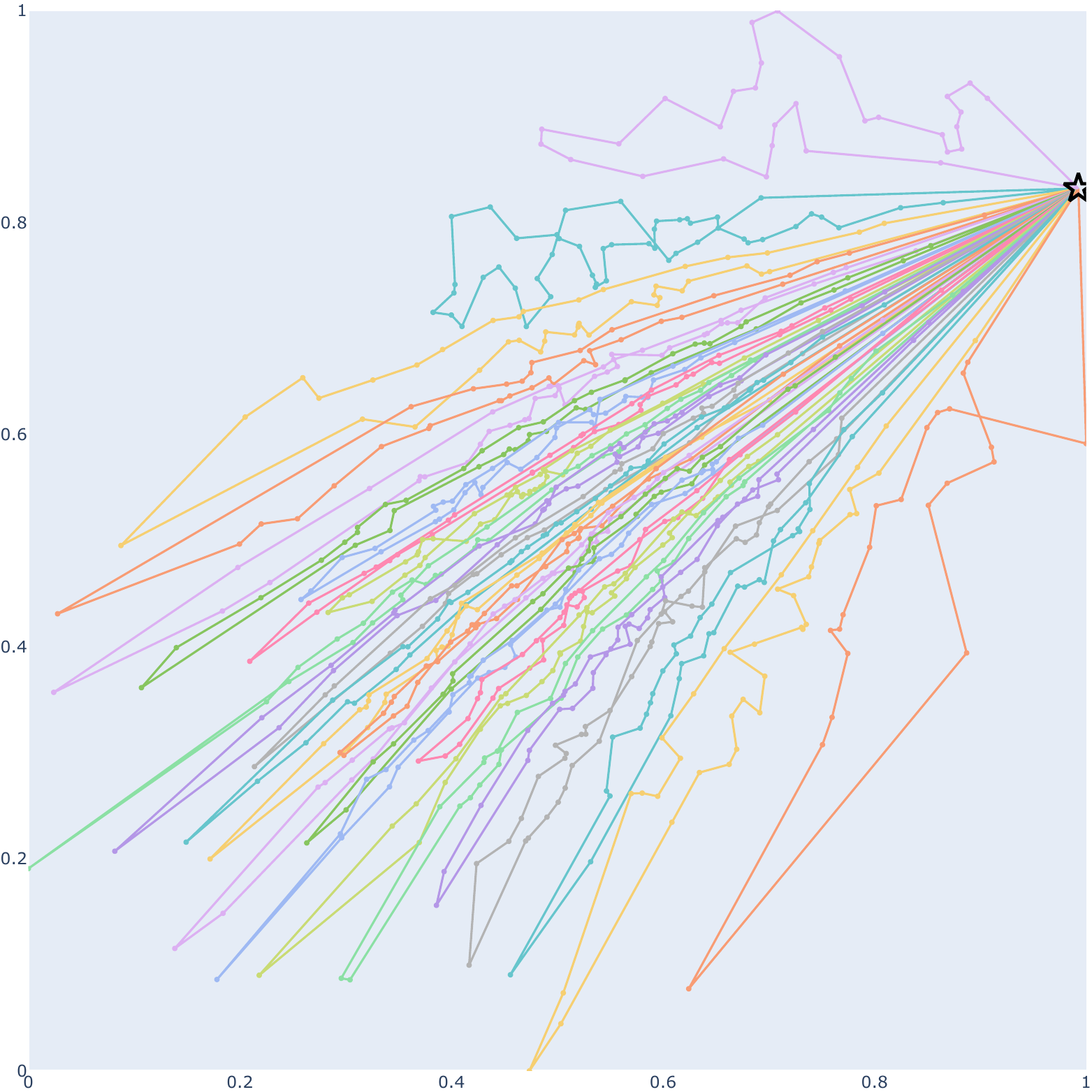}}
\subfigure[\shortstack{Cost=43.59 \\ glob.+loc.}]{\label{fig:g_rl_2}\includegraphics[width=0.23\textwidth]{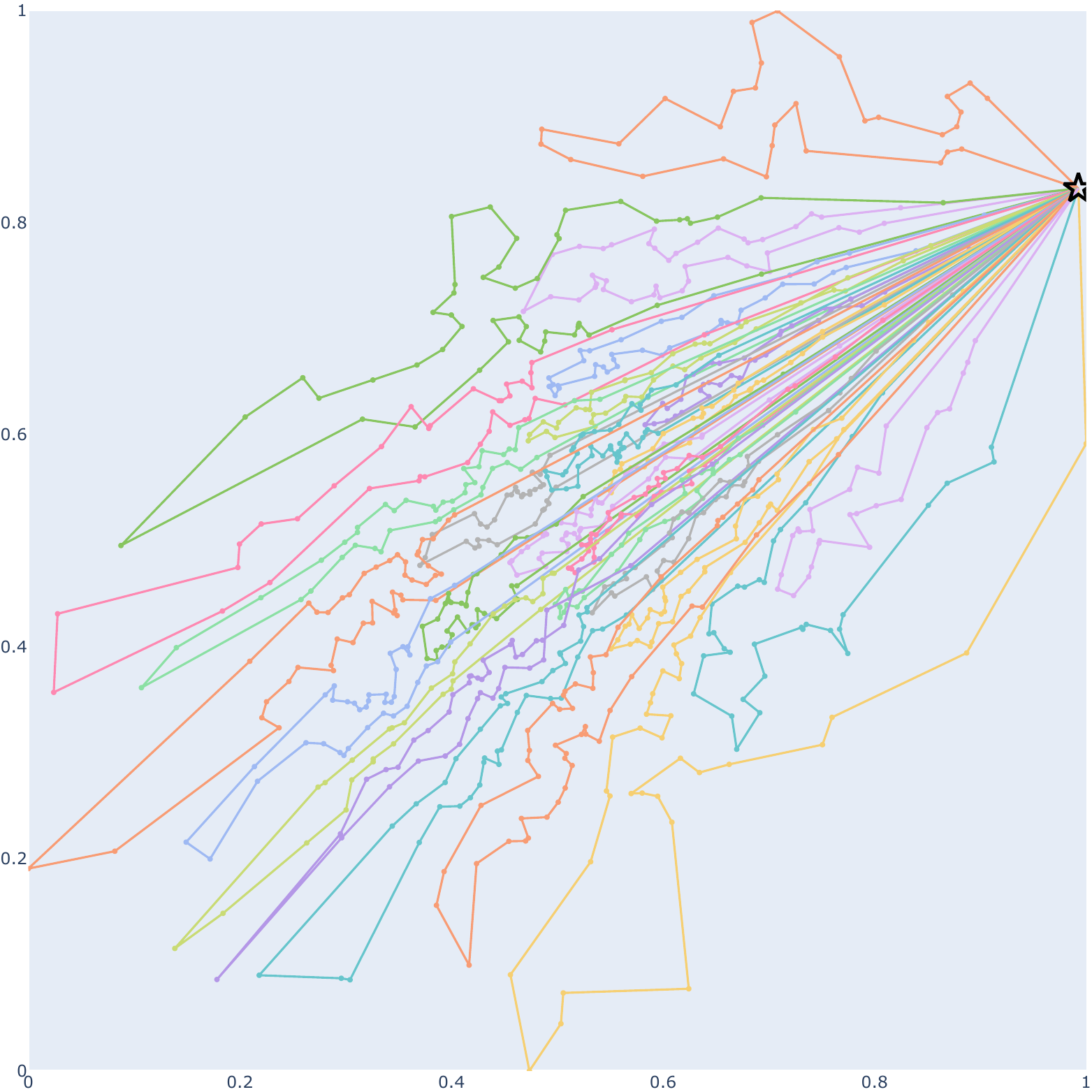}}
\subfigure[\shortstack{Cost=49.08 \\ glob.+subp.}]
{\label{fig:g_rl_3}\includegraphics[width=0.23\textwidth]{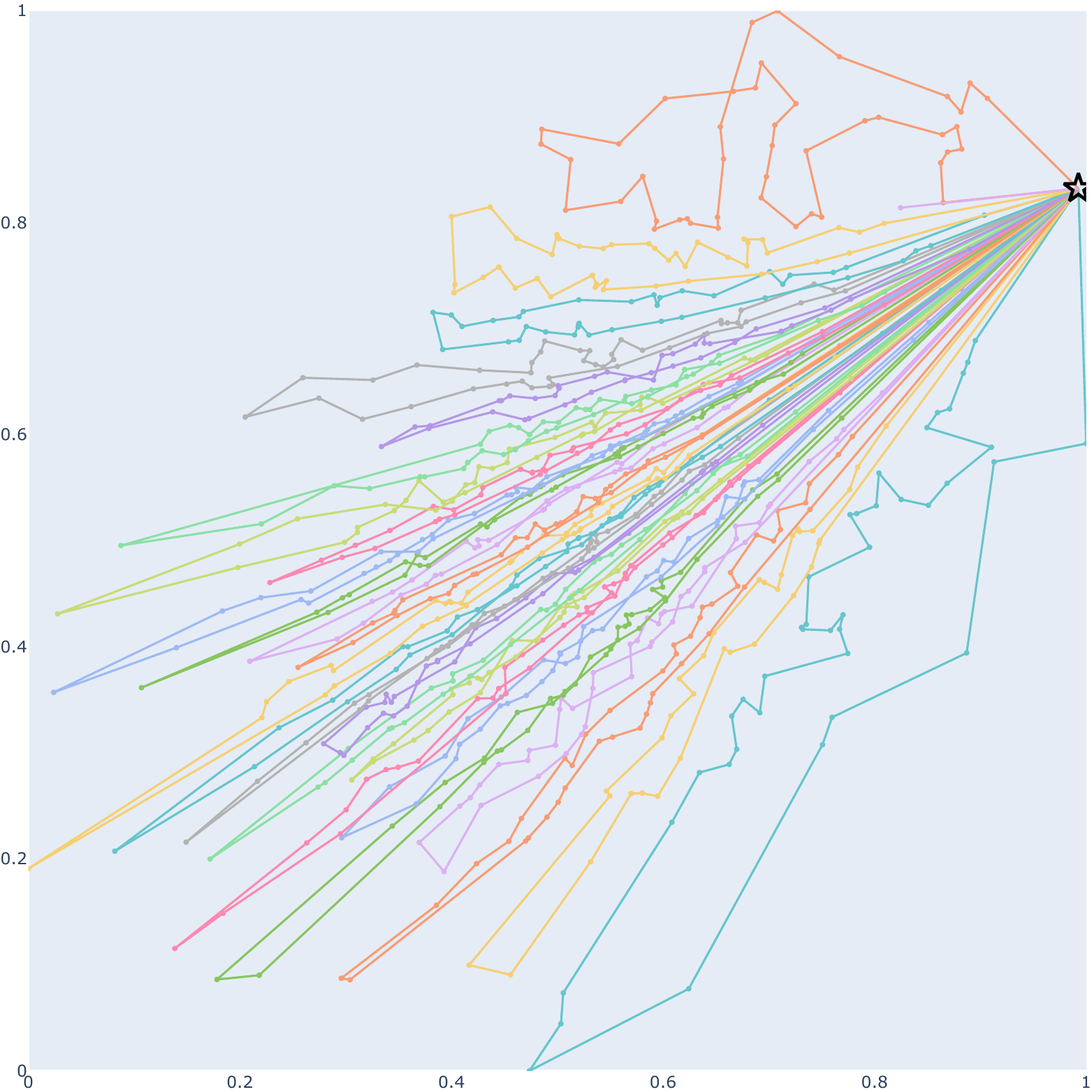}}
\subfigure[\shortstack{Cost=42.18 \\ glob.+loc.+subp.}]
{\label{fig:g_rl_4}\includegraphics[width=0.23\textwidth]{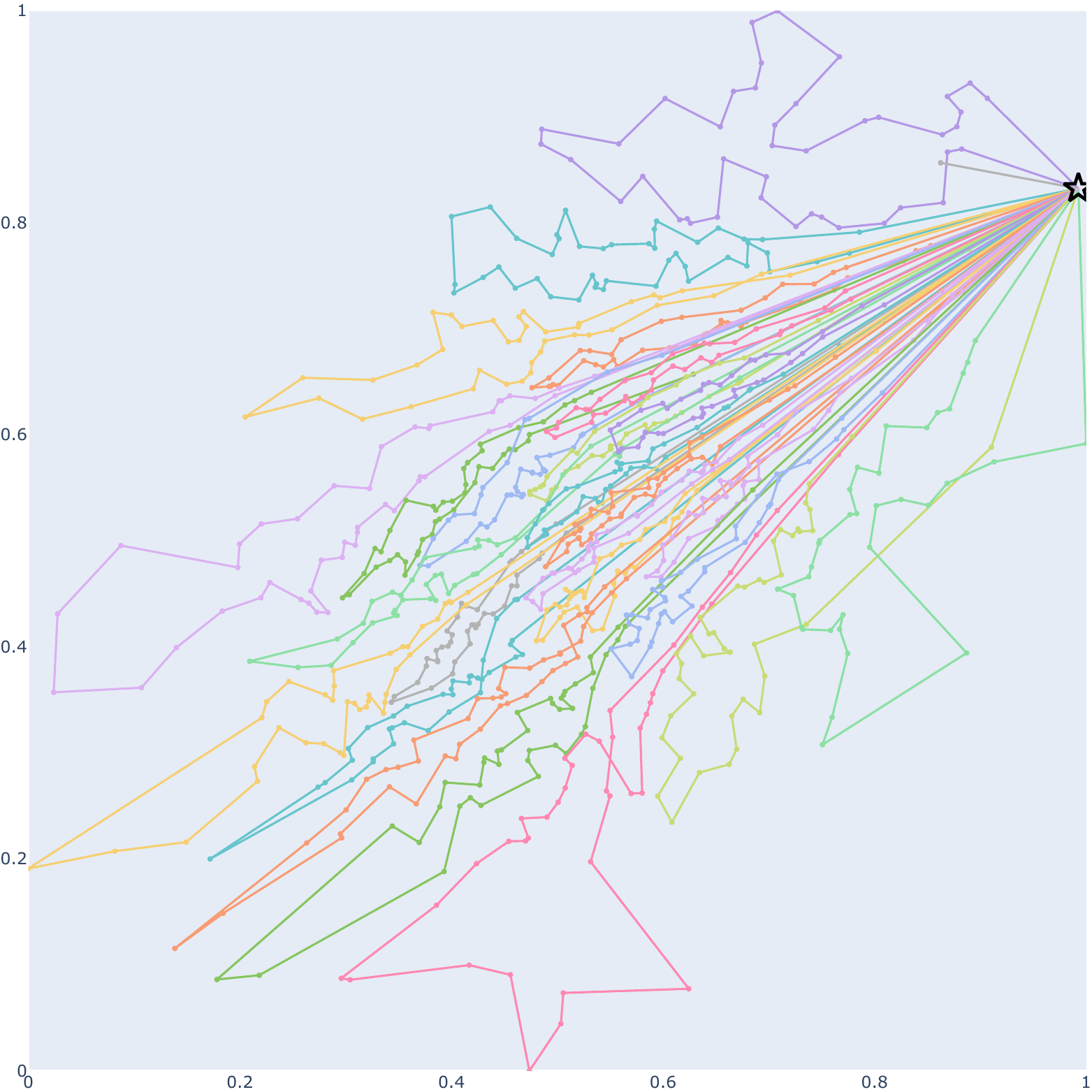}}

\rotatebox{90}{~~~~~~~~~~~~\scriptsize{CVRP1000+Explosion}}
\subfigure[\shortstack{Cost=46.55 \\ glob.}]{\label{fig:e_rl_1}\includegraphics[width=0.23\textwidth]{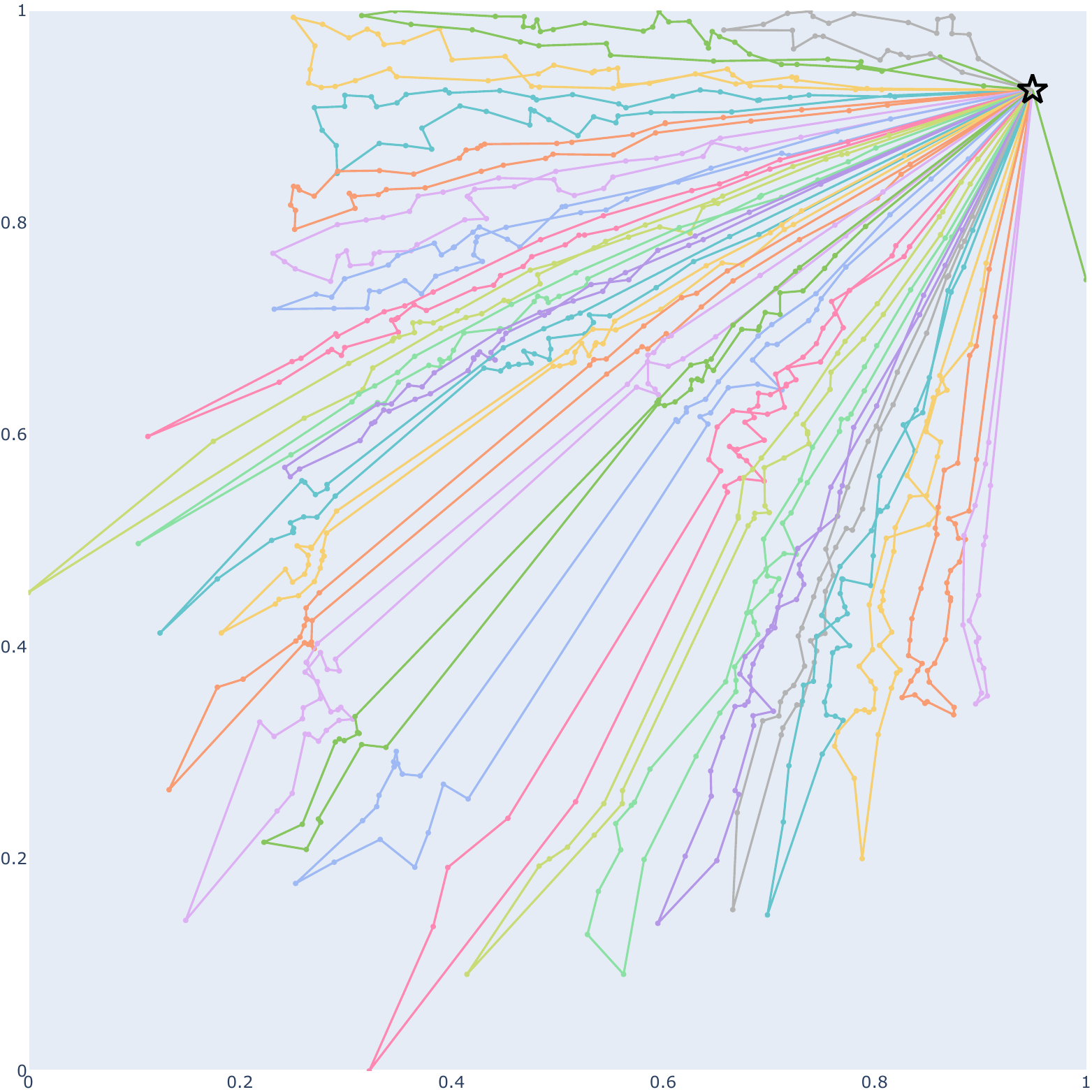}}
\subfigure[\shortstack{Cost=40.90 \\ glob.+loc.}]{\label{fig:e_rl_2}\includegraphics[width=0.23\textwidth]{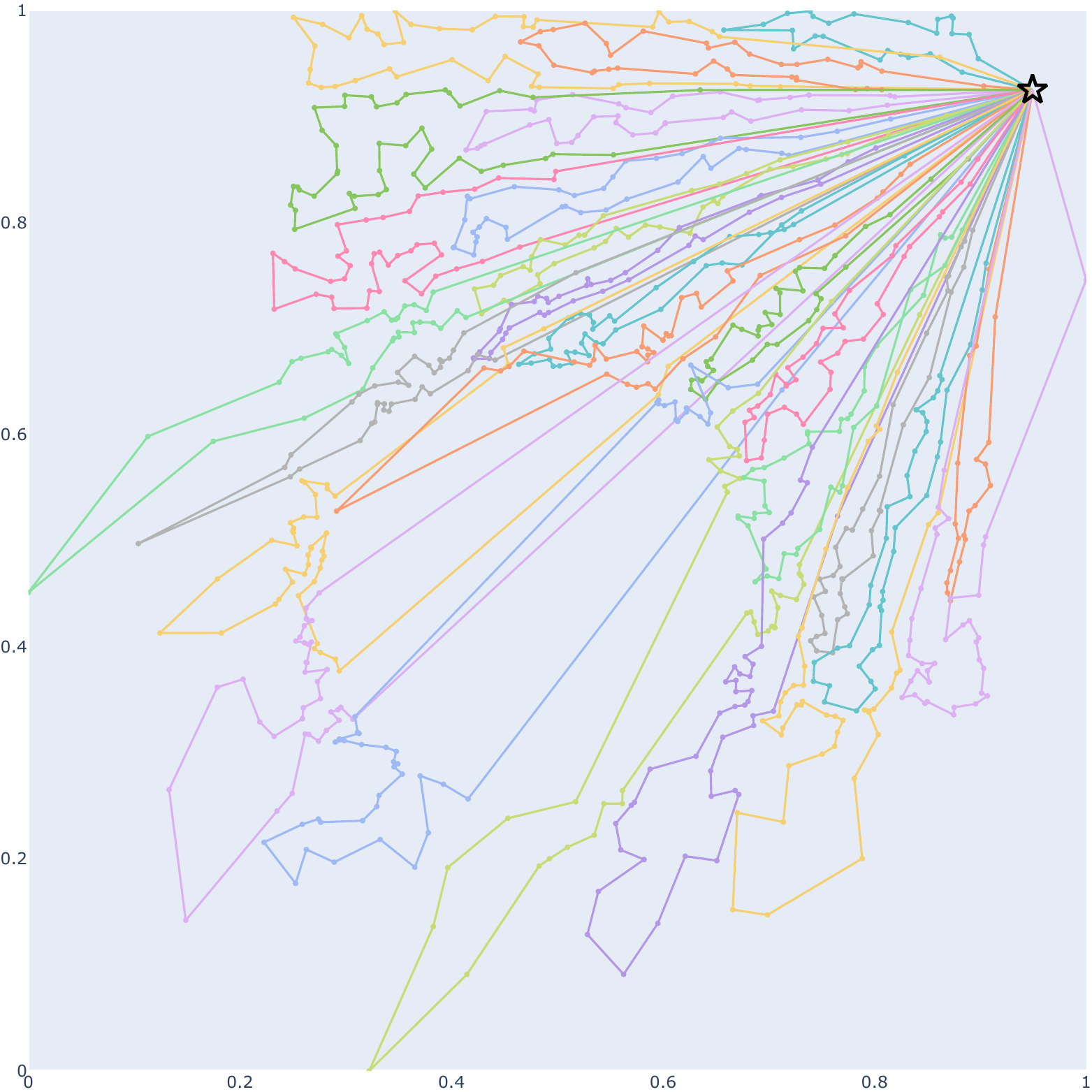}}
\subfigure[\shortstack{Cost=43.75 \\ glob.+subp.}]
{\label{fig:e_rl_3}\includegraphics[width=0.23\textwidth]{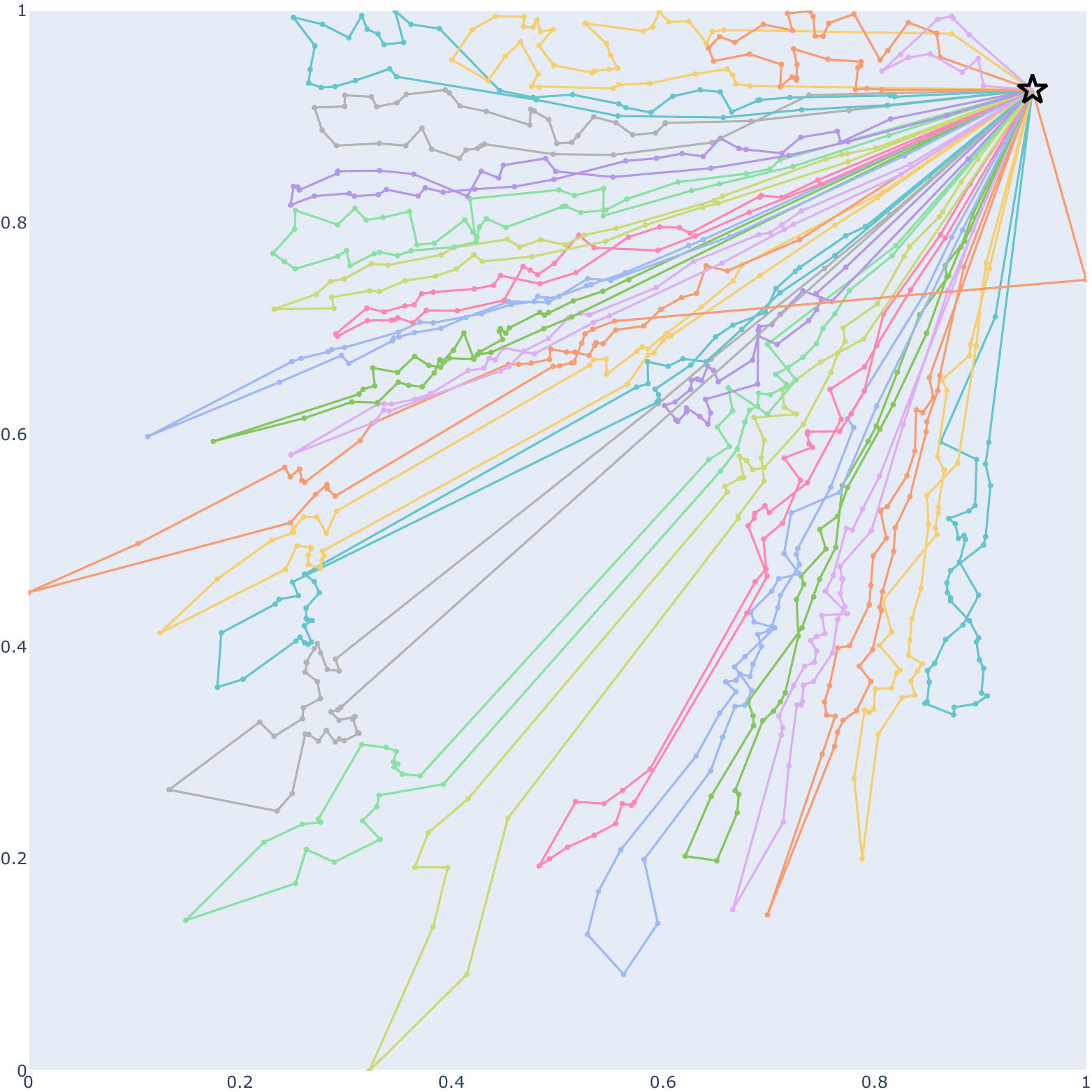}}
\subfigure[\shortstack{Cost=39.73 \\ glob.+loc.+subp.}]
{\label{fig:e_rl_4}\includegraphics[width=0.23\textwidth]{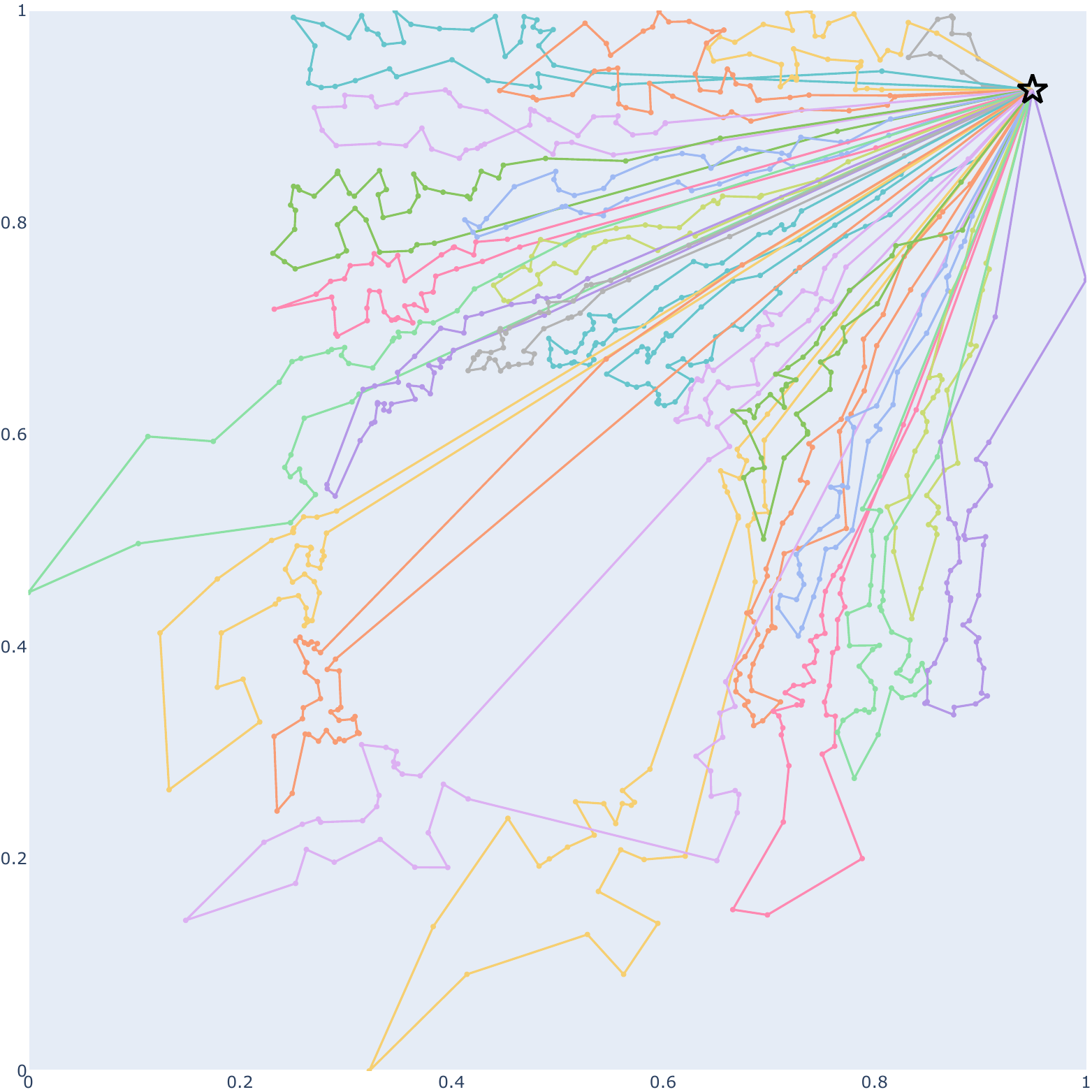}}

\rotatebox{90}{~~~~~~~~~~~~\scriptsize{CVRP1000+Rotation}}
\subfigure[\shortstack{Cost=51.28 \\ glob.}]{\label{fig:r_rl_1}\includegraphics[width=0.23\textwidth]{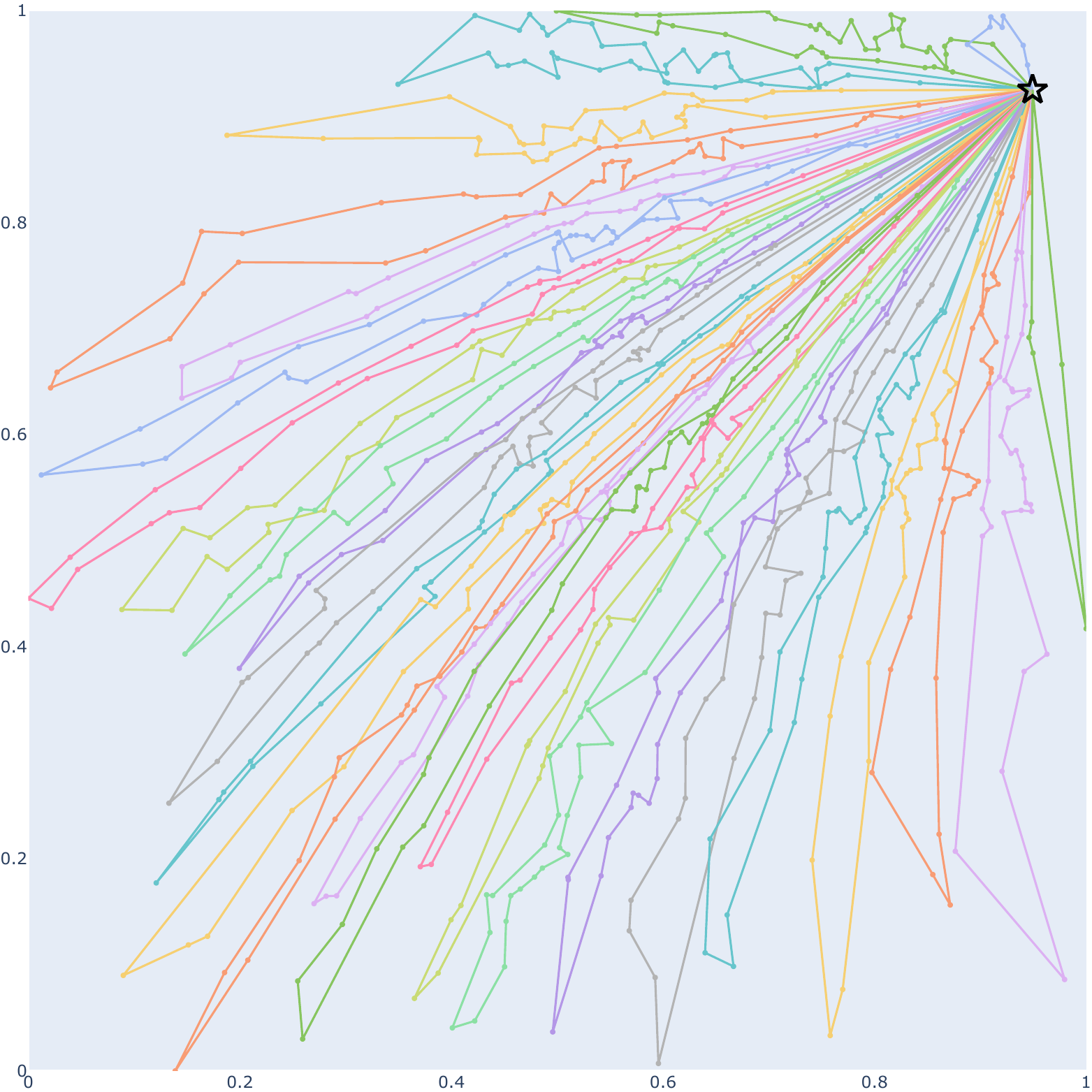}}
\subfigure[\shortstack{Cost=42.27 \\ glob.+loc.}]{\label{fig:r_rl_2}\includegraphics[width=0.23\textwidth]{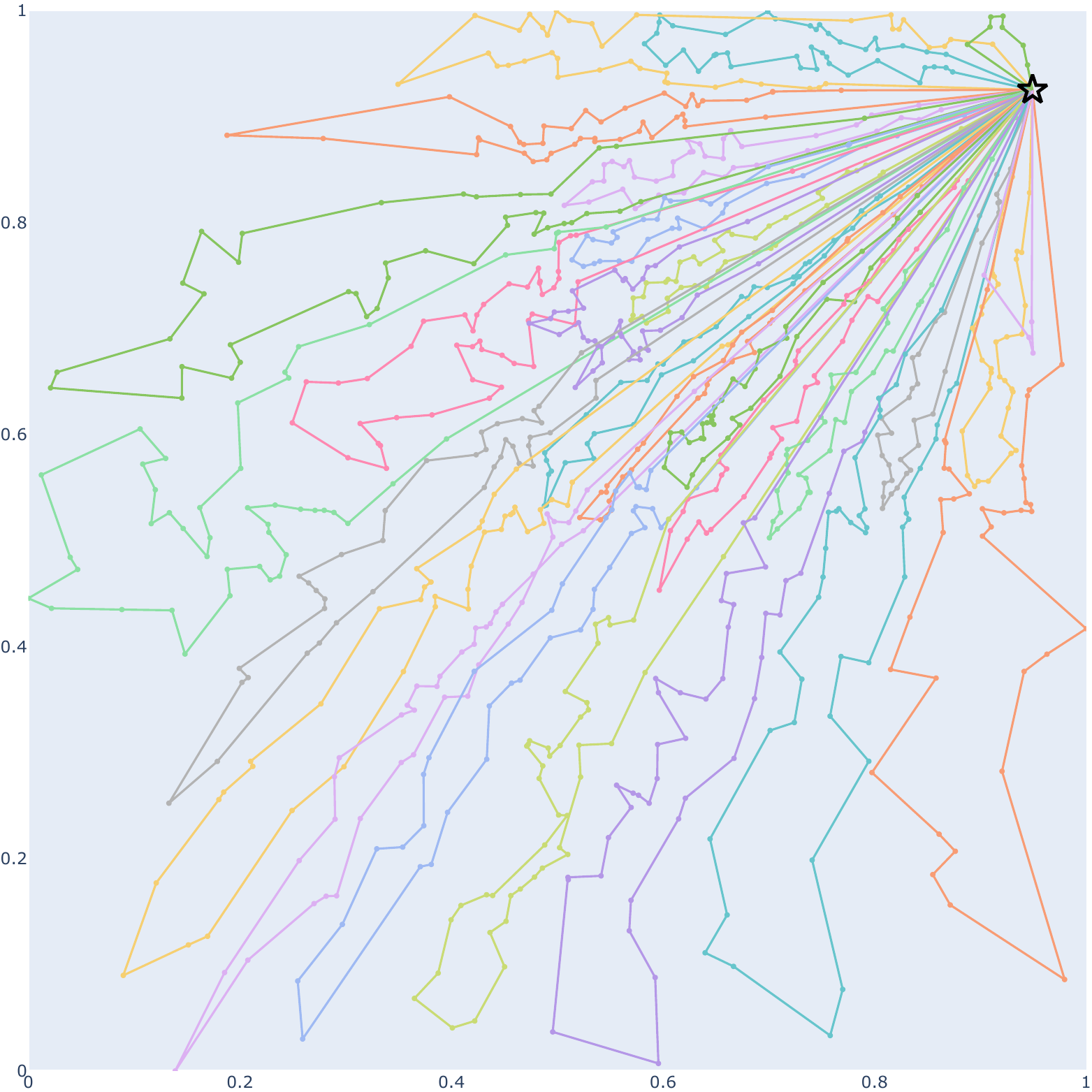}}
\subfigure[\shortstack{Cost=48.65 \\ glob.+subp.}]
{\label{fig:r_rl_3}\includegraphics[width=0.23\textwidth]{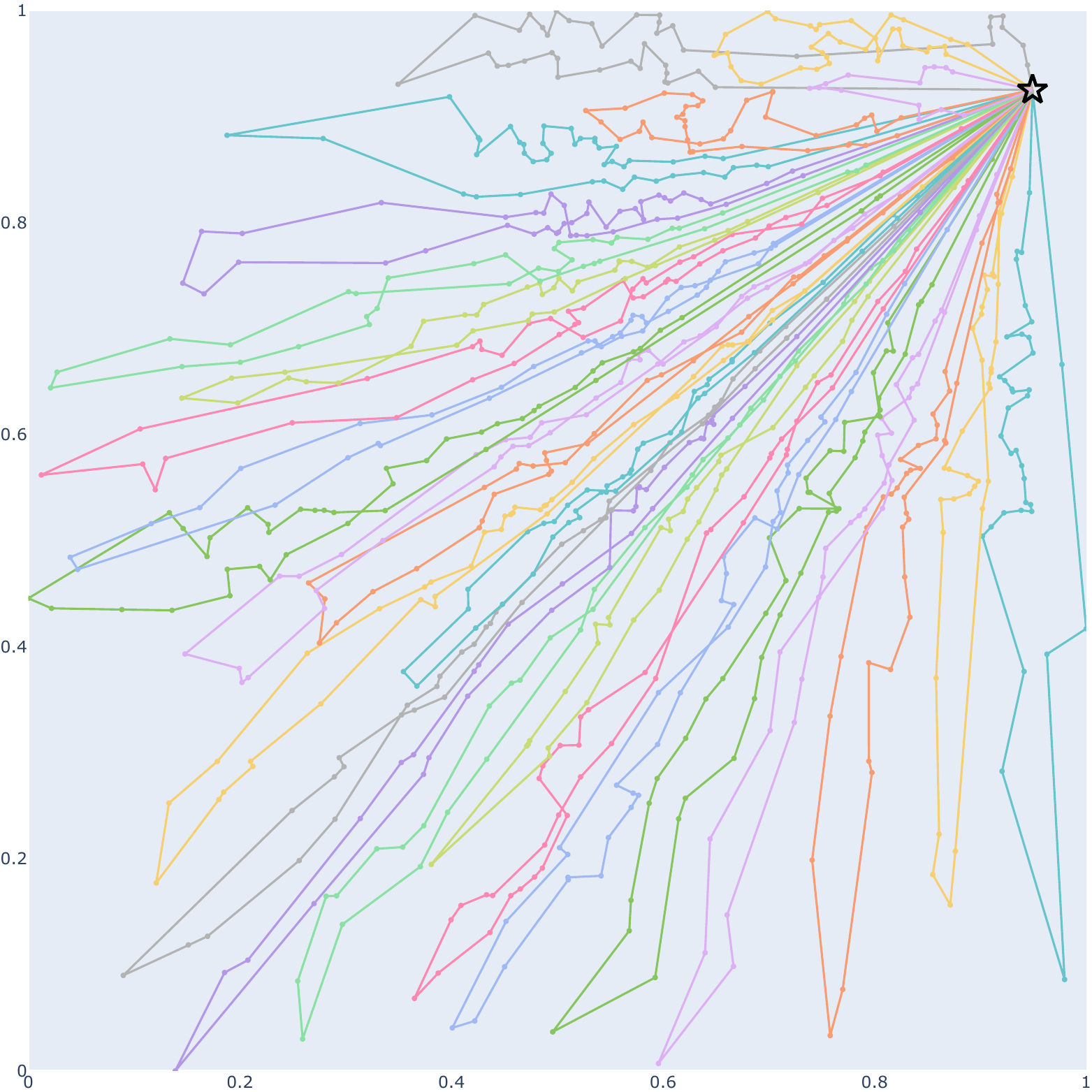}}
\subfigure[\shortstack{Cost=40.91 \\ glob.+loc.+subp.}]
{\label{fig:r_rl_4}\includegraphics[width=0.23\textwidth]{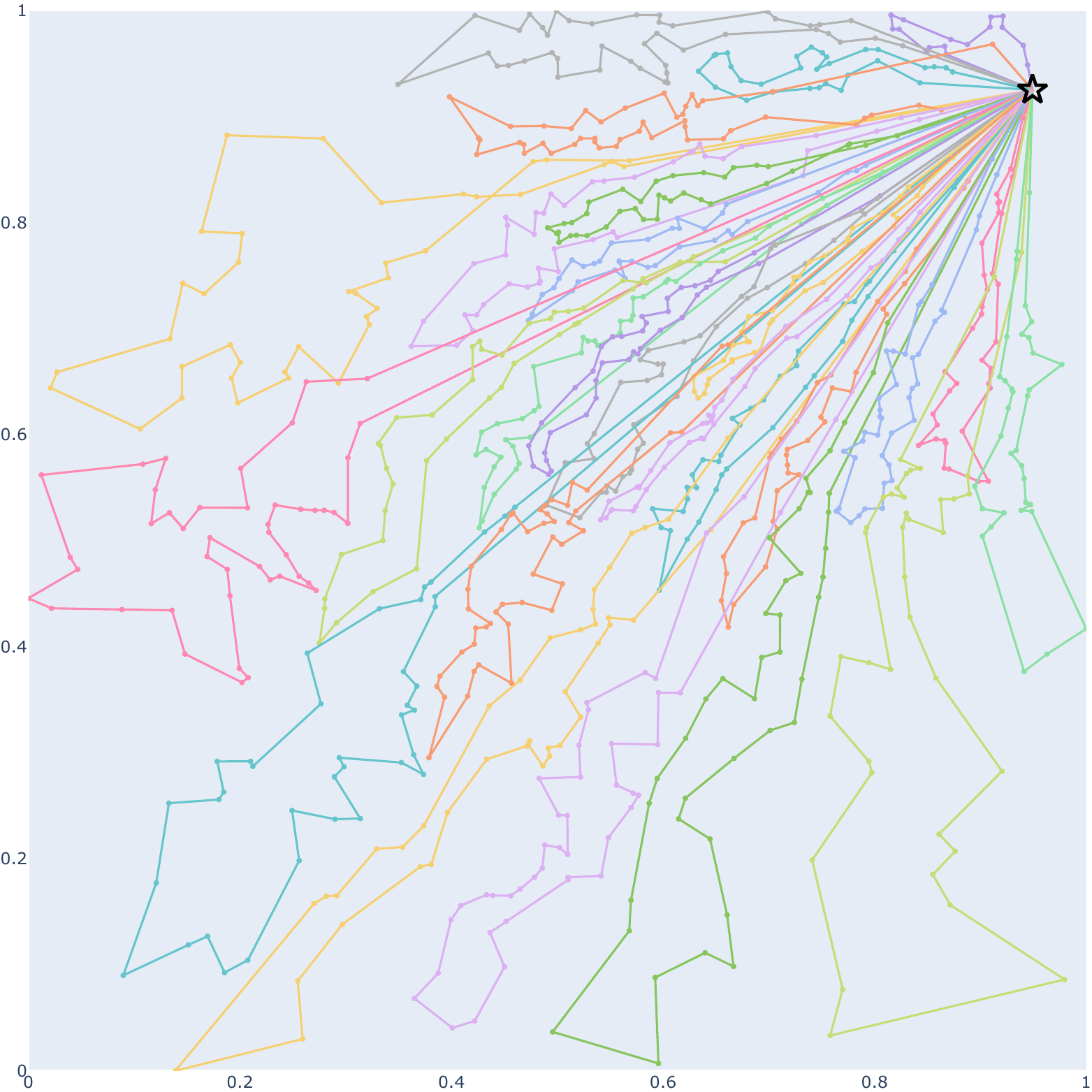}}

\caption{visualization of RL-driven HLGP routes.}
%\vspace{-10pt}}
\label{fig:rl_visual}
\end{figure*}

\begin{figure*}[t]
\centering
\rotatebox{90}{~~~~~~~~~~~~\scriptsize{CVRP1000+Uniform}}
\subfigure[\shortstack{Cost=49.70 \\ SS.+glob.}]{\label{fig:u_sl_1}\includegraphics[width=0.23\textwidth]{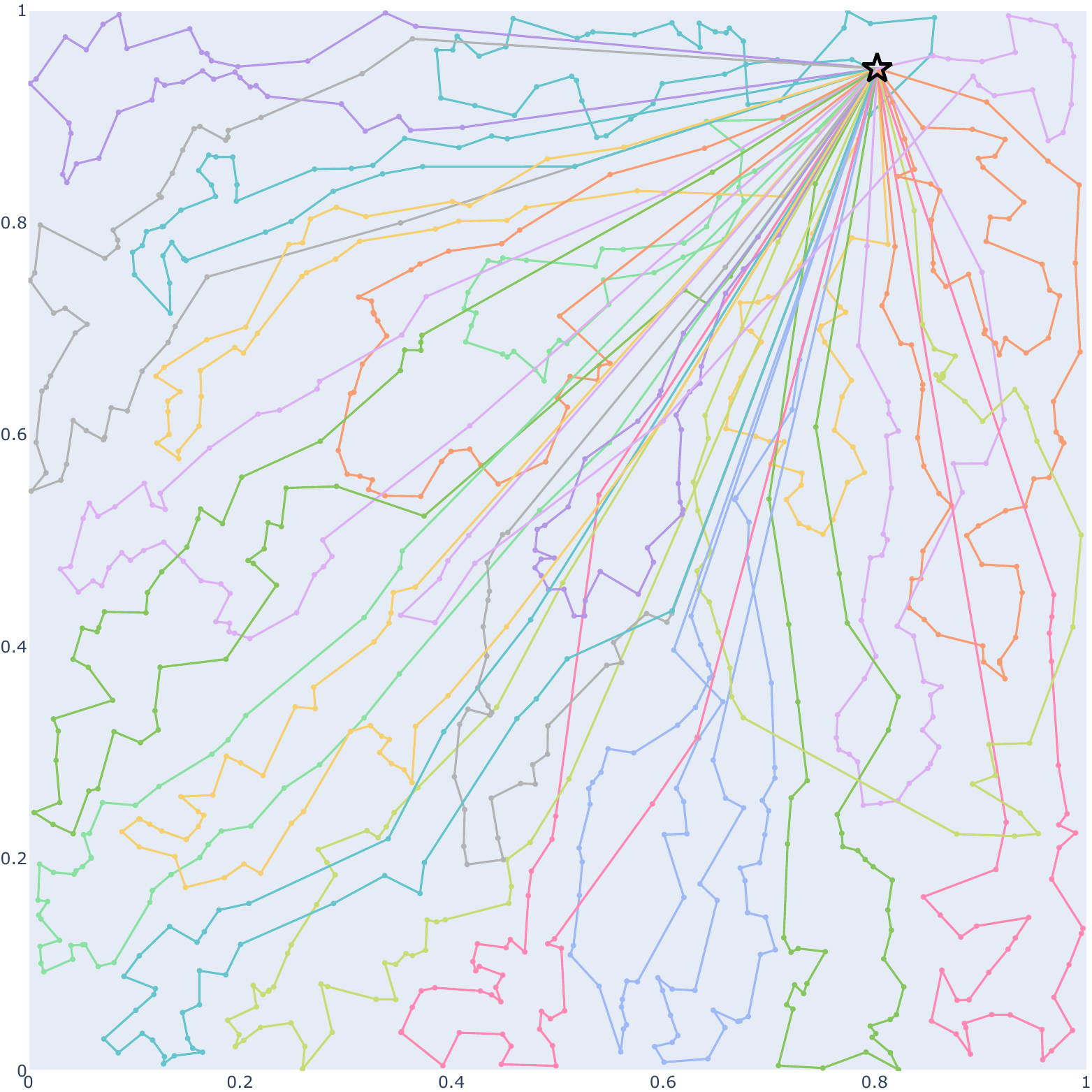}}
\subfigure[\shortstack{Cost=48.70 \\ SS.+glob.+loc.}]{\label{fig:u_sl_2}\includegraphics[width=0.23\textwidth]{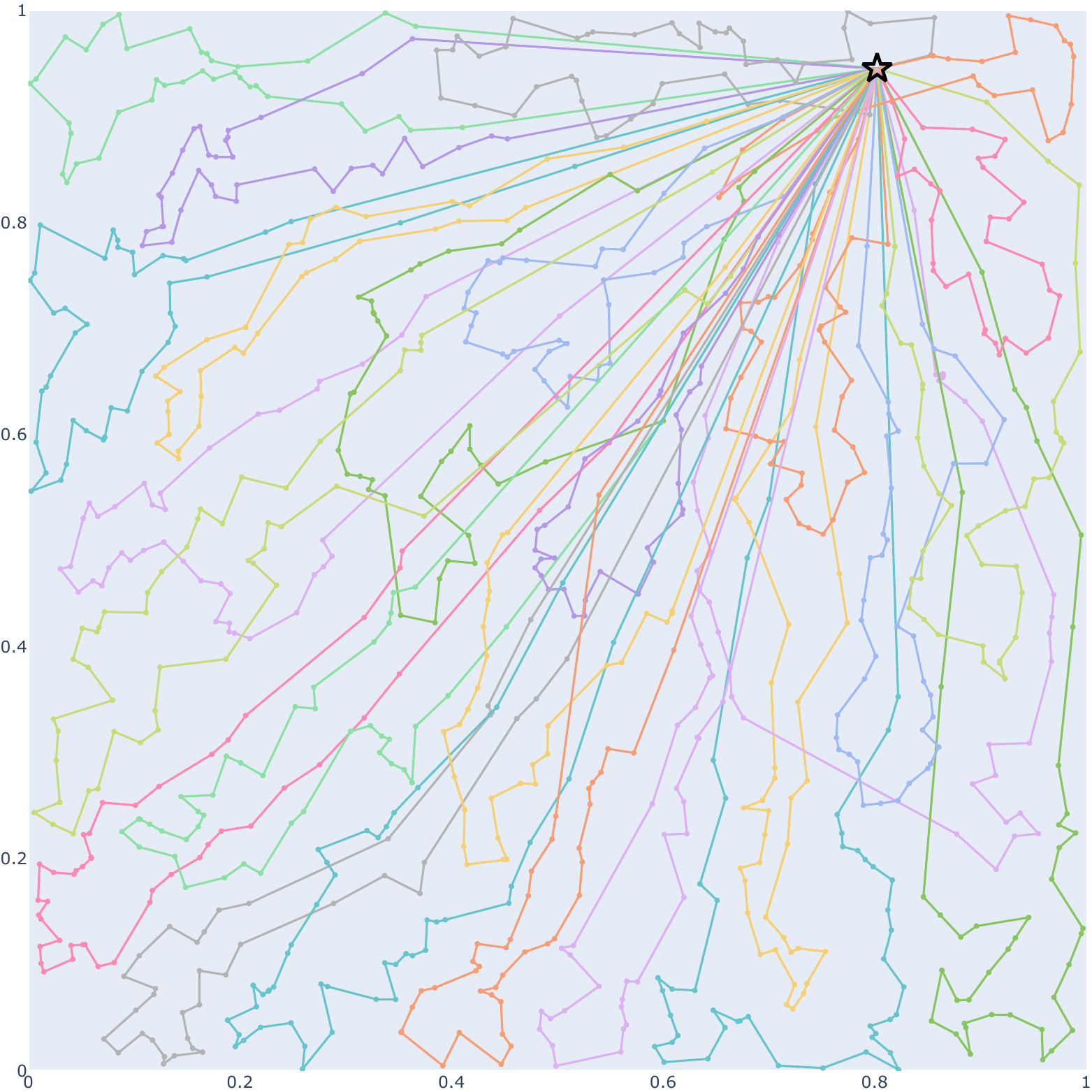}}
\subfigure[\shortstack{Cost=47.99 \\ SS.+LS.+glob.}]
{\label{fig:u_sl_3}\includegraphics[width=0.23\textwidth]{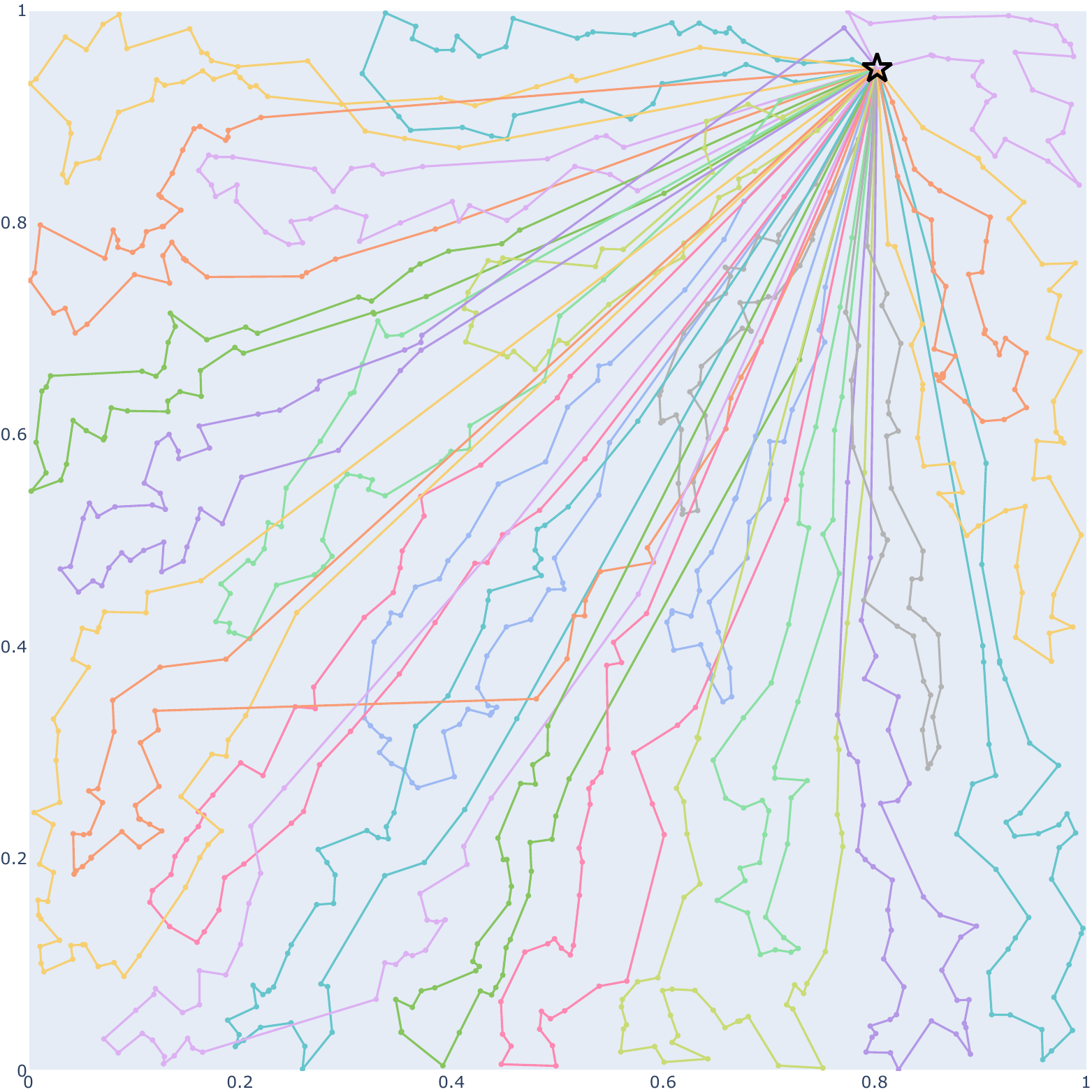}}
\subfigure[\shortstack{Cost=47.79 \\ SS.+LS.+glob.+loc.}]
{\label{fig:u_sl_4}\includegraphics[width=0.23\textwidth]{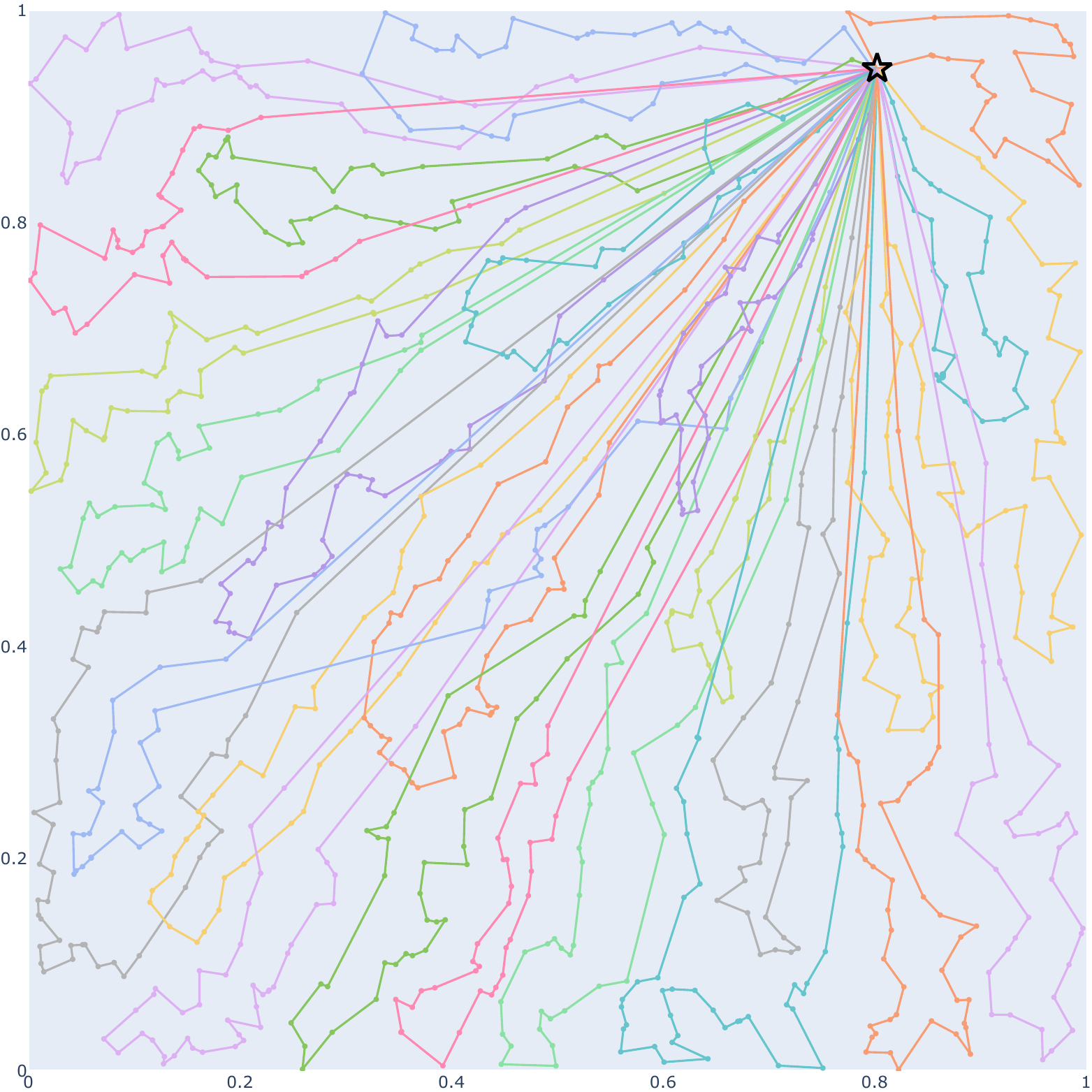}}

\rotatebox{90}{~~~~~~~~~~~~\scriptsize{CVRP1000+Gaussian}}
\subfigure[\shortstack{Cost=33.90 \\ SS.+glob.}]{\label{fig:g_sl_1}\includegraphics[width=0.23\textwidth]{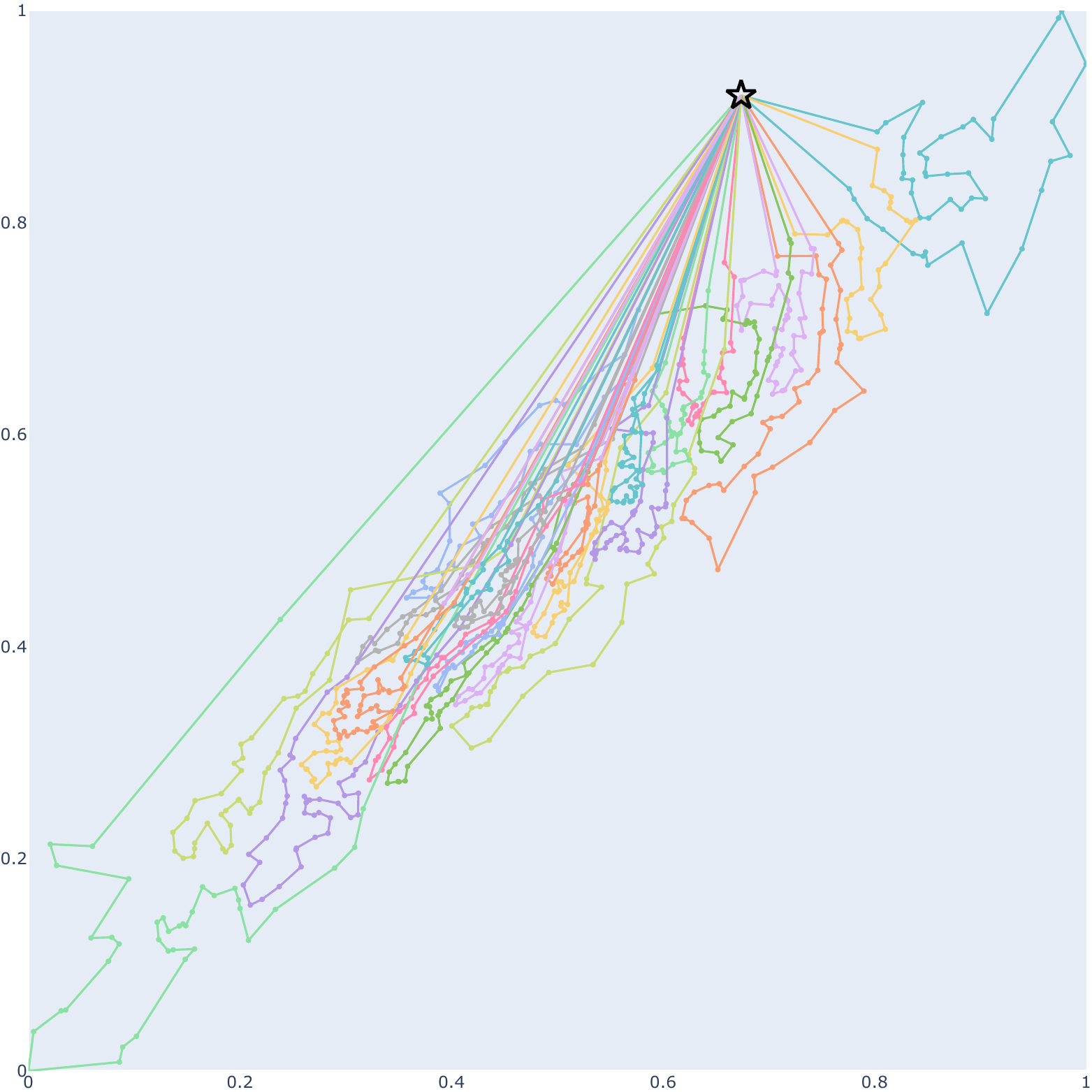}}
\subfigure[\shortstack{Cost=33.09 \\ SS.+glob.+loc.}]{\label{fig:g_sl_2}\includegraphics[width=0.23\textwidth]{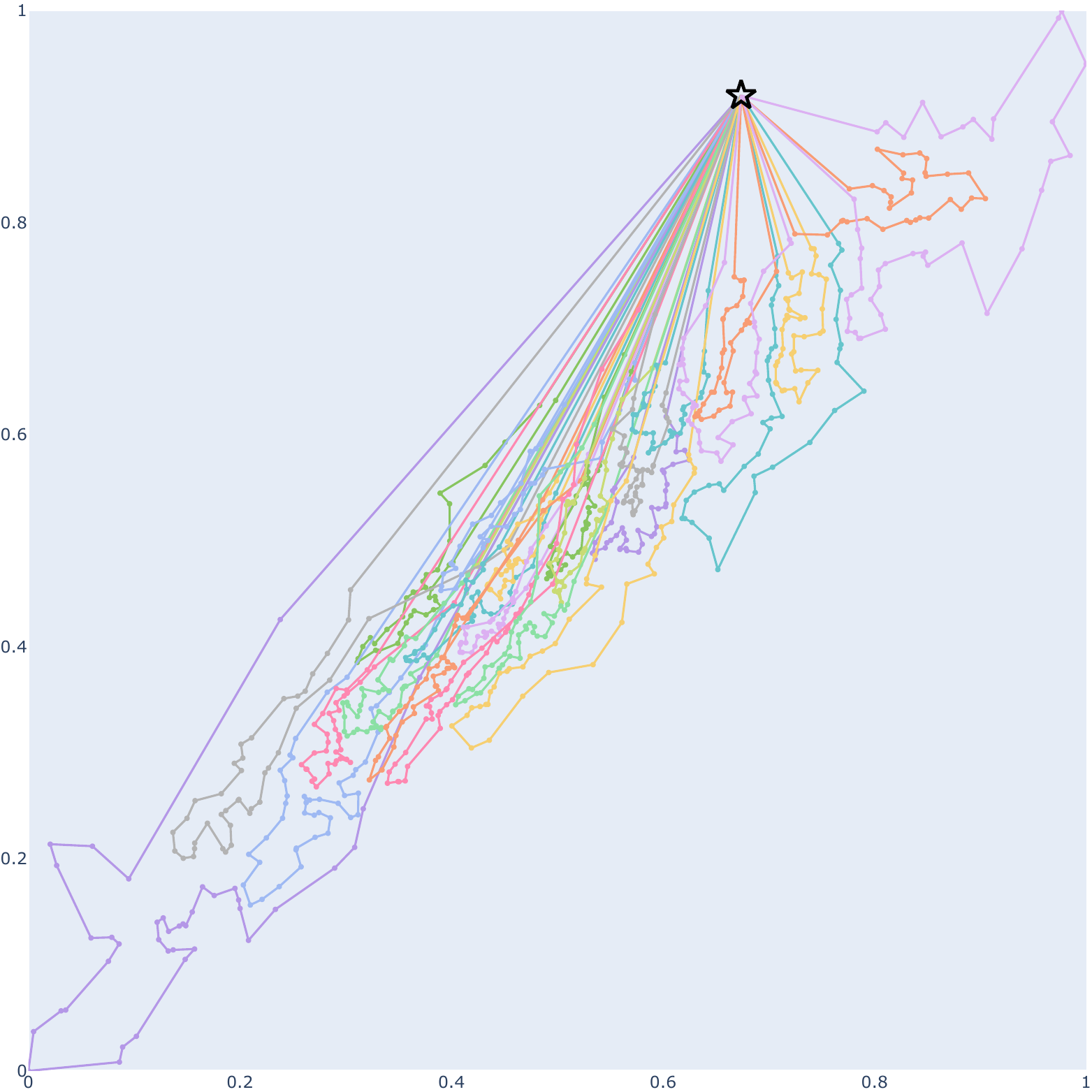}}
\subfigure[\shortstack{Cost=32.25 \\ SS.+LS.+glob.}]
{\label{fig:g_sl_3}\includegraphics[width=0.23\textwidth]{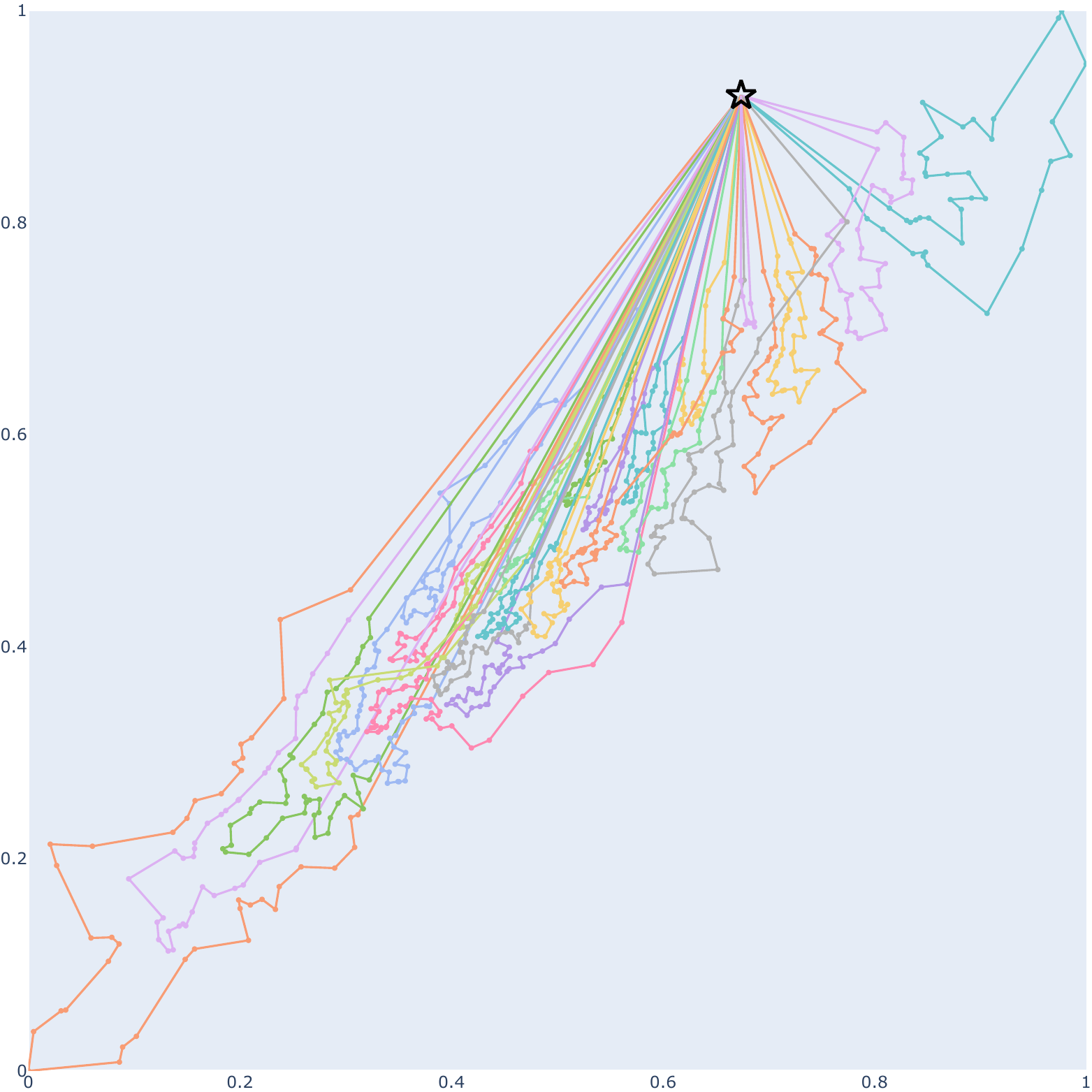}}
\subfigure[\shortstack{Cost=31.85 \\ SS.+LS.+glob.+loc.}]
{\label{fig:g_sl_4}\includegraphics[width=0.23\textwidth]{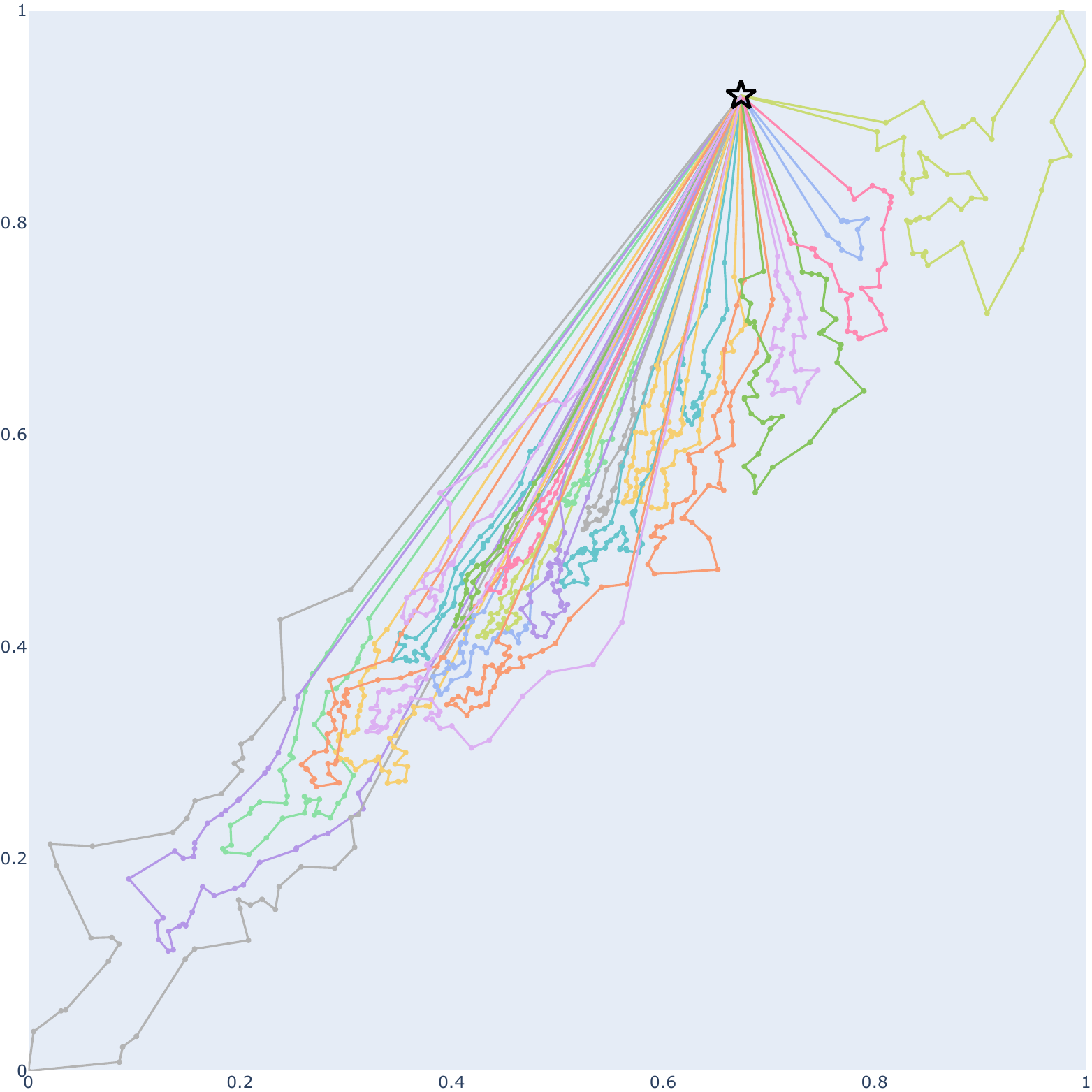}}

\rotatebox{90}{~~~~~~~~~~~~\scriptsize{CVRP1000+Explosion}}
\subfigure[\shortstack{Cost=58.14 \\ SS.+glob.}]{\label{fig:e_sl_1}\includegraphics[width=0.23\textwidth]{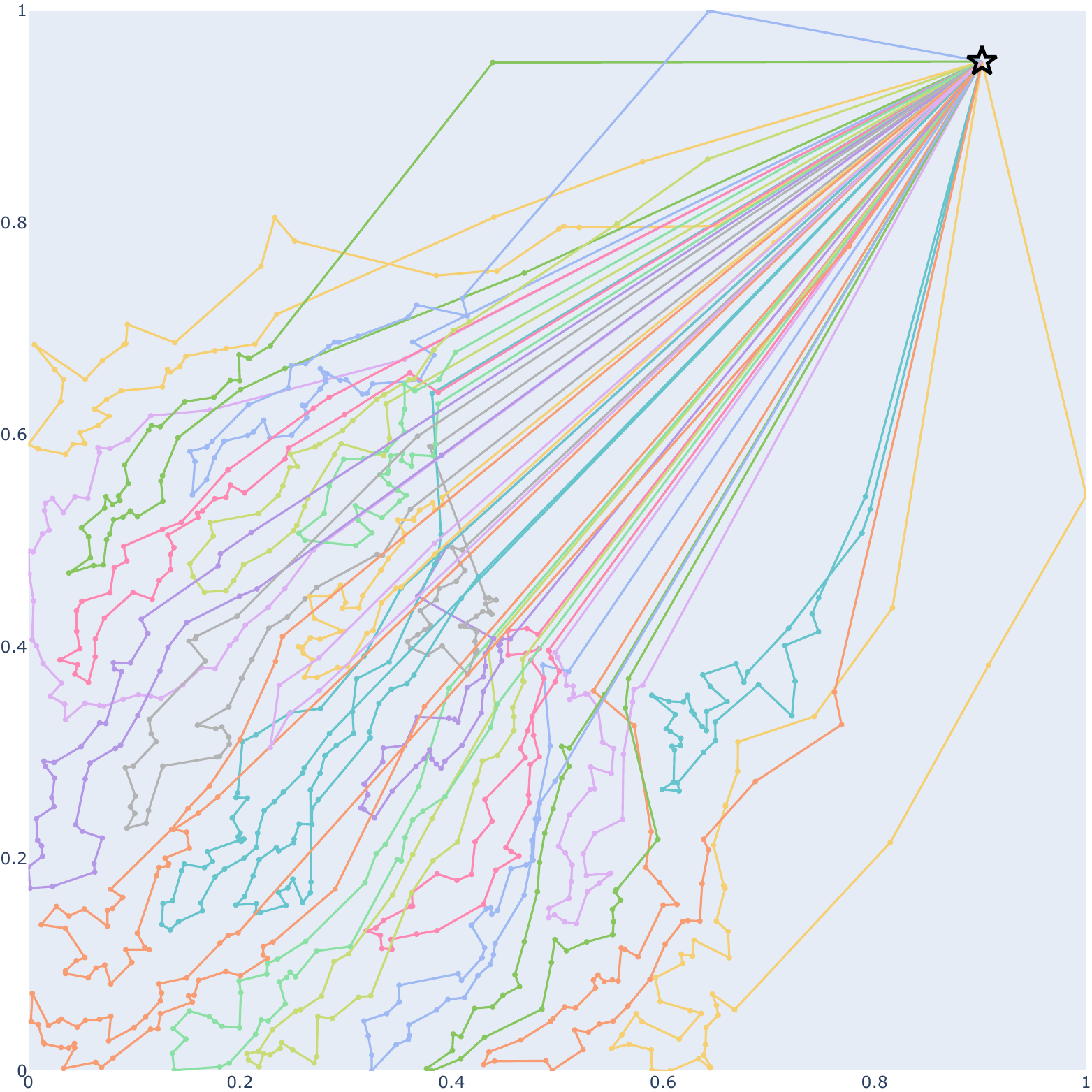}}
\subfigure[\shortstack{Cost=56.97 \\ SS.+glob.+loc.}]{\label{fig:e_sl_2}\includegraphics[width=0.23\textwidth]{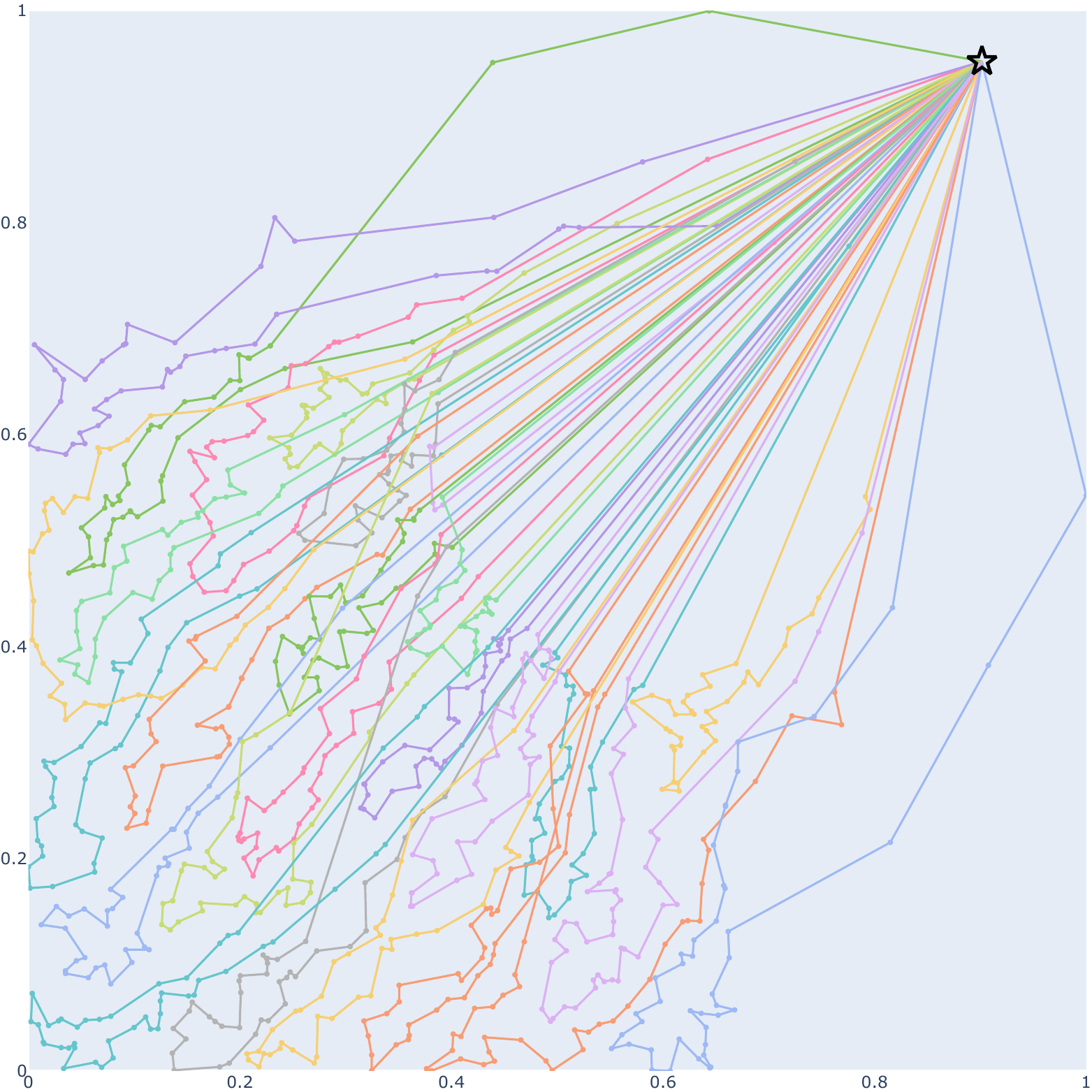}}
\subfigure[\shortstack{Cost=56.20 \\ SS.+LS.+glob.}]
{\label{fig:e_sl_3}\includegraphics[width=0.23\textwidth]{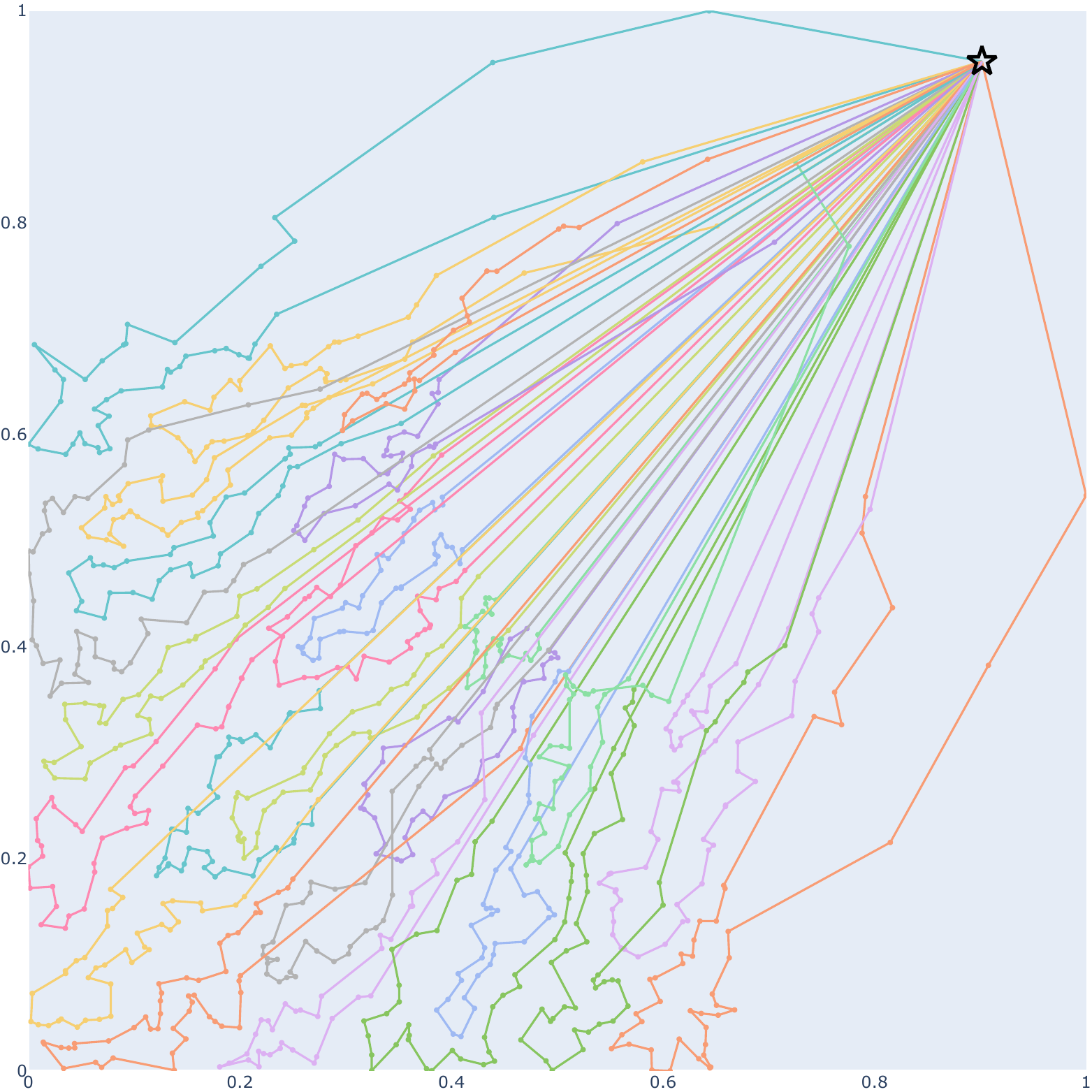}}
\subfigure[\shortstack{Cost=56.01 \\ SS.+LS.+glob.+loc.}]
{\label{fig:e_sl_4}\includegraphics[width=0.23\textwidth]{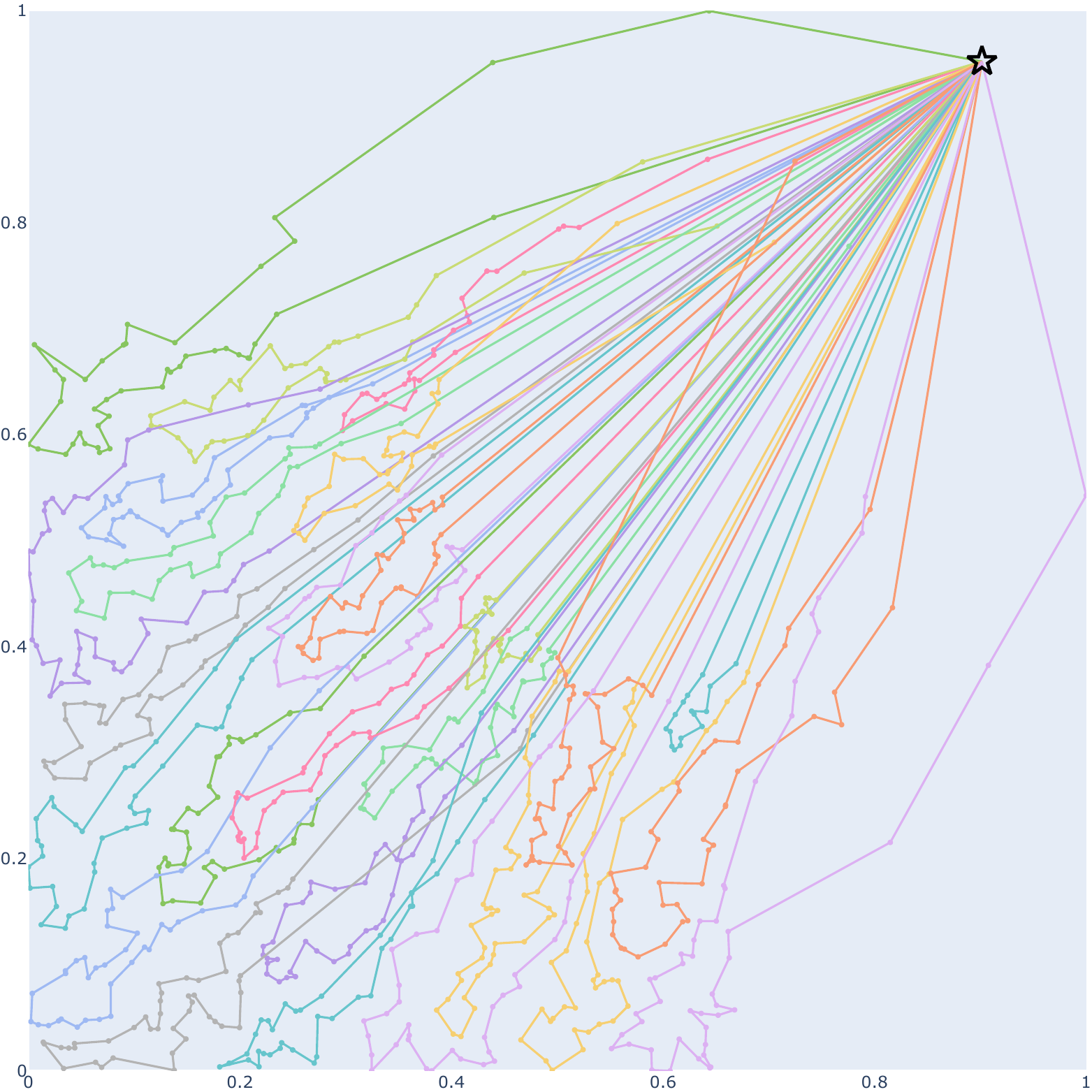}}

\rotatebox{90}{~~~~~~~~~~~~\scriptsize{CVRP1000+Rotation}}
\subfigure[\shortstack{Cost=47.27 \\ SS.+glob.}]{\label{fig:r_sl_1}\includegraphics[width=0.23\textwidth]{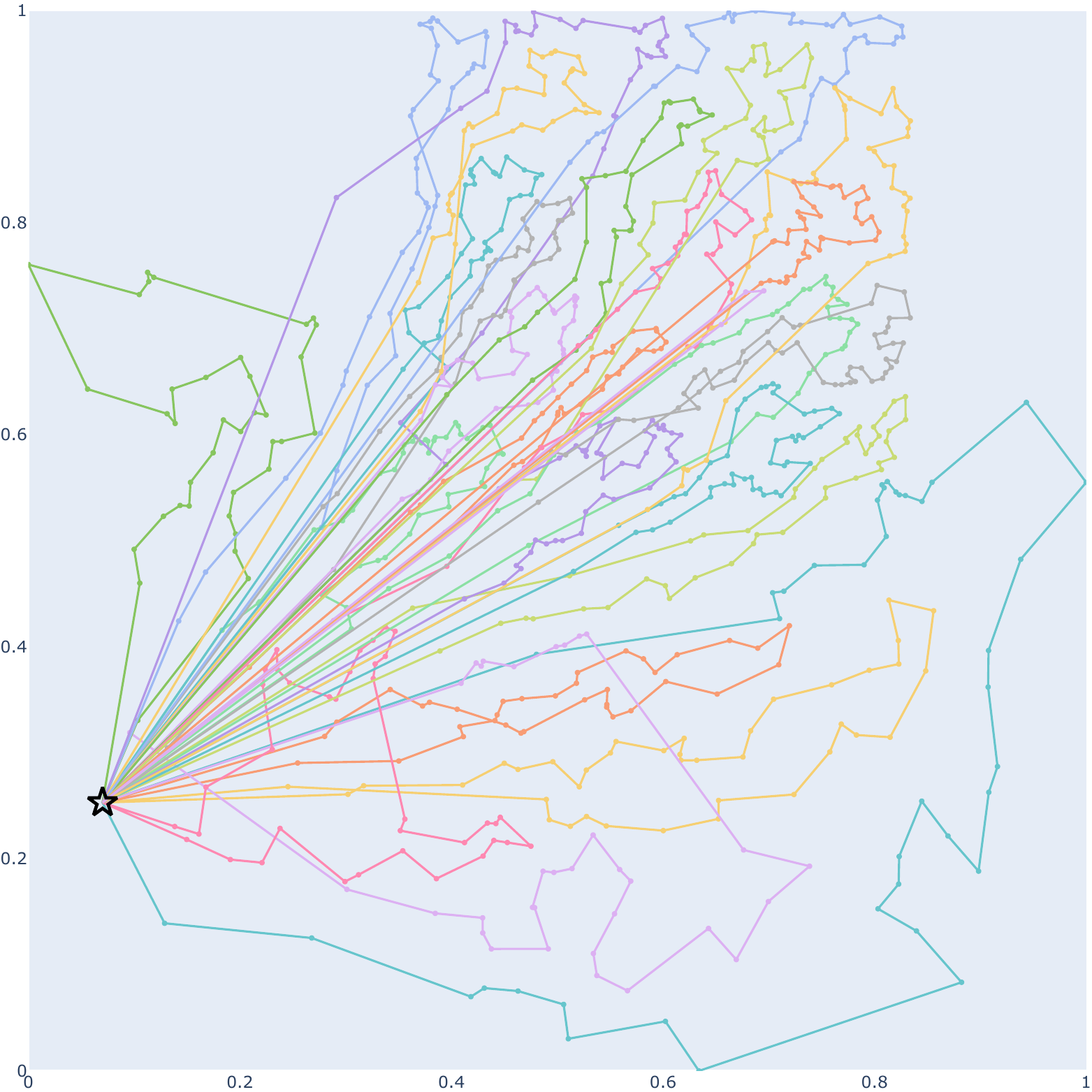}}
\subfigure[\shortstack{Cost=46.56 \\ SS.+glob.+loc.}]{\label{fig:r_sl_2}\includegraphics[width=0.23\textwidth]{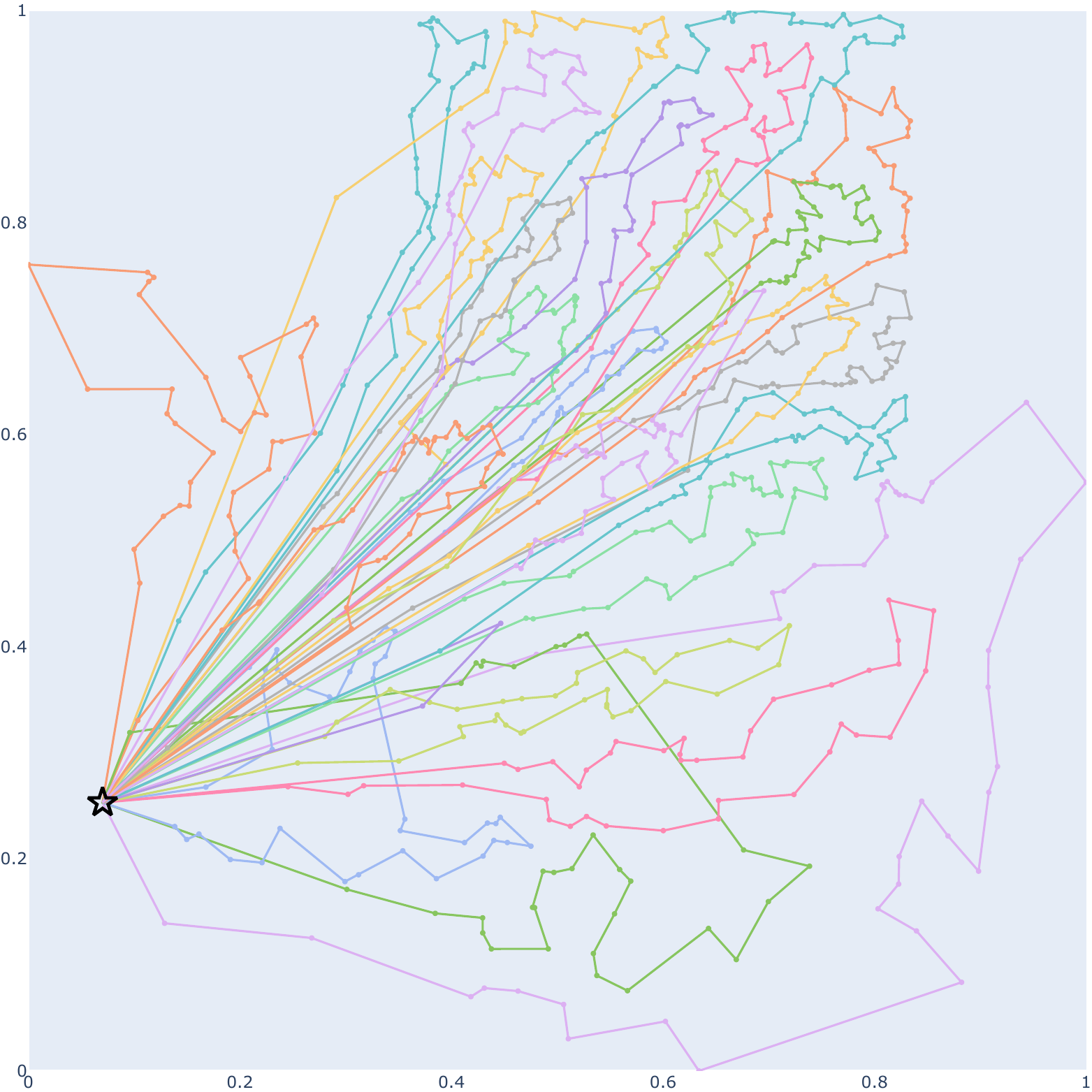}}
\subfigure[\shortstack{Cost=45.49 \\ SS.+LS.+glob.}]
{\label{fig:r_sl_3}\includegraphics[width=0.23\textwidth]{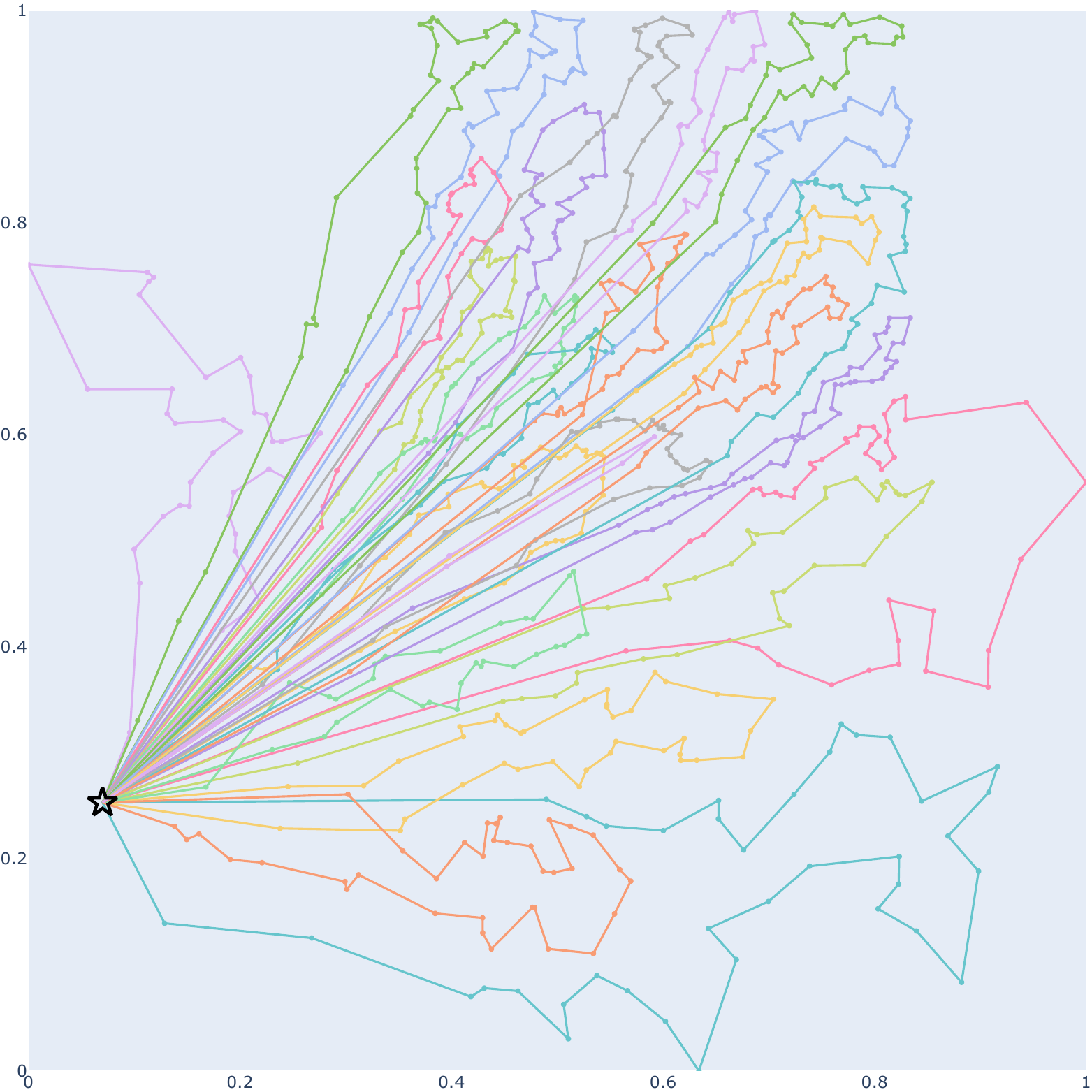}}
\subfigure[\shortstack{Cost=44.86 \\ SS.+LS.+glob.+loc.}]
{\label{fig:r_sl_4}\includegraphics[width=0.23\textwidth]{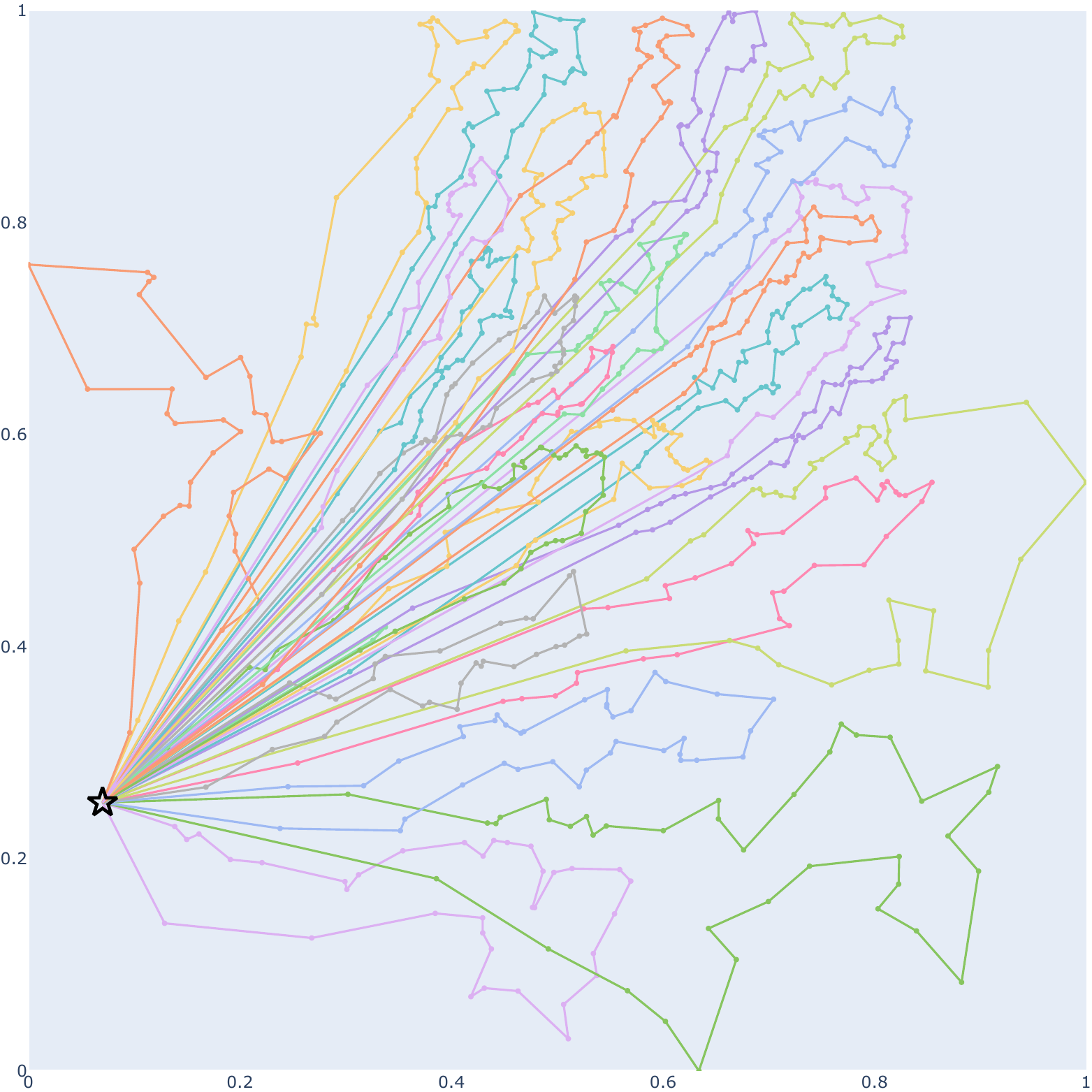}}

\caption{visualization of SL-driven HLGP routes.}
%\vspace{-10pt}}
\label{fig:sl_visual}
\end{figure*}

\end{document}